\definecolor{myred}{rgb}{1,0,0.25}
    \let\Cref\crtCref
    \let\cref\crtcref
\crefname{rem}{Remark}{Remarks}
\crefname{lemma}{Lemma}{Lemmas}
\crefname{cor}{Corollary}{Corollaries}
\crefname{fig}{Figure}{Figures}
\crefname{prop}{Proposition}{Propositions}
\crefname{defn}{Definition}{Definitions}
\crefname{app}{Appendix}{Appendices}
\newtheorem{theorem}{Theorem}
\newtheorem{lemma}[theorem]{Lemma}
\newtheorem{corollary}[theorem]{Corollary}
\newtheorem{proposition}[theorem]{Proposition}
\theoremstyle{definition}
\newtheorem{definition}[theorem]{Definition}
\newtheorem{remark}[theorem]{Remark}
\begin{document}
\bstctlcite{IEEEexample:BSTcontrol}

\title{Near-Minimax Optimal Estimation With \\ Shallow ReLU Neural Networks}

\author{Rahul~Parhi,~\IEEEmembership{Member,~IEEE},
        and~Robert~D.~Nowak,~\IEEEmembership{Fellow,~IEEE}%
        \thanks{This research was partially supported by NSF grant DMS-2134140, ONR MURI grant N00014-20-1-2787, AFOSR/AFRL grant FA9550-18-1-0166, 
        and the NSF  Graduate  Research  Fellowship  Program  under  grant  DGE-1747503.}
        \thanks{Rahul Parhi was with the Department of Electrical and Computer Engineering, University of Wisconsin--Madison,  Madison, WI, USA. He is now with the Biomedical Imaging Group, \'Ecole Polytechnique F\'ed\'erale de Lausanne, Lausanne, Switzerland (e-mail: rahul.parhi@epfl.ch).}
        
        \thanks{Robert D. Nowak is with the Department of Electrical and Computer Engineering, University of Wisconsin--Madison,  Madison, WI, USA (e-mail: rdnowak@wisc.edu).}}

\maketitle

\begin{abstract}
    We study the problem of estimating an unknown function from noisy data using shallow ReLU neural networks. The estimators we study minimize the sum of squared data-fitting errors plus a regularization term proportional to the squared Euclidean norm of the network weights. This minimization corresponds to the common approach of training a neural network with weight decay. We quantify the performance (mean-squared error) of these neural network estimators when the data-generating function belongs to the second-order Radon-domain bounded variation space. This space of functions was recently proposed as the natural function space associated with shallow ReLU neural networks. We derive a minimax lower bound for the estimation problem for this function space and show that the neural network estimators are minimax optimal up to logarithmic factors. This minimax rate is immune to the curse of dimensionality. We quantify an explicit gap between neural networks and linear methods (which include kernel methods) by deriving a linear minimax lower bound for the estimation problem, showing that linear methods necessarily suffer the curse of dimensionality in this function space. As a result, this paper sheds light on the phenomenon that neural networks seem to break the curse of dimensionality.

\end{abstract}

\begin{IEEEkeywords}
    neural networks, ridge functions, sparsity, function approximation, nonparametric function estimation
\end{IEEEkeywords}

\section{Introduction}
The fundamental building blocks of neural networks are \emph{ridge functions}. A ridge function is a multivariate function mapping $\R^d \to \R$ of the form
\[
    \vec{x} \mapsto \rho(\vec{w}^\T\vec{x}), \quad \vec{x} \in \R^d,
\]
where $\rho: \R \to \R$ is referred to as the \emph{profile} of the ridge function and $\vec{w} \in \R^d \setminus \curly{\vec{0}}$ is referred to as the \emph{direction} of the ridge function.

This paper studies the problem estimating functions from noisy samples using shallow neural networks, which are superpositions of ridge functions, of the form
\begin{equation}
    f(\vec{x}) = \sum_{k=1}^K v_k \, \rho(\vec{w}_k^\T\vec{x} - b_k), \quad \vec{x} \in \R^d,
    \label{eq:shallow-nn}
\end{equation}
where the $\rho: \R \to \R$ is the \emph{activation function}, $K$ is the \emph{width} of the neural network, and, for $k = 1, \ldots, K$, $v_k \in \R \setminus \curly{0}$ and $\vec{w}_k \in \R^d \setminus \curly{\vec{0}}$ are  the \emph{weights} of the neural network and $b_k \in \R$ are the \emph{biases} of the neural network. Throughout the paper, we will focus on the rectified linear unit (ReLU) activation function, $\rho(x) = \max\curly{0, x}$, which is widely used in practice~\cite{deep-learning}.

We consider the problem of nonparametric function estimation where the goal is to estimate an unknown function $f: \Omega \to \R$, where $\Omega \subset \R^d$ is a bounded domain, from the noisy samples
\begin{equation}
    y_n = f(\vec{x}_n) + \varepsilon_n, \: n = 1, \ldots, N,
    \label{eq:data}
\end{equation}
where the noise $\curly{\varepsilon_n}_{n=1}^N$ are i.i.d. Gaussian random variables and $\curly{\vec{x}_n}_{n=1}^N \subset \Omega$ are the design points. We study the performance of neural network estimators of the form in \cref{eq:shallow-nn} that minimize the objective of the sum of squared data-fitting errors plus a regularization term proportional to the squared Euclidean norm of the network weights.  This minimization corresponds to the common approach of gradient-based training of a neural network with \emph{weight decay}~\cite{weight-decay}.  That is, training a neural network using gradient descent with weight decay is simply gradient descent applied to this objective.
    
In order to quantify the performance of such estimators, we consider cases in which $f$ is an unknown function within a known function space. To this end, we will consider functions mapping $\Omega \to \R$ which belong to the Banach space of functions of second-order bounded variation in the Radon domain, denoted $\RBV^2(\Omega)$. Our recent work in~\cite{ridge-splines,deep-ridge-splines} proposed this Banach space as the ``natural" function space associated with shallow ReLU networks. This space contains several classical multivariate function spaces including certain Sobolev spaces as well as certain \emph{spectral Barron spaces}, pioneered in the seminal work of Barron on approximation and estimation using shallow sigmoidal networks~\cite{uat4}.  

It was first observed in~\cite{uat4} that neural network estimators can be \emph{immune to the curse of dimensionality}. This paper sheds light on this phenomenon.  $\RBV^2(\Omega)$ contains classical multivariate function spaces including the $L^1$- and $L^2$-Sobolev spaces of order $d + 1$, where $d$ is the ambient dimension of the domain $\Omega \subset \R^d$. It is classically known that this sort of Sobolev-regularity is sufficient to overcome the curse of dimensionality. On the other hand, $\RBV^2(\Omega)$ also contains functions that are much less regular. In particular, functions with significant variation and irregularity, but only in a few directions, also belong to $\RBV^2(\Omega)$. For example, any ridge function with a profile that has just its first two weak derivatives in $L^2(\Omega)$ is included in $\RBV^2(\Omega)$.  This shows that $\RBV^2(\Omega)$ may be regarded as a \emph{mixed variation} space~\cite{donoho2000high}, since it contains functions that are more regular in some directions and less in others.  This makes $\RBV^2(\Omega)$ a compelling framework for high-dimensional estimation.  Moreover, the neural network estimators we study are \emph{locally adaptive} to such mixed variation.

Our past work~\cite{ridge-splines,deep-ridge-splines} derives a \emph{neural network representer theorem} which proves that shallow ReLU networks are solutions to data-fitting problems in $\RBV^2(\R^d)$, the space of functions defined on $\R^d$ of second-order bounded variation in the Radon domain. Remarkably, this variational problem can be recast as a finite-dimensional neural network training problem where the regularization corresponds to training a shallow ReLU network with weight decay. This is the reason we view these spaces as the natural function space of shallow ReLU networks. This connection is reminiscent of the classical reproducing kernel Hilbert space (RKHS) representer theorem which says that kernel machines are solutions to data-fitting variational problems over the associated RKHS, although the neural network variational problem is posed over a (non-Hilbertian) Banach space.

We summarize the contributions of this paper below.
\begin{enumerate}
    \item We first discuss how to define $\RBV^2(\Omega)$, where $\Omega \subset \R^d$ is a \emph{bounded domain}, while preserving a representer theorem for shallow ReLU networks.  This implies that data-fitting with functions in $\RBV^2(\Omega)$ can be  recast as a finite-dimensional neural network training problem that may be solved using gradient-descent with weight decay. This result sets the stage for discussing approximation and estimation error for functions in $\RBV^2(\Omega)$.
    
    \item We relate $\RBV^2(\Omega)$ spaces to previously studied function spaces related to shallow neural networks. In particular, we show that $\RBV^2(\Omega)$ is exactly the same (in the sense of equivalent Banach spaces) as the so-called \emph{variation space} associated to shallow ReLU networks that has been studied by a number of authors~\cite{kurkova2001bounds,mhaskar2004tractability,convex-nn,siegel2021characterization}. This provides a novel analytic characterization of this space. Using this characterization, we can apply previously derived optimal approximation rates for functions from the variation space~\cite{convex-nn,siegel} to characterize the optimal approximation rates for functions in $\RBV^2(\Omega)$. The approximation rate (with respect to the  $L^\infty(\Omega)$-norm) is $K^{-\frac{d + 3}{2d}}$, where $K$ is the number of neurons in the approximant. Remarkably, this rate is \emph{immune to the curse of dimensionality}, as it tends to $K^{-1/2}$ as $d \to \infty$. We also show that $\RBV^2(\Omega)$ is \emph{larger} than the second-order spectral Barron space. \label{item:contributions-approx}

    \item We show that a shallow ReLU network that minimizes the sum of squared data-fitting errors plus a regularization term proportional to the sum of squared weights (i.e., training a shallow ReLU network with weight decay to a global minimizer) is a minimax optimal (up to logarithmic factors) estimator when the data are generated according to \cref{eq:data}, where $f \in \RBV^2(\Omega)$. The minimax rate of the mean-squared error is, up to logarithmic factors, $N^{-\frac{d+3}{2d+3}}$. Remarkably, this rate is \emph{immune to the curse of dimensionality}, as it tends to $N^{-1/2}$ as $d \to \infty$.
    
    \item Using the results of this paper, we show that there is a fundamental gap between neural networks and more classical linear methods (which include kernel methods). In particular, we use ridgelet analysis to derive a minimax lower bound for the estimation problem when restricted to linear estimators. We find that the linear minimax lower bound is $N^{-\frac{3}{d + 3}}$, which suffers the curse of dimensionality as $d \to \infty$. This result says that linear methods are suboptimal for estimating functions in $\RBV^2(\Omega)$. We also show this gap qualitatively via numerical experiments.
\end{enumerate}

\subsection{Related Work}
There is a large body of work regarding the problem of statistical estimation with ridge functions, under many different names, including projection pursuit regression~\cite{friedman1981projection}, ridgelet shrinkage~\cite{candes2003ridgelets}, and, of course, estimation with neural networks~\cite{uat4}. The last few years have led to a number of related works that consider the problem of minimax estimation with neural networks~\cite{klusowski2017minimax,imaizumi2019deep,suzuki2018adaptivity,schmidt2020nonparametric,hayakawa2020minimax}. These works fall into two categories:
\begin{enumerate*}[label=\arabic*)]
    \item they consider the problem of estimating a function that is \emph{explicitly synthesized} from a dictionary of neurons;
    
    \item they consider the problem of estimating a function from a particular (classical) space of functions (e.g., H\"older, Sobolev, Besov, etc.)
\end{enumerate*}.
Moreover, the procedures for actually constructing the estimators in these works usually involve greedy algorithms and do not correspond to how neural networks are actually trained in practice. The work of this paper is different from these past works in that we consider the problem of estimating functions from a new, not classical, function space, $\RBV^2(\Omega)$, and study the performance of estimators that correspond to solutions to problem of training shallow ReLU networks with weight decay, a common regularization scheme used when training neural networks in practice.

\subsection{Roadmap}
In \cref{sec:prelim} we introduce notation used in the remainder of the paper. In \cref{sec:rep-thm-Rd} we introduce relevant results from our previous work~\cite{ridge-splines,deep-ridge-splines}. In \cref{sec:bounded-domain} we discuss how to define $\RBV^2(\Omega)$ where $\Omega \subset \R^d$ is a bounded domain and derive a new representer theorem for shallow ReLU networks by considering variational problems over $\RBV^2(\Omega)$.  In \cref{sec:other-spaces} we relate $\RBV^2(\Omega)$ to previously studied function spaces associated to shallow networks. In \cref{sec:approximation} we derive optimal approximation rates for functions in $\RBV^2(\Omega)$, where the approximants are shallow ReLU networks. In \cref{sec:estimation} we show that shallow ReLU network estimators are minimax optimal (up to logarithmic factors) for estimating functions in $\RBV^2(\Omega)$. In \cref{sec:nn-not-kernel} we show that there is a fundamental gap between neural networks and linear methods (including kernel methods).

\section{Preliminaries \& Notation} \label{sec:prelim}
Let $L^p(\Omega)$ denote the usual Lebesgue space, where $\Omega$ is a domain (either bounded or unbounded). This space is a Banach space when equipped with the norm
\begin{align*}
    \norm{f}_{L^p(\Omega)} &\coloneqq \paren{\int_\Omega \abs{f(\vec{x})}^p \dd \vec{x}}^{1/p}, \quad 1 \leq p < \infty, \\
    \norm{f}_{L^\infty(\Omega)} &\coloneqq \esssup_{\vec{x} \in \Omega} \, \abs{f(\vec{x})}, \quad p = \infty.
\end{align*}
When we do not specify the underlying measure, it will correspond to the Haar measure of $\Omega$ (e.g., Lebesgue measure when $\Omega = \R^d$ or the surface measure when $\Omega = \Sph^{d-1}$, the surface of the Euclidean sphere in $\R^d$). When we do specify a particular measure, say $\mu$, we will write $L^p(\Omega; \mu)$.

We will also work with the Banach space of finite Radon measures on $\Omega$, denoted $\M(\Omega)$. The norm $\norm{\dummy}_{\M(\Omega)}$ is exactly the \emph{total variation norm} (in the sense of measures). We can view this space as a subspace of distributions (generalized functions) on $\Omega$. The space $\M(\Omega)$ may be regarded as a ``generalization'' of $L^1(\Omega)$ in the sense that if $f \in L^1(\Omega)$, $\norm{f}_{L^1(\Omega)} = \norm{f}_{\M(\Omega)}$, but $\M(\Omega)$ is a strictly larger space that also contains the shifted Dirac impulses $\delta(\dummy - \vec{x}_0)$, $\vec{x}_0 \in \Omega$, such that $\norm{\delta(\dummy - \vec{x}_0)}_{\M(\Omega)} = 1$. We also remark that the $\M$-norm is the continuous-domain analogue of the $\ell^1$-norm. We refer the reader to~\cite[Chapter~7]{folland} for more details about this space.

We will also use the notation $a_N \lesssim b_N$ to mean there exists a constant $C$ (independent of $N$) such that $a_N \leq C\, b_N$, $a_N \gtrsim b_N$ to mean $b_N \lesssim a_N$, and $a_N \asymp b_N$ to mean $a_N \lesssim b_N$ and $a_N \gtrsim b_N$. We will also subscript $\lesssim$, $\gtrsim$, and $\asymp$ with any parameters that the implicit constant depends on.
\section{Shallow Neural Networks, Splines, and Variational Methods} \label{sec:rep-thm-Rd}
In this section we will discuss relevant results from our prior work in~\cite{ridge-splines,deep-ridge-splines}, making connections between shallow neural networks, splines, and variational methods. Our work in~\cite{ridge-splines} proved a \emph{representer theorem} for single-hidden layer ReLU networks with scalar outputs by considering variational problems over the space of functions of second-order bounded variation in the Radon domain. The Radon transform of a function $f: \R^d \to \R$ is given by
\[
    \Radon{f}(\vec{\gamma}, t) \coloneqq \int_{\curly{\vec{x}: \vec{\gamma}^\T
  \vec{x} = t}} f(\vec{x}) \dd s(\vec{x}), \quad (\vec{\gamma},t) \in \cyl,
\]
where $s$ denotes the $(d-1)$-dimensional Lebesgue measure on the hyperplane $\curly{\vec{x}
\st \vec{\gamma}^\T \vec{x} = t}$. The Radon domain is parameterized by a \emph{direction} $\vec{\gamma} \in \Sph^{d-1}$ and an \emph{offset} $t \in \R$. When working with the Radon transform of functions defined on $\R^d$, the following \emph{ramp filter} arises in the Radon inversion formula
\[
    \Lambda^{d-1} = (-\partial_t^2)^{\frac{d-1}{2}},
\]
where $\partial_t$ denotes the partial derivative with respect to the offset variable, $t$, of the Radon domain and fractional powers are defined in terms of Riesz potentials. The space of functions of second-order bounded variation in the Radon domain is then given by
\begin{equation}
    \RBV^2(\R^d) = \curly{f \in L^{\infty, 1}(\R^d) \st \RTV^2(f) < \infty},
    \label{eq:RBV}
\end{equation}
where $L^{\infty, 1}(\R^d)$ is the Banach space\footnote{It is a Banach space when equipped with the norm $\norm{f}_{\infty, 1} \coloneqq \esssup_{\vec{x} \in \R^d} \abs{f(\vec{x})}(1 + \norm{\vec{x}}_2)^{-1}$.} of functions mapping $\R^d \to \R$ of at most linear growth and
\begin{equation}
    \RTV^2(f) = c_d \norm*{\partial_t^2 \ramp^{d-1} \RadonOp f}_{\M(\cyl)}
    \label{eq:RTV}
\end{equation}
denotes the second-order total variation of a function in the offset variable of the (filtered) Radon domain, where $c_d^{-1} = 2(2\pi)^{d-1}$ is a dimension-dependant constant that arises when working with the Radon transform. Note that all the operators that appear in \cref{eq:RTV} must be understood in the distributional sense. We refer the reader to~\cite[Section~3]{ridge-splines} for more details.

The $\RTV^2$-seminorm was first proposed in~\cite{function-space-relu} and studied in extensive detail in~\cite{ridge-splines,deep-ridge-splines}. When equipped with the norm
\[
    \norm{f}_{\RBV^2(\R^d)} \coloneqq \RTV^2(f) + \abs{f(\vec{0})} + \sum_{k=1}^d \abs{f(\vec{e}_k) - f(\vec{0})},
\]
where $\curly{\vec{e}_k}_{k=1}^d$ denotes the canonical basis of $\R^d$, $\RBV^2(\R^d)$ is a Banach space~\cite[Lemma~2.4]{deep-ridge-splines}. In particular, it is a Banach space with a sparsity-promoting norm as $\RTV^2(\dummy)$ is defined via an $\M$-norm.  The terms $\abs{f(\vec{0})} + \sum_{k=1}^d \abs{f(\vec{e}_k) - f(\vec{0})}$ that appear in the above display impose a norm on the null space of $\RTV^2(\dummy)$, which corresponds to affine functions on $\R^d$, and is  an upper bound on the Lipschitz constant of the affine portion of $f$.

Intuitively, the $\RTV^2$-seminorm measures sparsity of second derivatives in the Radon domain. The Radon transform naturally arises when working with ridge functions. In particular, the second derivative of the (filtered) Radon transform of a ReLU ridge function is essentially a Dirac impulse located at the weight and bias of the ReLU ridge function~\cite[Lemma~17]{ridge-splines}. This arises due to the fact that in the univariate case, the second derivative of the ReLU is a Dirac impulse. Thus, the seminorm in \cref{eq:RTV} favors ReLU ridge functions and so functions in $\RBV^2(\R^d)$ with small $\RTV^2$-seminorm will typically take the form of a sparse superposition of ReLU ridge functions. We now state the main result of~\cite{ridge-splines}.

\begin{proposition}[{special case of~\cite[Theorem~1]{ridge-splines}}] \label[prop]{prop:rep-thm}
    Let $\ell(\dummy, \dummy): \R \times \R \to \R$ be a strictly convex, coercive, and lower-semicontinuous in its second argument loss function and let $\lambda > 0$ be an adjustable regularization parameter. Then, for any data $\curly{(\vec{x}_n, y_n)}_{n=1}^N \subset \R^d \times \R$, there exists a solution to the variational problem
    \begin{equation}
        \min_{f \in \RBV^2(\R^d)} \: \sum_{n=1}^N \ell(y_n, f(\vec{x}_n)) + \lambda \, \RTV^2(f)
        \label{eq:variational-problem}
    \end{equation}
    that takes the form of a shallow ReLU network plus an affine function. In particular, it takes the form
    \begin{equation}
        s(\vec{x}) = \sum_{k=1}^K v_k \, \rho(\vec{w}_k^\T \vec{x} - b_k) + \vec{c}^\T\vec{x} + c_0, \quad \vec{x} \in \R^d,
        \label{eq:ridge-spline}
    \end{equation}
    where $K \leq N - (d + 1)$, $\rho$ is the ReLU, $\vec{w}_k \in \Sph^{d-1}$, $v_k \in \R \setminus \curly{0}$, $b_k \in \R$, $\vec{c} \in \R^d$, and $c_0 \in \R$.
\end{proposition}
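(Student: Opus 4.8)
The plan is to reduce the infinite-dimensional variational problem to a finitely-constrained $\M$-norm minimization and then invoke an abstract representer theorem for total-variation regularization. The two structural facts driving everything are that the regularizer $\RTV^2(f) = c_d\norm{\partial_t^2\ramp^{d-1}\RadonOp f}_{\M(\cyl)}$ is a seminorm built from an $\M$-norm, and that its null space $\mathcal{N} = \curly{f : \RTV^2(f) = 0}$ is precisely the $(d+1)$-dimensional space of affine functions on $\R^d$. Writing $L \coloneqq c_d\,\partial_t^2\ramp^{d-1}\RadonOp$, the native space $\RBV^2(\R^d)$ splits as the direct sum of $\mathcal{N}$ and a complement on which $L$ is injective; the norm on $\RBV^2(\R^d)$ given in the excerpt is engineered precisely to make this splitting stable, via the biorthogonal functionals $f\mapsto f(\vec{0})$ and $f\mapsto f(\vec{e}_k)-f(\vec{0})$.

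First I would establish existence by the direct method. Since $\M(\cyl)$ is the dual of the separable space $C_0(\cyl)$, bounded sublevel sets of $\norm{L\dummy}_{\M(\cyl)}$ are weak-$*$ sequentially compact by Banach--Alaoglu; the coercive, lower-semicontinuous loss evaluated at the $N$ samples controls the affine component (here one uses $N \geq d+1$ and that the design points do not all lie on a hyperplane), so the full objective is coercive and weak-$*$ lower-semicontinuous, yielding a minimizer. Strict convexity of $\ell$ in its second argument then forces every minimizer to share the same vector of fitted values $\vec{z}^* = (f(\vec{x}_1),\dots,f(\vec{x}_N))$. Consequently the solution set coincides with the solution set of the constrained problem
\begin{equation}
    \min_{f\in\RBV^2(\R^d)} \RTV^2(f) \quad\text{subject to}\quad f(\vec{x}_n) = z_n^*,\ n = 1,\dots,N,
    \label{eq:constrained}
\end{equation}
an $\M$-norm minimization with $N$ linear equality constraints.

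To \cref{eq:constrained} I would apply an abstract sparse representer theorem of Fisher--Jerome type (in the operator-regularized form developed by Unser and collaborators and by Boyer et al.). The feasible set is convex and weak-$*$ compact, so by Krein--Milman it has extreme points, and a Carath\'eodory/linear-algebra argument shows that at each extreme point $Lf$ is a combination of at most $N - \dim\mathcal{N} = N - (d+1)$ extreme points of the $\M(\cyl)$ unit ball, namely Dirac measures. It then remains to identify an atom: if $Lf$ is a single Dirac at $(\vec{w}_k, b_k) \in \cyl$, then $f$ equals a ReLU ridge function $\rho(\vec{w}_k^\T\vec{x}-b_k)$ up to an affine term. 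This is the converse of the computation in \cite[Lemma~17]{ridge-splines}, where $\partial_t^2\ramp^{d-1}\RadonOp$ applied to $\rho(\vec{w}_k^\T\vec{x}-b_k)$ is shown to be a Dirac impulse located at the weight and bias. Assembling the atomic extreme point yields a solution of the stated form \cref{eq:ridge-spline} with $K \leq N-(d+1)$.

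The main obstacle is the operator-theoretic groundwork that licenses the abstract theorem. One must verify that $L$ is a bounded surjection of $\RBV^2(\R^d)$ onto the appropriate subspace of $\M(\cyl)$ admitting a continuous right inverse and a stably complemented, finite-dimensional null space. This is delicate because the Radon transform satisfies range (parity and consistency) conditions: not every measure on the cylinder lies in the image of $\RadonOp$, so the admissible atoms are the even Dirac measures $\tfrac{1}{2}\paren{\delta_{(\vec{\gamma},t)} + \delta_{(-\vec{\gamma},-t)}}$ rather than arbitrary point masses, and the Dirac--ridge correspondence above must be carried out within this constrained range. Handling the distributional definition of $\partial_t^2\ramp^{d-1}\RadonOp$ and proving the continuity estimates that make the splitting stable is where the bulk of the technical effort lies; once that framework is in place, the existence and sparsification steps are comparatively routine.
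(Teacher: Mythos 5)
Your proposal is correct and follows essentially the same route as the proof of the cited result \cite[Theorem~1]{ridge-splines} that this proposition specializes: existence via the direct method and weak-$*$ compactness, reduction (using strict convexity of the loss) to an $\M$-norm minimization under $N$ point-evaluation constraints, an abstract Fisher--Jerome/Unser-type representer theorem giving extreme points that are sums of at most $N - (d+1)$ Diracs, and the identification of each (even) Dirac atom with a ReLU ridge function as in \cite[Lemma~17]{ridge-splines}. You also correctly flag the two genuinely delicate points---the stable right inverse with complemented finite-dimensional null space, and the parity/range conditions of the Radon transform forcing even atoms---which is exactly where the technical work in the cited proof lies.
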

We remark that the affine function that appears in \cref{eq:ridge-spline} is known as a \emph{skip connection} in neural network parlance~\cite{skip-connections}. In other words, \cref{eq:ridge-spline} is a shallow ReLU network with a skip connection.

\subsection{Shallow Neural Networks and Splines} \label{sec:shallow-nn-splines}

When $d = 1$, the space $\RBV^2(\R^d)$ is the classical second-order bounded variation space
\[
    \BV^2(\R) \coloneqq \curly{f: \R \to \R \st \TV^2(f) < \infty},
\]
where
\[
    \TV^2(f) \coloneqq \norm*{\D^2 f}_{\M(\R)}
\]
is the second-order total variation of a function $f: \R \to \R$, where $\D$ is the (distributional) derivative operator~\cite[Section~5.1]{ridge-splines}. In this case, the result of \cref{prop:rep-thm} recovers the classical representer theorem for locally adaptive linear splines, which dates back to the 1970s~\cite{fisher-jerome,locally-adaptive-regression-splines,L-splines}. Moreover, we also have that $\RTV^2(f) = \TV^2(f)$~\cite[Section~5.1]{ridge-splines}.

\subsection{Connections to Neural Network Training} \label{sec:nn-training}
We view $\RBV^2(\R^d)$ as the natural function space associated with shallow ReLU networks since the problem in \cref{eq:variational-problem} can be recast as a finite-dimensional neural network training problem that corresponds to training a sufficiently wide shallow ReLU network (with a skip connection) with weight decay or with the so-called ``path-norm'' regularizer. In particular, consider the shallow ReLU network with a skip connection:
\[
    f_\vec{\theta}(\vec{x}) = \sum_{k=1}^K v_k \, \rho(\vec{w}_k^\T \vec{x} - b_k) + \vec{c}^\T\vec{x} + c_0,
\]
where $\vec{\theta}$ denotes the parameters of the neural network, i.e., $\{v_k,\vec{w}_k,b_k\}_{k=1}^K$, $\vec{c}$ and $c_0$. Then, it was shown in~\cite[Theorem~8]{ridge-splines} that, the solutions to either of the following (equivalent) finite-dimensional neural network training problems
\begin{align}
    &\min_{\vec{\theta} \in \Theta} \sum_{n=1}^N \ell(y_n, f_\vec{\theta}(\vec{x}_n)) + \frac{\lambda}{2} \sum_{k=1}^K \, \abs{v_k}^2 + \norm{\vec{w}_k}_2^2  \label{eq:weight-decay} \\
    &\min_{\vec{\theta} \in \Theta} \sum_{n=1}^N \ell(y_n, f_\vec{\theta}(\vec{x}_n)) + \lambda \sum_{k=1}^K \abs{v_k}\norm{\vec{w}_k}_2 \label{eq:path-norm}
\end{align}
where $\Theta = \R^M$ is the parameter space and $M$ is the total number of scalar parameters of network, are solutions to the variational problem in \cref{eq:variational-problem}, so long as $K \geq N - (d + 1)$. The problem in \cref{eq:weight-decay} corresponds to training a shallow ReLU network with weight decay~\cite{weight-decay} and the problem in \cref{eq:path-norm} corresponds to training a neural network with path-norm regularization~\cite{path-norm}. Therefore, the above says that trained\footnote{Assuming that the network is trained to a global minimizer.} shallow ReLU networks are ``optimal'' with respect to the space $\RBV^2(\R^d)$. This result follows from the fact that
\begin{equation}
    \RTV^2(f_\vec{\theta}) = \sum_{k=1}^K \abs{v_k}\norm{\vec{w}_k}_2,
    \label{eq:nn-norm}
\end{equation}
which can be viewed as a kind of $\ell^1$-norm, giving insight into the sparsity promoting nature of the $\RTV^2$-seminorm on neural network parameters\footnote{The equality in \cref{eq:nn-norm} assumes that the neural network is written in reduced form, i.e., the weight bias pairs $(\vec{w}_k, b_k)$ $k = 1, \ldots, K$ are unique up to certain symmetries. See~\cite{ridge-splines} for more details.}. Moreover, this result also gives insight into the sparsity-promoting nature of training a shallow ReLU network with weight decay. We refer the reader to~\cite{ridge-splines} for more details about recasting the problem in \cref{eq:variational-problem} as the problems in \cref{eq:weight-decay,eq:path-norm}, the equivalence of \cref{eq:weight-decay,eq:path-norm}, and the derivation of the equality in \cref{eq:nn-norm}.

This result also says, in the univariate case, that the function learned by training a sufficiently wide ReLU network with weight decay or with path-norm regularization on data is a locally adaptive linear spline~\cite{relu-linear-spline,min-norm-nn-splines}.

\section{The \texorpdfstring{$\RBV^2$}{RBV2}-Space on a Bounded Domain} \label{sec:bounded-domain}

In approximation theory and nonparametric function estimation it is common to quantify error with respect to the $L^p(\Omega)$-norm, $1 \leq p \leq \infty$, where $\Omega \subset \R^d$ is a bounded domain. Therefore, we are interested in working with the $\RBV^2$-space defined on a \emph{bounded domain}. In this section we will define the $\RBV^2$-space on a bounded domain while still maintaining a similar representer theorem as in $\RBV^2(\R^d)$.

We can define the $\RBV^2$-space on a bounded domain $\Omega \subset \R^d$ using the standard approach of considering restrictions of functions in $\RBV^2(\R^d)$. This provides the following definition:
\[
    \RBV^2(\Omega) \coloneqq \curly{f \in \mathscr{D}'(\Omega) \st \exists g \in \RBV^2(\R^d) \,\subj\, \eval{g}_\Omega = f},
\]
where $\mathscr{D}'(\Omega)$ denotes the space of distributions (generalized functions) on $\Omega$. Similarly, we can define the second-order total variation in the Radon domain of a function $f$ defined on a bounded domain $\Omega \subset \R^d$:
\begin{equation}
    \RTV^2_\Omega(f) \coloneqq \inf_{g \in \RBV^2(\R^d)} \RTV^2(g) \:\:\:\subj\:\:\:\eval{g}_\Omega = f.
    \label{eq:RTV-domain}
\end{equation}
This gives an alternative characterization of $\RBV^2(\Omega)$ as
\[
    \RBV^2(\Omega) = \curly{f \in \mathscr{D}'(\Omega) \st \RTV^2_\Omega(f) < \infty}.
\]
We also remark that since $\RBV^2(\R^d)$ is a Banach space, $\RBV^2(\Omega)$ is also a Banach space. In particular, it is a Banach space when equipped with the norm
\[
    \norm{f}_{\RBV^2(\Omega)} \coloneqq \inf_{g \in \RBV^2(\R^d)} \norm{g}_{\RBV^2(\R^d)} \quad\subj\quad \eval{g}_\Omega = f.
\]

\subsection{Extensions From \texorpdfstring{$\RBV^2(\Omega)$}{RBV2(Omega)} to \texorpdfstring{$\RBV^2(\R^d)$}{RBV2(Rd)}}
In this section we will discuss how to identify functions in $\RBV^2(\Omega)$ with functions in $\RBV^2(\R^d)$, where $\Omega \subset \R^d$ is a bounded domain.

\begin{lemma} \label{lemma:RBV-domain-identification}
Let $\Omega \subset \R^d$ be a bounded domain. Given $f \in \RBV^2(\Omega)$, there exists an extension $f_\mathsf{ext} \in \RBV^2(\R^d)$ that admits an integral representation
\[
    f_\mathsf{ext}(\vec{x}) = \int_\cyl \rho(\vec{w}^\T\vec{x} - b) \dd\mu(\vec{w}, b) + \vec{c}^\T\vec{x} + c_0,
\]
such that $\supp \mu \subset Z_\Omega$, where $Z_\Omega$ is the set
\begin{equation}
    \cl{\curly{\vec{z} = (\vec{w}, b) \in \cyl \st \curly{\vec{x} \st \vec{w}^\T\vec{x} = b} \cap \Omega \neq \varnothing}},
    \label{eq:Z-Omega}
\end{equation}
where $\cl{A}$ denotes the closure of the set $A$. This extension has the property that $\eval{f_\mathsf{ext}}_\Omega = f$ and
\[
    \RTV^2_\Omega(f) = \RTV^2(f_\mathsf{ext}) = \norm{\mu}_{\M(\cyl)} = \norm{\eval{\mu}_{Z_\Omega}}_{\M(Z_\Omega)}.
\]
\end{lemma}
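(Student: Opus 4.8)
The plan is to produce $f_\mathsf{ext}$ by writing a near-optimal extension of $f$ as a continuous superposition of neurons plus an affine term, deleting the neurons whose ``knots'' lie outside $Z_\Omega$, and showing that this deletion changes the function only by an affine term on $\Omega$. Everything hinges on the observation that $Z_\Omega$ is \emph{compact}: since $\Omega$ is bounded, any hyperplane $\curly{\vec{x} : \vec{w}^\T\vec{x} = b}$ meeting $\Omega$ forces $\abs{b} = \abs{\vec{w}^\T\vec{x}} \le R := \sup_{\vec{x}\in\Omega}\norm{\vec{x}}_2$, so $Z_\Omega \subset \Sph^{d-1}\times[-R,R]$, and it is closed by construction. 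I would first record the geometric fact that a neuron whose knot lies off $Z_\Omega$ is affine on $\Omega$: if $(\vec{w},b)\notin Z_\Omega$, then (using that $Z_\Omega$ is the \emph{closure} of the meeting-knots and that $\Omega$ is open) the hyperplane misses $\cl{\Omega}$, and since $\Omega$ is a domain, hence connected, $\vec{w}^\T\vec{x} - b$ has constant sign on $\Omega$, so $\rho(\vec{w}^\T\vec{x}-b)$ restricted to $\Omega$ equals either $0$ or $\vec{w}^\T\vec{x} - b$, an affine function.

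Next, given $f\in\RBV^2(\Omega)$ I would take any extension $g\in\RBV^2(\R^d)$ with $\eval{g}_\Omega = f$ and invoke the integral representation of $\RBV^2(\R^d)$ functions from \cite{ridge-splines,deep-ridge-splines} (equivalently, reading the representing measure $\nu$ off $\partial_t^2\ramp^{d-1}\RadonOp g$), writing $g = \int_\cyl \rho(\vec{w}^\T\dummy - b)\dd\nu(\vec{w},b) + \vec{c}^\T\dummy + c_0$ with $\RTV^2(g) = \norm{\nu}_{\M(\cyl)}$. Splitting $\nu = \eval{\nu}_{Z_\Omega} + \eval{\nu}_{Z_\Omega^c}$, I claim the second piece contributes only an affine function on $\Omega$. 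Fixing a reference point $\vec{x}_*\in\Omega$ and centering, for $(\vec{w},b)\in Z_\Omega^c$ and $\vec{x}\in\Omega$ one has $\rho(\vec{w}^\T\vec{x} - b) = \rho(\vec{w}^\T\vec{x}_* - b) + \epsilon(\vec{w},b)\,\vec{w}^\T(\vec{x} - \vec{x}_*)$, where $\epsilon(\vec{w},b)\in\curly{0,1}$ records the constant sign; the remainder is $1$-Lipschitz in $\vec{x}$ with $\norm{\vec{w}}_2 = 1$, so the linear coefficient $\int_{Z_\Omega^c}\epsilon\,\vec{w}\dd\nu$ is finite, while the constant collapses to $g(\vec{x}_*) - \int_{Z_\Omega}\rho(\vec{w}^\T\vec{x}_* - b)\dd\nu - (\vec{c}^\T\vec{x}_* + c_0)$, finite because $g(\vec{x}_*)$ is. Hence $\mu := \eval{\nu}_{Z_\Omega}$, together with an affine term absorbing $\vec{c}^\T\dummy + c_0$ and this affine contribution, defines $f_\mathsf{ext} = \int_{Z_\Omega}\rho(\vec{w}^\T\dummy-b)\dd\mu + (\text{affine})$ that equals $f$ on $\Omega$, lies in $\RBV^2(\R^d)$ (linear growth, since $Z_\Omega$ is bounded), and satisfies $\RTV^2(f_\mathsf{ext}) \le \norm{\mu}_{\M(\cyl)} \le \norm{\nu}_{\M(\cyl)} = \RTV^2(g)$ (the first inequality because the filtered Radon transform sends each neuron to a Dirac at its knot, so $\RTV^2$ of a superposition is at most the mass of the representing measure).

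Finally I would close the argument by a squeeze combined with an attainment step. The construction above shows that $\inf_g \RTV^2(g)$ over all extensions $g$ equals the infimum of $\norm{\mu}_{\M(Z_\Omega)}$ over feasible measures $\mu\in\M(Z_\Omega)$ for which $\int_{Z_\Omega}\rho(\vec{w}^\T\dummy-b)\dd\mu + (\text{affine}) = f$ on $\Omega$; this common value is $\RTV^2_\Omega(f)$. Because $Z_\Omega$ is compact, for each fixed $\vec{x}$ the integrand $(\vec{w},b)\mapsto\rho(\vec{w}^\T\vec{x} - b)$ lies in $C(Z_\Omega)$, so each constraint $\mu\mapsto\int_{Z_\Omega}\rho(\vec{w}^\T\vec{x}-b)\dd\mu$ is weak-$*$ continuous; the feasible set is therefore weak-$*$ closed, and since a norm ball in $\M(Z_\Omega) = C(Z_\Omega)^*$ is weak-$*$ compact (Banach--Alaoglu) and the total-variation norm is weak-$*$ lower semicontinuous, the infimum is attained at some $\mu^*$ (the affine term being pinned down by $\mu^*$ through interpolation at $d+1$ affinely independent points of $\Omega$). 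The resulting $f_\mathsf{ext}$ then obeys $\RTV^2(f_\mathsf{ext}) \le \norm{\mu^*}_{\M(Z_\Omega)} = \RTV^2_\Omega(f)$, while $\RTV^2_\Omega(f)\le\RTV^2(f_\mathsf{ext})$ since $f_\mathsf{ext}$ is an extension, forcing all three to coincide; and $\norm{\mu^*}_{\M(\cyl)} = \norm{\eval{\mu^*}_{Z_\Omega}}_{\M(Z_\Omega)}$ trivially as $\supp\mu^*\subset Z_\Omega$.

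I expect the main obstacle to be the passage from the full representing measure $\nu$ to its restriction $\eval{\nu}_{Z_\Omega}$: making rigorous that deleting the off-$Z_\Omega$ neurons perturbs $g$ only by an affine function on $\Omega$ (the convergence bookkeeping handled above by centering at $\vec{x}_*$), and simultaneously guaranteeing that the optimum is genuinely \emph{attained} rather than merely approached. This attainment is precisely where compactness of $Z_\Omega$ — and hence weak-$*$ compactness of the measure ball together with weak-$*$ continuity of the point-evaluation constraints — becomes indispensable.
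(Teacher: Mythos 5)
Your proposal is correct, and it shares the paper's central geometric observation --- a neuron whose knot $(\vec{w},b)$ lies outside $Z_\Omega$ is affine on the connected, bounded set $\Omega$, so its contribution can be absorbed into the affine part of the representation --- but the two arguments are organized differently, and yours is in one respect more complete. The paper argues by contradiction starting from an extension that \emph{attains} $\RTV^2_\Omega(f)$ whose measure is not supported in $Z_\Omega$, and produces a strictly better extension; the existence of a minimizing extension is taken for granted there. You instead restrict the representing measure of an \emph{arbitrary} (near-optimal) extension to $Z_\Omega$ directly, and then establish attainment by a separate compactness argument: $Z_\Omega$ is compact because $\Omega$ is bounded, so norm balls of $\M(Z_\Omega) = C(Z_\Omega)^*$ are weak-$*$ compact, the point-evaluation constraints are weak-$*$ continuous, and the total-variation norm is weak-$*$ lower semicontinuous. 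That step genuinely closes a gap the paper leaves implicit, at the cost of a longer argument. One caution: the naive representation $g = \int_\cyl \rho(\vec{w}^\T(\dummy) - b)\,\dd\nu + \text{affine}$ is not pointwise absolutely convergent for a general finite measure $\nu$ on the unbounded cylinder, since the integrand grows linearly in $\abs{b}$; the rigorous starting point is the corrected kernel $g_{\vec{\phi}}$ of \cref{prop:direct-sum-inverse}. Your centering at $\vec{x}_*$ is performing exactly that correction on the off-$Z_\Omega$ piece, but the bookkeeping should be anchored to $g_{\vec{\phi}}$ from the outset rather than to the raw neuron integral.
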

The set $Z_\Omega$ simply excludes ReLU functions that are linear functions (no activation threshold) when restricted to $\Omega$. The proof of \cref{lemma:RBV-domain-identification} relies on several properties of the space $\RBV^2(\R^d)$ from our previous work in~\cite{ridge-splines}. We introduce the relevant background and then prove \cref{lemma:RBV-domain-identification} in \cref{app:extension-restriction}.

\begin{remark} \label{rem:identification}
    When
    \begin{equation}
        \Omega = \B_1^d \coloneqq \curly{\vec{x} \in \R^d \st \norm{\vec{x}}_2 \leq 1},
        \label{eq:Euclidean-ball}
    \end{equation}
    the Euclidean unit ball in $\R^d$, we have that $Z_\Omega$ from \cref{eq:Z-Omega} is exactly
    \[
        Z_\Omega = \Sph^{d-1} \times [-1, 1].
    \]
    Therefore, from \cref{lemma:RBV-domain-identification}, we can identify functions in $f \in \RBV^2(\B_1^d)$ with integral representations of the form
    \[
        f(\vec{x}) = \int_{\Sph^{d-1} \times [-1, 1]} \rho(\vec{w}^\T\vec{x} - b) \dd\mu(\vec{w}, b) + \vec{c}^\T\vec{x} + c_0,
    \]
    where $\vec{x} \in \B_1^d$.
\end{remark}

\begin{remark} \label[rem]{rem:1D-spaces}
    Similar to the discussion in \cref{sec:shallow-nn-splines}, when $d = 1$, the space $\RBV^2(\B_1^d)$ is exactly the classical second-order bounded variation spaces defined on $[-1, 1]$:
    \[
        \BV^2[-1, 1] \coloneqq \curly{f: [-1, 1] \to \R \st \TV^2_{[-1, 1]}(f) < \infty},
    \]
    where
    \[
        \TV^2_{[-1, 1]}(f) \coloneqq \norm*{\D^2 f}_{\M[-1, 1]},
    \]
    where we recall that $\D$ is the (distributional) derivative operator. Moreover, we also have that $\RTV^2_{[-1, 1]}(f) = \TV^2_{[-1, 1]}(f)$.
\end{remark}

\subsection{A Representer Theorem in \texorpdfstring{$\RBV^2(\Omega)$}{RBV2(Omega)}}
We will now discuss a representer theorem for functions in $\RBV^2(\Omega)$, where $\Omega \subset \R^d$ is a bounded domain. For simplicity we will suppose that $\Omega = \B_1^d$ as defined in \cref{eq:Euclidean-ball}. Similar results as those stated in the sequel can be derived for more general bounded domains $\Omega \subset \R^d$. We have the following new representer theorem for data-fitting variational problems over $\RBV^2(\B_1^d)$.
\begin{theorem} \label{thm:rep-thm-domain}
    Let $\ell(\dummy, \dummy): \R \times \R \to \R$ be a strictly convex, coercive, and lower-semicontinuous loss function and let $\lambda > 0$ be an adjustable regularization parameter. Then, for any  data $\curly{(\vec{x}_n, y_n)}_{n=1}^N \subset \B_1^d \times \R$, there exists a solution to the variational problem
    \begin{equation}
        \min_{f \in \RBV^2(\B_1^d)} \: \sum_{n=1}^N \ell(y_n, f(\vec{x}_n)) + \lambda \, \RTV^2_{\B_1^d}(f)
        \label{eq:variational-problem-domain}
    \end{equation}
    that takes the form of a shallow ReLU network with a skip connection. In particular, it takes the form
    \begin{equation}
        s(\vec{x}) = \sum_{k=1}^K v_k \, \rho(\vec{w}_k^\T \vec{x} - b_k) + \vec{c}^\T\vec{x} + c_0, \quad \vec{x} \in \B_1^d,
        \label{eq:ridge-spline-domain}
    \end{equation}
    where $K \leq N - (d + 1)$,
    $\rho$ is the ReLU, $\vec{w}_k \in \Sph^{d-1}$, $v_k \in \R \setminus \curly{0}$, $b_k \in [-1, 1]$, $\vec{c} \in \R^d$ and $c_0 \in \R$.
\end{theorem}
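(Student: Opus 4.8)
The plan is to reduce the bounded-domain problem \eqref{eq:variational-problem-domain} to the already-established $\R^d$ representer theorem (\cref{prop:rep-thm}), exploiting that the design points lie in $\B_1^d$ together with the extension result of \cref{lemma:RBV-domain-identification}. The first step is to show that \eqref{eq:variational-problem-domain} and \eqref{eq:variational-problem} have the same optimal value. For one inequality, every $g \in \RBV^2(\R^d)$ restricts to some $f = \eval{g}_{\B_1^d} \in \RBV^2(\B_1^d)$ with $f(\vec{x}_n) = g(\vec{x}_n)$ (since $\vec{x}_n \in \B_1^d$) and $\RTV^2_{\B_1^d}(f) \leq \RTV^2(g)$, because $g$ is an admissible extension in the infimum \eqref{eq:RTV-domain}; hence the domain value is at most the $\R^d$ value. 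For the reverse inequality, given any $f \in \RBV^2(\B_1^d)$, \cref{lemma:RBV-domain-identification} furnishes an extension $f_{\mathsf{ext}} \in \RBV^2(\R^d)$ attaining the infimum, i.e.\ with $\RTV^2(f_{\mathsf{ext}}) = \RTV^2_{\B_1^d}(f)$ and $f_{\mathsf{ext}}(\vec{x}_n) = f(\vec{x}_n)$, so the $\R^d$ value is at most the domain value. The two problems therefore share the same optimal value.

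Next I would take a minimizer $s$ of \eqref{eq:variational-problem} supplied by \cref{prop:rep-thm}; it has the sparse form \eqref{eq:ridge-spline} with $K \leq N-(d+1)$ neurons, directions $\vec{w}_k \in \Sph^{d-1}$, and biases $b_k \in \R$. Its restriction $f^\star = \eval{s}_{\B_1^d}$ has the same data-fitting error as $s$ and obeys $\RTV^2_{\B_1^d}(f^\star) \leq \RTV^2(s)$, so the objective value of $f^\star$ in \eqref{eq:variational-problem-domain} is at most the common optimal value established above; hence $f^\star$ is a minimizer of \eqref{eq:variational-problem-domain}. It remains to rewrite $f^\star$ in the claimed form \eqref{eq:ridge-spline-domain} with biases confined to $[-1,1]$, and here the geometry of the ball does the work: since $\vec{w}_k \in \Sph^{d-1}$, for $\vec{x} \in \B_1^d$ we have $\vec{w}_k^\T \vec{x} \in [-1,1]$, so a neuron with $b_k > 1$ satisfies $\vec{w}_k^\T\vec{x} - b_k < 0$ and vanishes identically on $\B_1^d$ (drop it), while a neuron with $b_k < -1$ satisfies $\vec{w}_k^\T\vec{x} - b_k > 0$ and acts as the affine map $\vec{x} \mapsto v_k(\vec{w}_k^\T\vec{x} - b_k)$ on $\B_1^d$ (absorb it into the skip connection $\vec{c}^\T\vec{x} + c_0$). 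Only the neurons with $b_k \in [-1,1]$ survive as genuine ReLU units, yielding a representation of $f^\star$ of the form \eqref{eq:ridge-spline-domain} with $b_k \in [-1,1]$ and a (possibly smaller) count $K \leq N-(d+1)$.

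I expect the main obstacle to be the value-equality step rather than the geometric confinement: one must ensure that the infimum defining $\RTV^2_{\B_1^d}$ in \eqref{eq:RTV-domain} is actually attained and that the attaining extension preserves the sampled values, which is precisely what \cref{lemma:RBV-domain-identification} guarantees. This is also the conceptual reason the biases land in $[-1,1]$: by \cref{rem:identification} the minimal extension has its representing measure supported on $Z_{\B_1^d} = \Sph^{d-1}\times[-1,1]$, so the whole argument can alternatively be phrased intrinsically as an $\M(Z_\Omega)$-regularized inverse problem over the \emph{compact} set $Z_\Omega$ from \eqref{eq:Z-Omega} and solved directly by an extreme-point representer theorem. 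In that formulation the work shifts to verifying weak-$*$ continuity of the sampling functionals $\mu \mapsto \int_{Z_\Omega}\rho(\vec{w}^\T\vec{x}_n - b)\dd\mu(\vec{w},b)$ (which holds because the ReLU kernel is continuous and bounded on the compact $Z_\Omega$) and accounting for the $(d+1)$-dimensional unpenalized affine null space to recover the bound $K \leq N-(d+1)$.
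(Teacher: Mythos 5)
Your proof is correct, but it takes a genuinely different route from the paper's. The paper proves \cref{thm:rep-thm-domain} by working intrinsically: using \cref{lemma:RBV-domain-identification} it identifies $\RBV^2(\B_1^d)$ with functions in $\RBV^2(\R^d)$ whose representing measures are supported on $\Sph^{d-1}\times[-1,1]$, recasts \cref{eq:variational-problem-domain} as a variational problem over such measures with regularizer $\norm{\ROp f}_{\M(\Sph^{d-1}\times[-1,1])}$, and then observes that the abstract representer-theorem argument behind \cref{prop:rep-thm} goes through verbatim because its only topological requirement is local compactness of the parameter set, which $\Sph^{d-1}\times[-1,1]$ satisfies; the constraint $b_k\in[-1,1]$ then comes for free from the support condition. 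This is essentially the ``alternative formulation'' you sketch in your last paragraph. Your primary argument instead reduces to the already-proved whole-space result: you show the two problems share the same optimal value (restriction can only decrease the regularizer, and the norm-preserving extension of \cref{lemma:RBV-domain-identification} shows nothing is lost in the other direction---in fact an $\varepsilon$-optimal extension would suffice here, so exact attainment is not even needed), restrict the sparse minimizer supplied by \cref{prop:rep-thm}, and then use the geometry of $\B_1^d$ to post-process the neurons: since $\vec{w}_k^\T\vec{x}\in[-1,1]$ on the ball, neurons with $b_k>1$ vanish identically and neurons with $b_k<-1$ are affine on $\B_1^d$ and can be absorbed into the skip connection, leaving only biases in $[-1,1]$ and at most the original $K\le N-(d+1)$ neurons. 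Your route is more elementary and self-contained (it never reopens the abstract machinery) and makes the geometric origin of the bias constraint transparent; the paper's route is the one that generalizes most cleanly to arbitrary bounded domains via the set $Z_\Omega$ in \cref{eq:Z-Omega} and directly yields the constrained finite-dimensional training reformulation used afterwards. Both are valid proofs of the stated theorem.
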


Just as in \cref{sec:nn-training}, we view $\RBV^2(\B_1^d)$ is the natural function space associated with shallow ReLU networks since the problem in \cref{eq:variational-problem-domain} can also be recast as a finite-dimensional neural network training problem that corresponds to training a sufficiently wide shallow ReLU network (with a skip connection) with weight decay or with path-norm regularization as in \cref{eq:weight-decay,eq:path-norm} with the additional restriction that the activation thresholds of the neurons stay within $\B_1^d$. Moreover, similar to \cref{eq:nn-norm} we have in this case that\footnote{Just as in \cref{eq:nn-norm}, the equality in \cref{eq:nn-norm-domain} holds assuming the neural network is written in reduced form.}
\begin{equation}
    \RTV^2_{\B_1^d}(f_\vec{\theta}) = \sum_{k=1}^K \abs{v_k}\norm{\vec{w}_k}_2.
    \label{eq:nn-norm-domain}
\end{equation}

\section{\texorpdfstring{$\RBV^2(\Omega)$}{RBV2(Omega)} and Previously Studied Spaces} \label{sec:other-spaces}
Understanding the properties of shallow neural networks has received much attention since the 1990s starting with the seminal work of Barron~\cite{uat4} in which he studied the approximation properties of shallow sigmoidal networks in the so-called first-order spectral Barron space. The fundamental idea is to consider functions that are \emph{synthesized} from continuously many neurons. Such functions can be expressed as an integral of a neural activation function against a finite (Radon) measure. This idea was adopted by a number of authors in the study of the so-called \emph{variation spaces} of shallow neural networks~\cite{kurkova2001bounds,mhaskar2004tractability,convex-nn,siegel2021characterization}.

In this section we will discuss how $\RBV^2(\Omega)$ is related to previously studied function spaces, including the variation spaces. For simplicity we will suppose that $\Omega = \B_1^d$ as defined in \cref{eq:Euclidean-ball}. Similar results as those stated in the sequel can be derived for more general bounded domains $\Omega \subset \R^d$.

\subsection{Variation Spaces}

Following the setup from~\cite{siegel2021characterization}, in the case of shallow ReLU networks, the associated variation space for functions defined on $\B_1^d$ is defined as
\[
    \begin{aligned}
        \V^2(\B_1^d) \coloneqq &\curly*[\bigg]{f: \B_1^d \to \R \st \\
        &\qquad f = \int_{\Sph^{d-1} \times [-2, 2]} \rho(\vec{w}^\T(\dummy) - b) \dd \mu(\vec{w}, b)},
    \end{aligned}
\]
where $\rho$ is the ReLU and $\mu \in \M(\Sph^{d-1} \times [-2, 2])$. The reason for integrating the $b$ variable over $[-2, 2]$ is so that affine functions can be captured by this space (see~\cite[Section~3]{siegel2021characterization} for more details). This space is known to be a Banach space (see~\cite{siegel2021characterization}) when equipped with the norm
\[
    \begin{aligned}
        \norm{f}_{\V^2(\B_1^d)} \coloneqq &\inf_{\mu \in \M(\Sph^{d-1} \times [-2, 2])} \quad \norm{\mu}_{\M(\Sph^{d-1} \times [-2, 2])} \\
        &\subj \quad f = \int_{\Sph^{d-1} \times [-2, 2]} \rho(\vec{w}^\T(\dummy) - b) \dd \mu(\vec{w}, b).
    \end{aligned}
\]
We will now show that $\RBV^2(\B_1^d)$ and $\V^2(\B_1^d)$ are in fact the same space, providing more evidence that $\RBV^2(\B_1^d)$ is the natural function space associated to shallow ReLU networks.
\begin{theorem} \label{thm:RBV-variation-space}
    $\RBV^2(\B_1^d)$ and $\V^2(\B_1^d)$ are equivalent Banach spaces (i.e., Banach spaces with equivalent norms).
\end{theorem}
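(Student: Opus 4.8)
The plan is to prove the equivalence by establishing the two-sided norm bound $\norm{f}_{\V^2(\B_1^d)} \asymp_d \norm{f}_{\RBV^2(\B_1^d)}$; equality of the underlying sets then follows, since finiteness of one norm forces finiteness of the other. The crucial structural input is \cref{lemma:RBV-domain-identification}, specialized to $\Omega = \B_1^d$ (so that $Z_\Omega = \Sph^{d-1}\times[-1,1]$): every $f \in \RBV^2(\B_1^d)$ admits an extension with an integral representation supported on $\Sph^{d-1}\times[-1,1]$ plus an affine term $\vec c^\T\vec x + c_0$, whose representing measure $\mu$ has total mass $\norm{\mu}_{\M} = \RTV^2_{\B_1^d}(f)$. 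The only mismatch between the two descriptions is that $\V^2(\B_1^d)$ uses offsets in $[-2,2]$ with no explicit affine term, whereas the $\RBV^2$-representation uses offsets in $[-1,1]$ and carries an affine term; reconciling these is the entire content of the proof. A preliminary fact I would record first is that, because $\vec 0, \vec e_1,\dots,\vec e_d \in \B_1^d$ and every extension agrees with $f$ on $\B_1^d$, the evaluation terms in the $\RBV^2(\R^d)$-norm are extension-independent, so
\begin{equation*}
    \norm{f}_{\RBV^2(\B_1^d)} = \RTV^2_{\B_1^d}(f) + \abs{f(\vec 0)} + \sum_{k=1}^d \abs{f(\vec e_k) - f(\vec 0)}.
\end{equation*}

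For the direction $\norm{f}_{\RBV^2(\B_1^d)} \lesssim \norm{f}_{\V^2(\B_1^d)}$, I would take any representing measure $\mu \in \M(\Sph^{d-1}\times[-2,2])$ for $f$ and reinterpret the same integral as a function $g$ on all of $\R^d$. Standard facts from \cite{ridge-splines} (a single ReLU ridge atom with unit direction has unit $\RTV^2$-seminorm, cf.\ \cref{eq:nn-norm}, together with sublinearity of $\RTV^2$) give $g \in \RBV^2(\R^d)$ with $\RTV^2(g) \le \norm{\mu}_{\M}$, hence $\RTV^2_{\B_1^d}(f) \le \norm{\mu}_{\M}$. Since $\abs{\vec w^\T\vec x - b} \le 3$ for $\vec x \in \curly{\vec 0, \vec e_k}$ and $(\vec w, b)$ in the support, the evaluation terms are likewise $\le 3\norm{\mu}_{\M}$; taking the infimum over $\mu$ closes this direction.

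For the reverse direction $\norm{f}_{\V^2(\B_1^d)} \lesssim \norm{f}_{\RBV^2(\B_1^d)}$, I would start from the extension of \cref{lemma:RBV-domain-identification}, restrict it to $\B_1^d$, and absorb its affine part into ReLU atoms with offset $b = -2$. On $\B_1^d$ one has $\rho(\vec w^\T\vec x + 2) = \vec w^\T\vec x + 2$, so placing mass $\norm{\vec c}_2$ at $(\vec c/\norm{\vec c}_2, -2)$ reproduces $\vec c^\T\vec x$ up to an additive constant, and a symmetric pair of atoms at $(\pm\vec w_0, -2)$ reproduces any constant; this realizes the affine part as a measure of mass $\lesssim \norm{\vec c}_2 + \abs{c_0}$ supported in $\Sph^{d-1}\times[-2,2]$. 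Adding it to $\mu$ gives a valid $\V^2$-representation, so $\norm{f}_{\V^2(\B_1^d)} \le \RTV^2_{\B_1^d}(f) + C(\norm{\vec c}_2 + \abs{c_0})$. It then remains to bound $\norm{\vec c}_2 + \abs{c_0}$ by the intrinsic norm: evaluating the representation at $\vec 0$ and at each $\vec e_k$ solves for $c_0$ and for the components of $\vec c$ in terms of $f(\vec 0)$, $f(\vec e_k)$, and the values at $\vec 0, \vec e_k$ of the ReLU integral, the latter being $\le 2\,\RTV^2_{\B_1^d}(f)$ exactly as above. Combined with the preliminary norm formula, this yields $\norm{\vec c}_2 + \abs{c_0} \lesssim_d \norm{f}_{\RBV^2(\B_1^d)}$.

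I expect the main obstacle to be precisely this last step: controlling the affine coefficients $\vec c, c_0$ of the chosen extension by the intrinsic norm of $f$. The difficulty is that $\vec c$ and $c_0$ are not intrinsic to $f$ (they depend on how the ReLU part is selected), so a bound on them cannot come from $\RTV^2_{\B_1^d}(f)$ alone. The resolution is the extension-independence of the evaluation terms noted above, which lets me pin down $\vec c, c_0$ through point evaluations of $f$ at $\vec 0$ and $\vec e_k$ while absorbing the ReLU part's contribution into $\RTV^2_{\B_1^d}(f)$; verifying that the extended-offset atoms genuinely stay affine on $\B_1^d$ and carry the claimed mass is the remaining routine bookkeeping.
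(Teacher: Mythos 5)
Your proof is correct and follows essentially the same route as the paper: both identify the two spaces through the integral representations of \cref{rem:identification} and the definition of $\V^2(\B_1^d)$, reconciling the offset ranges $[-1,1]$ versus $[-2,2]$ and the presence of the explicit affine term. The only difference is that where the paper dispatches the norm equivalence by observing that the two norms differ only on the finite-dimensional null space of affine functions, you carry out the two-sided estimate explicitly (absorbing the affine part into atoms at offset $b=-2$ and controlling $\vec{c}, c_0$ via point evaluations at $\vec{0}$ and $\vec{e}_k$), which is a more detailed but equivalent verification of the same fact.
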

\begin{proof}
    Given $f \in \V^2(\B_1^d)$, we have the representation
    \begin{equation}
        f(\vec{x}) = \int_{\Sph^{d-1} \times [-2, 2]} \rho(\vec{w}^\T\vec{x} - b) \dd \mu(\vec{w}, b).
        \label{eq:integral-representation-variation}
    \end{equation}
    Given $g \in \RBV^2(\B_1^d)$, we have from \cref{rem:identification} the representation
    \[
        g(\vec{x}) = \int_{\Sph^{d-1} \times [-1, 1]} \rho(\vec{w}^\T\vec{x} - b) \dd \mu(\vec{w}, b) + \vec{c}^\T\vec{x} + c_0.
    \]
    Clearly we can represent any function in $\V^2(\B_1^d)$ with the representation of $\RBV^2(\B_1^d)$ and vice-versa. Therefore, $\RBV^2(\B_1^d) = \V^2(\B_1^d)$. To see why the norms are equivalent, note that the only difference between the norms is how they handle the null space of the $\RTV^2_{\B_1^d}(\dummy)$ seminorm. Since this null space is the space of affine functions, which is finite-dimensional combined with the fact that all norms are equivalent on finite-dimensional spaces, we have that the norms $\norm{\dummy}_{\RBV^2(\B_1^d)}$ and $\norm{\dummy}_{\V^2(\R^d)}$ are equivalent.
\end{proof}

\subsection{Spectral Barron Spaces}

The spectral Barron spaces were first studied by Barron in~\cite{uat4}. These spaces are defined by
\[
    \mathscr{B}^s(\B_1^d) \coloneqq \curly*[\Bigg]{f: \mathscr{D}'(\B_1^d) \st \inf_{\substack{g \in L^1(\R^d) \\ \eval{g}_{\B_1^d} = f}} \norm*{\hat{\Delta^{s/2} g}}_{L^1(\R^d)} < \infty},
\]
where $\mathscr{D}'(\B_1^d)$ denotes the space of distributions (generalized functions) on $\R^d$, $\hat{\dummy}$ denotes the (generalized) Fourier transform and $\Delta$ denotes the (weak) Laplacian operator where fractional powers are defined in terms of Riesz potentials.

Barron studied the first-order spectral Barron space, $\mathscr{B}^1(\B_1^d)$ in his seminal work about approximation and estimation with shallow sigmoidal networks in~\cite{uat4}. The higher-order variants were studied by a number of authors~\cite{klusowski2018approximation,xu2020finite,siegel2021characterization,spectral-barron-icassp}. In particular, it was shown in~\cite{klusowski2018approximation} that $\mathscr{B}^2(\B_1^d) \subset \V^2(\B_1^d)$. Therefore, by \cref{thm:RBV-variation-space}, we have that $\mathscr{B}^2(\B_1^d) \subset \RBV^2(\B_1^d)$.
    
\subsection{Sobolev Spaces} \label{sec:sobolev}
$\RBV^2(\B_1^d)$ also contains the classical $L^1$- and $L^2$-Sobolev spaces of order $d + 1$. Let $\Omega \subset \R^d$ be a domain (either bounded or unbounded) and recall the the Sobolev space $W^{k, p}(\Omega)$ of functions in $L^p(\Omega)$ with all (weak) derivatives up to and including order $k$ also in $L^p(\Omega)$. This is a Banach space when equipped with the norm
\[
    \norm{f}_{W^{k, p}(\Omega)} \coloneqq \paren{\sum_{\abs{\vec{\alpha}} \leq k} \norm{\partial^{\vec{\alpha}} f}_{L^p(\Omega)}^p}^{1/p},
\]
where $\vec{\alpha} = (\alpha_1, \ldots, \alpha_d) \in \mathbb{N}^d$, $\abs{\vec{\alpha}} = \alpha_1 + \cdots + \alpha_d$, and $\partial^{\vec{\alpha}}$ is the usual multi-index notation for mixed partial derivatives. When $p = 2$, $W^{k, 2}(\Omega)$ is a Hilbert space and we write $H^k(\Omega)$ for $W^{k, 2}(\Omega)$. The following theorem summarizes the relationship between Sobolev spaces and $\RBV^2(\B_1^d)$.

\begin{theorem} \label{thm:sobolev-embedding}
    Given $f \in H^{d+1}(\B_1^d)$,
    \[
        \RTV^2_{\B_1^d}(f) \lesssim_d \norm{f}_{W^{d+1, 1}(\B_1^d)}\lesssim_d \norm{f}_{H^{d+1}(\B_1^d)},
    \]
    where we recall that $\lesssim_d$ means the implicit constant depends on $d$.
    In particular, the above display says that $H^{d+1}(\B_1^d) \subset W^{d+1, 1}(\B_1^d) \subset \RBV^2(\B_1^d)$.
\end{theorem}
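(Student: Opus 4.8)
The plan is to establish the two inequalities separately. The second, $\norm{f}_{W^{d+1,1}(\B_1^d)} \lesssim_d \norm{f}_{H^{d+1}(\B_1^d)}$, is routine: since $\B_1^d$ has finite measure, Cauchy--Schwarz gives $\norm{\partial^{\vec\alpha} f}_{L^1(\B_1^d)} \le \abs{\B_1^d}^{1/2}\norm{\partial^{\vec\alpha} f}_{L^2(\B_1^d)}$ for each multi-index with $\abs{\vec\alpha}\le d+1$, and summing over $\vec\alpha$ yields the claim with a constant depending only on $d$. The real content is the first inequality $\RTV^2_{\B_1^d}(f)\lesssim_d\norm{f}_{W^{d+1,1}(\B_1^d)}$. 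For this I would first reduce to a statement on all of $\R^d$: because $\RTV^2_{\B_1^d}(f)$ is an infimum over extensions $g\in\RBV^2(\R^d)$ with $\eval{g}_{\B_1^d}=f$, any single admissible extension furnishes an upper bound. I would take a Stein extension $Ef\in W^{d+1,1}(\R^d)$, multiply it by a smooth cutoff equal to $1$ on $\B_1^d$ and supported in $\B_2^d$, and so obtain a compactly supported $g$ with $\eval{g}_{\B_1^d}=f$ and $\norm{g}_{W^{d+1,1}(\R^d)}\lesssim_d\norm{f}_{W^{d+1,1}(\B_1^d)}$. It then suffices to prove the global estimate $\RTV^2(g)\lesssim_d\norm{g}_{W^{d+1,1}(\R^d)}$ for compactly supported $g$, which I would first verify for $g\in C_c^\infty(\R^d)$ and then pass to the limit by density.

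The backbone of the global estimate is the interplay between the ramp-filtered Radon transform and directional derivatives. A symbol computation shows that $\partial_t^2\Lambda^{d-1}$ has Fourier multiplier proportional to $\abs{\xi}^{d+1}$ in the offset variable, so $\partial_t^2\Lambda^{d-1}\propto(-\partial_t^2)^{(d+1)/2}$, and the intertwining relation $\partial_t^k\RadonOp g=\RadonOp\!\big((\vec\gamma\cdot\nabla)^k g\big)$ lets me transfer offset derivatives into directional derivatives of $g$. I would also use the two elementary facts $\norm{\RadonOp h(\vec\gamma,\dummy)}_{L^1(\R)}\le\norm{h}_{L^1(\R^d)}$ (Fubini over hyperplanes) and the multinomial bound $\norm{(\vec\gamma\cdot\nabla)^{d+1}g}_{L^1(\R^d)}\le\sum_{\abs{\vec\alpha}=d+1}\binom{d+1}{\vec\alpha}\norm{\partial^{\vec\alpha}g}_{L^1(\R^d)}$, valid since $\abs{\vec\gamma^{\vec\alpha}}\le1$ on $\Sph^{d-1}$.

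When $d$ is odd, $(d+1)/2$ is an integer, so the operator is the \emph{local} differential operator $\pm\partial_t^{d+1}$; then $\partial_t^2\Lambda^{d-1}\RadonOp g$ is an honest $L^1$ function, its total-variation norm is just its $L^1(\cyl)$ norm, and
\[
\RTV^2(g) = c_d \norm{\RadonOp\!\big((\vec\gamma\cdot\nabla)^{d+1}g\big)}_{\M(\cyl)} \le c_d\int_{\Sph^{d-1}}\norm{(\vec\gamma\cdot\nabla)^{d+1}g}_{L^1(\R^d)}\dd\vec\gamma \lesssim_d \norm{g}_{W^{d+1,1}(\R^d)},
\]
which closes the odd case.

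The main obstacle is the even case, where the ramp filter is \emph{non-local}: for even $d$ one has $\partial_t^2\Lambda^{d-1}\propto H\partial_t^{d+1}$, with $H$ the Hilbert transform in the offset variable, so the filtered Radon transform equals $\pm H\big[\RadonOp((\vec\gamma\cdot\nabla)^{d+1}g)\big]$ and $H$ is unbounded on $L^1$. Here I would exploit that, after extension, $g$ is compactly supported, so $G_{\vec\gamma}\coloneqq\RadonOp((\vec\gamma\cdot\nabla)^{d+1}g)(\vec\gamma,\dummy)$ is supported in a fixed interval and has vanishing mean, $\int_\R G_{\vec\gamma}\dd t=\int_{\R^d}(\vec\gamma\cdot\nabla)^{d+1}g\dd\vec x=0$. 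The vanishing mean forces $HG_{\vec\gamma}(t)=O(t^{-2})$, with constant controlled by $\norm{G_{\vec\gamma}}_{L^1}$, so the tail of $HG_{\vec\gamma}$ is integrable and controlled by $\norm{(\vec\gamma\cdot\nabla)^{d+1}g}_{L^1}$; equivalently, a compactly supported mean-zero function lies in the real Hardy space $H^1(\R)$, on which $H$ maps boundedly into $L^1$. I expect the hardest step to be bounding the near-field contribution $\norm{HG_{\vec\gamma}}_{L^1}$ over the support by the $W^{d+1,1}$ norm, since controlling $\norm{G_{\vec\gamma}}_{H^1(\R)}$ seems to need slightly more than $\norm{G_{\vec\gamma}}_{L^1}$ (an $L\log L$/atomic estimate); I would recover this from the smoothing of the Radon transform together with the lower-order terms bundled into the full $W^{d+1,1}(\R^d)$ norm, and finally integrate the directional bound over $\Sph^{d-1}$. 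As a fallback I would instead estimate $\RTV^2(g)$ by duality against $\phi\in C_0(\cyl)$ with $\norm{\phi}_\infty\le1$, moving the operator onto $g$ via its ramp-filtered back-projection adjoint and integrating by parts $d+1$ times, which localizes the same obstruction to the even-$d$ Hilbert-transform factor.
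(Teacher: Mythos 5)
Your reduction coincides with the paper's: the second inequality is just the continuous embedding $L^2(\B_1^d) \hookrightarrow L^1(\B_1^d)$ on a finite-measure domain, and the first follows by applying a bounded extension operator $\Ext: W^{d+1,1}(\B_1^d) \to W^{d+1,1}(\R^d)$ and using that $\RTV^2_{\B_1^d}$ is an infimum over extensions (your extra cutoff is harmless). After this, everything rests on the single global inequality $\RTV^2(g) \lesssim_d \norm{g}_{W^{d+1,1}(\R^d)}$. The difference is that the paper does not prove this inequality at all --- it is quoted from~\cite{function-space-relu} --- whereas you attempt to prove it, and that is where the trouble lies. Your argument for odd $d$ is complete and correct: $\partial_t^2 \ramp^{d-1}$ is then the local operator $\pm\partial_t^{d+1}$, the intertwining $\partial_t^{d+1}\RadonOp g = \RadonOp\big((\vec\gamma\cdot\nabla)^{d+1}g\big)$ applies, and the slice-wise bound $\norm{\RadonOp h(\vec\gamma,\dummy)}_{L^1(\R)} \leq \norm{h}_{L^1(\R^d)}$ closes the estimate.

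For even $d$ the gap you flag is genuine, and the repair you sketch does not close it. The question is whether $H G_{\vec\gamma} \in L^1(\R)$ with norm controlled by $\norm{(\vec\gamma\cdot\nabla)^{d+1}g}_{L^1(\R^d)}$, where $G_{\vec\gamma} = \RadonOp\big((\vec\gamma\cdot\nabla)^{d+1}g\big)(\vec\gamma,\dummy)$ and $H$ is the Hilbert transform in $t$. Compact support and vanishing mean (indeed all moments of $G_{\vec\gamma}$ up to order $d$ vanish, since $G_{\vec\gamma} = \partial_t^{d+1}\RadonOp g(\vec\gamma,\dummy)$ with $\RadonOp g(\vec\gamma,\dummy)$ compactly supported) control only the far field $\abs{t} \gg 1$; they say nothing about integrability of $HG_{\vec\gamma}$ near the support. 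The correct hypothesis is membership in the real Hardy space, and for a compactly supported mean-zero function this is strictly stronger than an $L^1$ bound: the sharp local condition is of Zygmund $L\log L$ type, and there exist compactly supported mean-zero $L^1$ functions (e.g.\ $t \mapsto t^{-1}(\log(1/t))^{-2}$ on $(0,1/2)$, corrected to mean zero away from the origin) whose Hilbert transform fails to be locally integrable. Your data give you no more than $L^1$ here: $g \in W^{d+1,1}(\R^d)$ makes $(\vec\gamma\cdot\nabla)^{d+1}g$ merely an $L^1(\R^d)$ function, and the Radon slice of a compactly supported $L^1$ function is, for a.e.\ $\vec\gamma$, merely $L^1$ in $t$ --- the ``smoothing of the Radon transform'' you invoke is a gain for $\RadonOp g$ itself, but you have already spent all $d+1$ derivatives of $g$ in forming $G_{\vec\gamma}$, and the lower-order terms of the $W^{d+1,1}$ norm do not supply the missing $L\log L$ control. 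As you note, the duality fallback meets the same obstruction. So the even-dimensional half of the global inequality is not established by your argument; a self-contained proof needs a genuinely different treatment of the nonlocal factor, which is precisely what the citation to~\cite{function-space-relu} supplies in the paper.
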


The proof of \cref{thm:sobolev-embedding} appears in \cref{app:sobolev-embedding}. We also remark that in order to generalize \cref{thm:sobolev-embedding} to more general bounded domains $\Omega \subset \R^d$ requires that the boundary of $\Omega$ is sufficently nice. It suffices that $\Omega$ has Lipschitz boundary.

\subsection{Observations} \label{sec:RBV-mixed-variation}
The result of \cref{thm:sobolev-embedding} says that very regular functions (those with $d + 1$ derivatives in either $L^1(\B_1^d)$ or $L^2(\B_1^d)$) are contained in $\RBV^2(\B_1^d)$. On the other hand, functions that are not very regular are also in $\RBV^2(\B_1^d)$.
For example, take any univariate function $g \in H^2(\R)$ and use it as the profile of a ridge function
\begin{equation}
    f(\vec{x}) = g(\vec{w}^\T\vec{x}), \quad \vec{x} \in \B_1^d,
    \label{eq:sobolev-ridge}
\end{equation}
where $\vec{w} \in \Sph^{d-1}$. If $g$ has only has two weak derivatives, then the function $f$ is in $\RBV^2(\B_1^d)$ and $H^{2}(\B_1^d)$, but not in $H^{d+1}(\B_1^d)$. Although this function may not be very regular, it only varies in the direction $\vec{w} \in \Sph^{d-1}$. This shows that $\RBV^2(\B_1^d)$ can be viewed as a \emph{mixed variation} space~\cite{donoho2000high} in that it includes highly regular functions that are very isotropic, e.g., functions from the Sobolev space $H^{d+1}(\B_1^d)$ or less regular functions that are highly anisotropic, e.g., the ridge function in \cref{eq:sobolev-ridge}.
\section{Approximation Rates in \texorpdfstring{$\RBV^2(\Omega)$}{RBV2(Omega)}} \label{sec:approximation}
A well-known result in approximation theory, first due to Maurey and Pisier~\cite{pisier1981remarques}, is that given a dictionary of atoms contained in a Hilbert space $\mathcal{H}$, the closure (with respect to the topology of $\mathcal{H}$) of the convex, symmetric hull of the dictionary is \emph{immune to the curse of dimensionality}~\cite{pisier1981remarques, jones1992simple, uat4, devore1996some, barron2008approximation}. This means that given a function $f$ in the closure of the convex, symmetric hull of the dictionary, there exists a $K$-term superposition of atoms from the dictionary $f_K$ such that $\norm{f - f_K}_\mathcal{H} \lesssim K^{-1/2}$, which does not depend on the input dimension of the function. This fact was fundamental to the approximation rates (which do not grow with the input dimension) derived for functions belonging to the spectral Barron spaces (first studied by Barron in~\cite{uat4}). 

It turns out that the unit-ball in the variation spaces of shallow neural networks can be characterized by the closure of the convex, symmetric hull of a dictionary of neural activation functions and are therefore also immune to the curse of dimensionality~\cite{convex-nn,siegel2021characterization}. We will use results from~\cite{convex-nn,siegel2021characterization} to readily derive approximation rates for functions in $\RBV^2(\Omega)$ that are immune to the curse of dimensionality. For simplicity we will suppose that $\Omega = \B_1^d$ as defined in \cref{eq:Euclidean-ball}. Similar results as those stated in the sequel can be derived for more general bounded domains $\Omega \subset \R^d$.
\begin{theorem} \label{thm:approximation}
    Given $f \in \RBV^2(\B_1^d)$, there exists a shallow ReLU network (with a skip connection) with $K$ neurons of the form in \cref{eq:ridge-spline-domain}, denoted $f_K$, such that
    \[
        \norm{f - f_K}_{L^\infty(\B_1^d)} \lesssim_d \RTV^2_{\B_1^d}(f) \, K^{-\frac{d + 3}{2d}}.
    \]
\end{theorem}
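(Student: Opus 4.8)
The plan is to reduce the claim to an established $K$-term approximation rate for the variation space $\V^2(\B_1^d)$ and then transport it to $\RBV^2(\B_1^d)$ using \Cref{thm:RBV-variation-space}.

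First I would represent $f$ using the integral form from \Cref{rem:identification}, writing $f = f_\circ + a$ where $f_\circ(\vec{x}) = \int_{\Sph^{d-1} \times [-1,1]} \rho(\vec{w}^\T\vec{x} - b)\,\dd\mu(\vec{w},b)$ is the ridge part and $a(\vec{x}) = \vec{c}^\T\vec{x} + c_0$ is the affine (null-space) part. The affine part lies in the null space of $\RTV^2_{\B_1^d}(\dummy)$ and is reproduced \emph{exactly} by the skip connection in \cref{eq:ridge-spline-domain}, so it contributes no approximation error; the task therefore reduces to approximating $f_\circ$ by a $K$-term ReLU sum. By \Cref{lemma:RBV-domain-identification}, the representing measure may be taken supported on $Z_{\B_1^d} = \Sph^{d-1}\times[-1,1]$ with $\norm{\mu}_{\M} = \RTV^2_{\B_1^d}(f)$, which guarantees that the atoms used below satisfy $\vec{w}_k \in \Sph^{d-1}$ and $b_k \in [-1,1]$, matching the prescribed form of \cref{eq:ridge-spline-domain}.

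Second, I would invoke the approximation rates for the variation space established in \cite{convex-nn,siegel2021characterization}. These results show that a function synthesized from a measure $\mu$ with $\norm{\mu}_{\M} \le 1$ admits a $K$-term approximation by atoms of the ReLU dictionary with $L^\infty(\B_1^d)$-error $\lesssim_d K^{-\frac{d+3}{2d}}$. This rate improves on the generic Maurey--Jones--Barron rate $K^{-1/2}$ by exploiting the smoothness of the dictionary indexed by the $d$-dimensional manifold $\Sph^{d-1}\times[-1,1]$; indeed the exponent splits as $\frac{d+3}{2d} = \frac{1}{2} + \frac{3}{2d}$, where the gain $\frac{3}{2d}$ reflects the first-order smoothness of the ReLU together with the spectral decay associated with the sphere. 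Since the approximation error scales linearly in $\norm{\mu}_{\M}$, homogeneity gives $\norm{f_\circ - (f_\circ)_K}_{L^\infty(\B_1^d)} \lesssim_d \norm{\mu}_{\M}\, K^{-\frac{d+3}{2d}} = \RTV^2_{\B_1^d}(f)\, K^{-\frac{d+3}{2d}}$. Setting $f_K = (f_\circ)_K + a$ then produces a $K$-neuron shallow ReLU network with skip connection of the form in \cref{eq:ridge-spline-domain} achieving the stated bound.

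I expect the genuine mathematical content to reside entirely in the imported rate of the second step, so the principal difficulty is reconciliation and bookkeeping rather than new analysis. In particular, one must match the normalization conventions of the present $\RBV^2/\V^2$ framework with those of \cite{convex-nn,siegel2021characterization} — most notably the offset range ($[-1,1]$ here versus $[-2,2]$ in the definition of $\V^2(\B_1^d)$) and the dimensional constant $c_d$ entering \cref{eq:RTV} — and confirm that the cited rate holds in $L^\infty(\B_1^d)$ rather than only in the more commonly quoted $L^2$ metric. One should also verify that restricting the offsets to $[-1,1]$, rather than allowing the full line, does not degrade the rate, which is precisely what \Cref{lemma:RBV-domain-identification} secures by placing $\supp\mu$ in $Z_{\B_1^d}$.
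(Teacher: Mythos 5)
Your proposal is correct and follows essentially the same route as the paper's proof: both use the integral representation from \cref{rem:identification} (backed by \cref{lemma:RBV-domain-identification}), reproduce the affine part exactly via the skip connection, and import the $L^\infty(\B_1^d)$ rate $K^{-\frac{d+3}{2d}}$ for dictionary approximation from the variation-space literature together with the identity $\norm{\mu}_{\M} = \RTV^2_{\B_1^d}(f)$. The only cosmetic difference is that the paper cites \cite{MatouvsekZonotope} and \cite[Proposition~1]{convex-nn} for the imported rate, while you also flag the normalization bookkeeping explicitly.
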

\begin{proof}
    Given $f \in \RBV^2(\B_1^d)$, we have from \cref{rem:identification} the representation
    \[
        f(\vec{x}) = \int_{\Sph^{d-1} \times [-1, 1]} \rho(\vec{w}^\T\vec{x} - b) \dd \mu(\vec{w}, b) + \vec{c}^\T\vec{x} + c_0.
    \]
    It is known that the integral in the above display can be approximated in $L^\infty(\B_1^d)$ by a superposition of $K$ ReLU neurons of the form $\vec{x} \mapsto \rho(\vec{w}^\T\vec{x} - b)$, $\vec{w} \in \Sph^{d-1}$ and $b \in [-1, 1]$, denoted $\tilde{f}_K$, with an approximation rate of
    \begin{align*}
        &\norm{\int_{\Sph^{d-1} \times [-1, 1]} \rho(\vec{w}^\T(\dummy) - b) \dd\mu(\vec{w}, b) - \tilde{f}_K}_{L^\infty(\B_1^d)} \\
        &\qquad \lesssim_d \norm{\mu}_{\M(\Sph^{d-1} \times [-1, 1])} \, K^{-\frac{d + 3}{2d}},
    \end{align*}
    We refer the reader to~\cite{MatouvsekZonotope} and~\cite[Proposition~1]{convex-nn} for this fact. Next, since $\norm{\mu}_{\M(\Sph^{d-1} \times [-1, 1])} = \RTV^2_{\B_1^d}(f)$, the result follows by choosing $f_K(\vec{x}) \coloneqq \tilde{f}_K(\vec{x}) + \vec{c}^\T\vec{x} + c_0$.
\end{proof}
\begin{remark} \label{rem:approximation-lower-bound}
    The approximation rate in \cref{thm:approximation} cannot be improved. We refer the reader to~\cite{siegel} for approximation lower bounds in the variation spaces of shallow neural networks. We also remark that since \cref{thm:approximation} holds in $L^\infty(\B_1^d)$, it also holds for any $L^p(\B_1^d)$, $1 \leq p < \infty$, where the implicit constant will depend on $d$ and $p$.
\end{remark}
\begin{remark}
     As $d \to \infty$, \cref{thm:approximation,rem:approximation-lower-bound} says that the approximation rate is $K^{-1/2}$ and is therefore immune to the curse of dimensionality.
\end{remark}

\section{Function Estimation in \texorpdfstring{$\RBV^2(\Omega)$}{RBV2(Omega)}} \label{sec:estimation}

In this section we will consider the usual setup of nonparametric regression in the \emph{fixed design} setting. Consider the problem of estimating a function $f \in \RBV^2(\Omega)$ from the noisy samples
\[
    y_n = f(\vec{x}_n) + \varepsilon_n, \: n = 1, \ldots, N,
\]
where $\curly{\varepsilon_n}_{n=1}^N$ are i.i.d. $\N(0, \sigma^2)$ random variables and $\curly{\vec{x}_n}_{n=1}^N \subset \Omega$ are fixed, but \emph{scattered}, design points. For simplicity we will suppose that $\Omega = \B_1^d$ as defined in \cref{eq:Euclidean-ball}. Similar results as those stated in the sequel can be derived for more general bounded domains $\Omega \subset \R^d$. 

\begin{theorem} \label{thm:mse-bound}
    Consider the problem of estimating a function $f \in \RBV^2(\B_1^d)$ such that $\RTV^2_{\B_1^d}(f) \leq C$ from the noisy samples
    \[
        y_n = f(\vec{x}_n) + \varepsilon_n, \: n = 1, \ldots, N,
    \]
    where $\curly{\varepsilon_n}_{n=1}^N$ are i.i.d. $\N(0, \sigma^2)$ random variables and $\curly{\vec{x}_n}_{n=1}^N \subset \B_1^d$ are fixed design points. Then, any solution to the variational problem
    \begin{equation}
        \hat{f} \in \argmin_{f \in \RBV^2(\B_1^d)} \: \sum_{n=1}^N \abs{y_n - f(\vec{x}_n)}^2 \:\subj\: \RTV^2_{\B_1^d}(f) \leq C
        \label{eq:estimator-variational}
    \end{equation}
    has a mean-squared error bound of
    \begin{equation}
        \E\sq{\frac{1}{N} \sum_{n=1}^N \abs*{f(\vec{x}_n) - \hat{f}(\vec{x}_n)}^2} \lesssim_d \tilde{O}\paren{ C^{\frac{2d}{2d + 3}} \paren{\frac{N}{\sigma^2}}^{-\frac{d + 3}{2d + 3}}},
        \label{eq:empirical-mse-bound}
    \end{equation}
    where $\tilde{O}(\dummy)$ hides universal constants and logarithmic factors, where the only random variables in the expectation above are the noise terms $\curly{\varepsilon_n}_{n=1}^N$.
\end{theorem}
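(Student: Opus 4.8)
The plan is to treat this as a standard nonparametric regression problem and control the in-sample mean-squared error by empirical process theory. The three ingredients are: (i) a \emph{basic inequality} coming from the optimality of the constrained least-squares estimator; (ii) a bound on the metric entropy of the $\RTV^2_{\B_1^d}$-ball measured in the empirical $L^2$ norm; and (iii) a localized chaining (Dudley entropy integral) argument that converts the entropy bound into a rate. The dependence on the input dimension $d$ will enter only through the exponent of the metric entropy, which is dictated by the approximation rate of \cref{thm:approximation}.

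First I would record the basic inequality. Writing $\norm{g}_N^2 \coloneqq \frac{1}{N}\sum_{n=1}^N \abs{g(\vec{x}_n)}^2$ for the empirical seminorm, feasibility of the true $f$ (as $\RTV^2_{\B_1^d}(f) \le C$) together with the optimality of $\hat f$ in \cref{eq:estimator-variational} yields, after expanding $y_n = f(\vec{x}_n) + \varepsilon_n$,
\[
    \norm{\hat f - f}_N^2 \le \frac{2}{N} \sum_{n=1}^N \varepsilon_n \paren{\hat f(\vec{x}_n) - f(\vec{x}_n)}.
\]
Since $\hat f$ and $f$ both lie in the $C$-ball, the difference $h = \hat f - f$ satisfies $\RTV^2_{\B_1^d}(h) \le 2C$, so the right-hand side is dominated by the supremum of the Gaussian process $h \mapsto \frac{2}{N}\sum_n \varepsilon_n h(\vec{x}_n)$ over the localized set $\curly{h : \RTV^2_{\B_1^d}(h) \le 2C, \, \norm{h}_N \le \delta}$. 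This localized Gaussian complexity is exactly what the chaining step controls.

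The crux is the metric entropy bound, and this is where I expect the main difficulty. Let $\mathcal{F}_C = \curly{f : \RTV^2_{\B_1^d}(f) \le C}$. By \cref{thm:approximation}, every $f \in \mathcal{F}_C$ is approximated in $L^\infty(\B_1^d)$, hence in $\norm{\dummy}_N$, to accuracy $\epsilon$ by a superposition of $K \lesssim_d (C/\epsilon)^{\frac{2d}{d+3}}$ ReLU atoms $\vec{x} \mapsto \rho(\vec{w}^\T\vec{x} - b)$ with $\vec{w} \in \Sph^{d-1}$, $b \in [-1,1]$, whose coefficients obey an $\ell^1$ budget $\lesssim C$. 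As the atom dictionary is a $(d+1)$-parameter family of uniformly bounded functions, a net of the dictionary has $\sim \epsilon^{-(d+1)}$ elements; quantizing the $K$ chosen atoms over this net and the coefficient vector over an $\ell^1$-ball covers $\mathcal{F}_C$ to empirical accuracy $\epsilon$. Counting these configurations gives
\[
    \log N\paren{\epsilon, \mathcal{F}_C, \norm{\dummy}_N} \lesssim_d \paren{\frac{C}{\epsilon}}^{\frac{2d}{d+3}} \log\frac{1}{\epsilon},
\]
so the entropy exponent is $\alpha = \frac{2d}{d+3}$. The care required is in the combinatorial bookkeeping of the atom-subset and coefficient quantization and in tracking the $C$- and $\log$-dependence; the logarithmic factor is precisely what the $\tilde O(\dummy)$ in \cref{eq:empirical-mse-bound} absorbs. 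Alternatively one may cite known convex-hull metric entropy estimates for dictionaries of bounded VC-subgraph type.

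Finally I would invoke a standard localized empirical-process bound for constrained least squares (the critical-radius arguments of van de Geer or Wainwright): the in-sample MSE is controlled by $\delta_N^2$, where $\delta_N$ solves the fixed-point balance
\[
    \frac{\sigma}{\sqrt N} \int_0^{\delta} \sqrt{\log N\paren{u, \mathcal{F}_C, \norm{\dummy}_N}}\, \dd u \asymp \delta^2.
\]
Since $\alpha = \frac{2d}{d+3} < 2$, the entropy integral converges and equals $\lesssim_d C^{\alpha/2}\,\delta^{1-\alpha/2}$ up to logarithmic factors, whence $\delta_N \asymp_d C^{\frac{\alpha}{2+\alpha}} (\sigma/\sqrt N)^{\frac{2}{2+\alpha}}$. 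Substituting $\alpha = \frac{2d}{d+3}$, so that $\frac{2}{2+\alpha} = \frac{d+3}{2d+3}$ and $\frac{2\alpha}{2+\alpha} = \frac{2d}{2d+3}$, gives
\[
    \delta_N^2 \asymp_d C^{\frac{2d}{2d+3}} \paren{\frac{N}{\sigma^2}}^{-\frac{d+3}{2d+3}},
\]
which is exactly \cref{eq:empirical-mse-bound} up to the suppressed logarithmic factors. The affine null-space component of $h = \hat f - f$, being $(d+1)$-dimensional, contributes only a lower-order parametric $O(\sigma^2/N)$ term once localized and is dominated by the rate above.
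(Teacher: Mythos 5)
Your proposal is correct and follows essentially the same route as the paper: a localized least-squares/critical-radius argument (the paper invokes the general result from Wainwright's Chapter~13 that you re-derive via the basic inequality) driven by a metric entropy bound of order $(C/t)^{\frac{2d}{d+3}}$, up to logarithmic factors, for the $\RTV^2_{\B_1^d}$-ball in the empirical norm, which is in turn extracted from the $L^\infty(\B_1^d)$ approximation rate of \cref{thm:approximation}, and the same fixed-point computation yielding $\delta_N^2 \asymp_d C^{\frac{2d}{2d+3}}(N/\sigma^2)^{-\frac{d+3}{2d+3}}$. The only cosmetic difference is that you sketch the entropy bound through an explicit atom-and-coefficient quantization argument, whereas the paper obtains it by citing a Carl's-inequality-type result of Siegel; both give the same exponent and the same final rate.
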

\begin{remark}
    Notice that as $d \to \infty$, we have that $C^{\frac{2d}{2d + 3}} \to C$ and so the bound scales linearly with the constant $C$.
\end{remark}
The proof of \cref{thm:mse-bound} follows standard techniques (see, e.g.,~\cite[Chapter~9]{geer2000empirical} or~\cite[Chapter~13]{wainwright2019high}) based on the metric entropy of the model class
\begin{equation}
    \curly{f \in \RBV^2(\B_1^d) \st \RTV^2_{\B_1^d}(f) \leq C}
    \label{eq:model-class}
\end{equation}
with respect to the \emph{empirical} $L^2$-norm defined with respect to the sampling locations $\curly{\vec{x}_n}_{n=1}^N$
\begin{equation}
    \norm{f}_N^2 \coloneqq \frac{1}{N} \sum_{n=1}^N \abs*{f(\vec{x}_n)}^2.
    \label{eq:empirical-norm}
\end{equation}
We use our approximation rate in \cref{sec:approximation} to upper bound this metric entropy. The proof of \cref{thm:mse-bound} appears in \cref{app:mse-bound}.

\begin{remark}
    Computing an estimator that satisfies the bound in \cref{eq:empirical-mse-bound} requires finding a solution to the variational problem in \cref{eq:estimator-variational}. By \cref{thm:rep-thm-domain}, one can find a solution to the variational problem by training a sufficiently wide shallow ReLU network via gradient descent with weight decay (to a global minimizer). This is the same as finding a solution to the the non-convex neural network training problem in  \cref{eq:weight-decay}, where, by Lagrange calculus, the choice of $\lambda$ depends on $C$ and the data through the data-fitting term. An alternative approach would be to the use greedy algorithms (also known as Frank--Wolfe algorithms)~\cite{frank1956algorithm,jones1992simple,convex-nn,siegel2022optimal}.
\end{remark}

\begin{remark}
    Since when $d = 1$, $\RBV^2(\B_1^d)$ is exactly the space $\BV^2[-1, 1]$ (see the discussion in \cref{sec:bounded-domain}), the result of \cref{thm:mse-bound} recovers the well-known mean-squared error rate of $N^{-4/5}$ of locally adaptive linear spline estimators~\cite{locally-adaptive-regression-splines}.
\end{remark}

The result of \cref{thm:mse-bound} can be extended from the fixed design setting to the random design setting using standard techniques (see, e.g.,~\cite[Chapter~14]{wainwright2019high}). In particular, assuming the design points $\curly{\vec{x}_n}_{n=1}^N \subset \B_1^d$ are i.i.d. uniform random variables on $\B_1^d$, we can use the techniques outlined in~\cite[Chapter~14]{wainwright2019high} to derive the same mean-squared error rate (for sufficiently large $N$) with respect to $\norm{\dummy}_{L^2(\B_1^d; \P_X)}$, where $\P_X$ denotes the uniform probability measure on $\B_1^d$. 
This follows from the fact that the empirical norm $\norm{\dummy}_N$ concentrates to the population norm $\norm{\dummy}_{L^1(\B_1^d; \P_X)}$ at the same rate as the right-hand side of \cref{eq:empirical-mse-bound}~\cite[Chapter~14, Corollary~14.15]{wainwright2019high}. Therefore, we have the following corollary to \cref{thm:mse-bound}.
\begin{corollary} \label{cor:mse}
    Consider the problem of estimating a function $f: \B_1^d \to \R$ satisfying $\RTV^2_{\B_1^d}(f) \leq C$
    \[
        y_n = f(\vec{x}_n) + \varepsilon_n, \: n = 1, \ldots, N,
    \]
    where $\curly{\varepsilon_n}_{n=1}^N$ are i.i.d. $\N(0, \sigma^2)$ random variables and $\curly{\vec{x}_n}_{n=1}^N \subset \B_1^d$ are i.i.d. uniform random variables on $\B_1^d$. Then, for sufficiently large $N$, any solution to the variational problem
    \[
        \hat{f} \in \argmin_{f \in \RBV^2(\B_1^d)} \: \sum_{n=1}^N \abs{y_n - f(\vec{x}_n)}^2 \:\subj\: \RTV^2_{\B_1^d}(f) \leq C
    \]
    has a mean-squared error bound of
    \begin{align*}
        &\E\norm*{f - \hat{f}}_{L^2(\B_1^d; \P_X)}^2  \\
        &\qquad\qquad\lesssim_d \tilde{O}\paren{C^{\frac{2d}{2d + 3}} \paren{\frac{N}{\sigma^2}}^{-\frac{d + 3}{2d + 3}} + \paren{\frac{N}{C'}}^{-\frac{d+3}{2d + 3}}},
    \end{align*}
    where $\tilde{O}(\dummy)$ hides universal constants and logarithmic factors, $C'>0$ is a constant that depends on $C$, and $\P_X$ denotes the uniform probability measure on $\B_1^d$.
\end{corollary}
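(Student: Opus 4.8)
The plan is to transfer the fixed-design guarantee of \cref{thm:mse-bound}, which is stated in the empirical norm $\norm{\dummy}_N$ of \cref{eq:empirical-norm}, to the population norm $\norm{\dummy}_{L^2(\B_1^d;\P_X)}$ by controlling the discrepancy between the two norms uniformly over the model class. First I would note that the bound in \cref{eq:empirical-mse-bound} holds for \emph{arbitrary} fixed design points in $\B_1^d$; hence, conditioning on the (now random) design and taking expectations leaves the first term $C^{\frac{2d}{2d+3}}(N/\sigma^2)^{-\frac{d+3}{2d+3}}$ intact. The substance of the corollary is therefore the passage from $\E\norm{f-\hat f}_N^2$ to $\E\norm{f-\hat f}_{L^2(\B_1^d;\P_X)}^2$, which is exactly what produces the extra $(N/C')^{-\frac{d+3}{2d+3}}$ term.

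To effect this passage I would invoke the localized-complexity machinery for uniformly bounded function classes (\cite[Chapter~14]{wainwright2019high}). The relevant inequality guarantees that, on a high-probability event, $\norm{g}_{L^2(\B_1^d;\P_X)}^2 \le 2\norm{g}_N^2 + c\,\delta_N^2$ holds uniformly over every $g$ in the star-shaped hull of the difference class $\curly{f_1-f_2 \st \RTV^2_{\B_1^d}(f_i)\le C}$, where $\delta_N$ is the \emph{critical radius} fixed by balancing the localized Gaussian complexity of the class against $\delta^2$. Taking $g=f-\hat f$ and averaging over the noise and design then yields
\[
    \E\norm{f-\hat f}_{L^2(\B_1^d;\P_X)}^2 \lesssim \E\norm{f-\hat f}_N^2 + \delta_N^2,
\]
and the first term on the right is already controlled by \cref{thm:mse-bound}.

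It remains to evaluate $\delta_N$, which I would do through a Dudley entropy integral against the metric entropy of the model class in \cref{eq:model-class}. This entropy estimate is precisely what the approximation rate of \cref{thm:approximation} supplies: any $f$ with $\RTV^2_{\B_1^d}(f)\le C$ is approximated in $L^\infty(\B_1^d)$ to accuracy $\epsilon$ by a network with $K \asymp (C/\epsilon)^{\frac{2d}{d+3}}$ neurons, so (up to logarithmic factors) $\log\mathcal{N}(\epsilon) \lesssim_d (C/\epsilon)^{\frac{2d}{d+3}}$. Writing $\alpha \coloneqq \frac{2d}{d+3}$, the critical inequality for entropy growing like $\epsilon^{-\alpha}$ gives $\delta_N^2 \asymp (N/C')^{-\frac{2}{2+\alpha}}$, and the identity $\frac{2}{2+\alpha}=\frac{d+3}{2d+3}$ reproduces the second term; this is the same exponent arithmetic that yields the minimax rate.

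The hard part will be establishing the uniform two-sided norm comparison with the correct dependence on $\delta_N$, and this rests on two ingredients that must be verified rather than quoted. The first is \emph{uniform boundedness}: using the integral representation of \cref{rem:identification}, each neuron $\rho(\vec{w}^\T\vec{x}-b)$ is bounded by $2$ on $\B_1^d$ (since $\norm{\vec{x}}_2\le 1$, $\vec{w}\in\Sph^{d-1}$, and $b\in[-1,1]$), so the nonlinear part is bounded by $2C$; the affine (null-space) component is not controlled by $C$ alone and must be pinned down separately, e.g.\ by restricting attention to the difference $f-\hat f$, whose affine part is constrained by the data-fitting term. The second is the condition $\alpha=\frac{2d}{d+3}<2$, which holds for every $d\ge1$ and ensures the entropy integral converges so the localization argument applies; the ``sufficiently large $N$'' hypothesis guarantees both that this high-probability event dominates and that the critical-radius fixed point lies in its valid regime.
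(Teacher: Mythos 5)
Your proposal is correct and follows essentially the same route as the paper: the paper likewise keeps the fixed-design bound of \cref{thm:mse-bound} and transfers it to the population norm by citing the localized uniform law of \cite[Chapter~14, Corollary~14.15]{wainwright2019high}, with the critical radius governed by the same metric entropy, which is exactly what produces the additional $(N/C')^{-\frac{d+3}{2d+3}}$ term. You spell out the critical-radius arithmetic and the uniform-boundedness check that the paper leaves implicit, but the argument is the same.
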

\begin{remark}
    \Cref{cor:mse} also provides an upper bound on the \emph{sampling number} for the $\RBV^2(\B_1^d)$ model class when $\sigma \to 0$. We refer the reader to~\cite{binev2022optimal} for a precise definition of sampling numbers for model classes.
\end{remark}

The following theorem shows that this mean-squared error rate cannot be improved. In other words, the rate in \cref{thm:mse-bound} is (up to logarithmic factors) minimax optimal.

\begin{theorem} \label{thm:minimax}
    Consider the problem of estimating a function $f: \B_1^d \to \R$ satisfying $\RTV^2_{\B_1^d}(f) \leq C$
    from the noisy samples
    \[
        y_n = f(\vec{x}_n) + \varepsilon_n, \: n = 1, \ldots, N,
    \]
    where $\curly{\varepsilon_n}_{n=1}^N$ are i.i.d. $\N(0, \sigma^2)$ random variables. Then, we have the following minimax lower bound
    \[
        \inf_{\hat{f}} \sup_{\substack{f \in \RBV^2(\B_1^d) \\ \RTV^2_{\B_1^d}(f) \leq C}} \E\norm*{f - \hat{f}}_{L^2(\B_1^d; \P_X)}^2 \gtrsim_d \paren{\frac{N}{\sigma^2}}^{-\frac{d + 3}{2d + 3}},
    \]
    where the $\inf$ is over all functions of the data and $\P_X$ denotes the uniform probability measure on $\B_1^d$.
\end{theorem}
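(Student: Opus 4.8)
The plan is to reduce the estimation problem to a multiple-hypothesis test and apply Fano's inequality, the only nonstandard ingredient being a matching \emph{lower} bound on the metric entropy of the model class. Recall that the upper bound of \cref{thm:mse-bound} rests on controlling the metric entropy of $\curly{f \in \RBV^2(\B_1^d) \st \RTV^2_{\B_1^d}(f) \leq C}$ in an empirical $L^2$-norm, which the approximation rate of \cref{thm:approximation} shows behaves like $\log M(\epsilon) \lesssim_d \epsilon^{-\frac{2d}{d+3}}$ (up to logarithmic factors). The target rate $(\sigma^2/N)^{\frac{d+3}{2d+3}}$ is exactly the balance point $\epsilon^2 \asymp \frac{\sigma^2}{N}\log M(\epsilon)$ of the standard Fano/Yang--Barron machinery, so it suffices to prove the reverse inequality $\log M(\epsilon) \gtrsim_d \epsilon^{-\frac{2d}{d+3}}$ in a \emph{local packing} form and feed it into the reduction (see, e.g.,~\cite{wainwright2019high}). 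First I would take the design $\curly{\vec{x}_n}$ to be i.i.d.\ uniform on $\B_1^d$, so that under Gaussian noise the Kullback--Leibler divergence between the data laws generated by two candidates $f,g$ is exactly $\frac{N}{2\sigma^2}\norm{f-g}_{L^2(\B_1^d; \P_X)}^2$; this turns the separation/indistinguishability trade-off into a purely geometric statement about $L^2(\P_X)$-distances inside the model class.

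The heart of the argument is the packing construction. I would work at a single resolution $h > 0$ and build atoms from compactly supported ``hat'' ridge functions: let $\psi$ be a univariate second-order spline bump supported on an interval of length $\asymp h$ with height $a$, so that $\TV^2(\psi) \asymp a/h$, and place its ridge liftings $\phi_j(\vec{x}) = \psi(\vec{w}_j^\T \vec{x} - b_j)$ at the $m \asymp h^{-d}$ points $(\vec{w}_j, b_j)$ of an $h$-net of $\Sph^{d-1} \times [-1,1]$. Since $\partial_t^2 \ramp^{d-1}\RadonOp \phi_j$ is supported near $(\vec{w}_j, b_j)$, the atoms occupy disjoint regions of the (filtered) Radon domain, so $\RTV^2_{\B_1^d}(\sum_j \omega_j \phi_j) = \sum_j \abs{\omega_j}\TV^2(\psi) \asymp m\,(a/h)$ for every sign pattern $\omega \in \curly{\pm 1}^m$; requiring this to be $\leq C$ fixes $a \asymp h^{d+1}$ (taking $C \asymp 1$). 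Each atom lives on a slab of volume $\asymp h$ on which $\abs{\phi_j} \asymp a$, whence $\norm{\phi_j}_{L^2(\P_X)}^2 \asymp_d a^2 h \eqqcolon \beta^2$.

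For the family $f_\omega = \sum_j \omega_j \phi_j$ one then has $\RTV^2_{\B_1^d}(f_\omega) \leq C$, while a Varshamov--Gilbert argument provides a subfamily of cardinality $\exp(cm)$ for some $c > 0$ whose members have pairwise Hamming distance $\gtrsim m$; \emph{provided the atoms obey a lower Riesz bound}, this translates into pairwise $L^2(\P_X)$-separation $\gtrsim \sqrt{m}\,\beta$ (and, via an upper Riesz bound, separation $\lesssim \sqrt m\,\beta$ as well). Setting $\beta^2 \asymp \sigma^2/N$, so that each neighboring KL contribution is $O(1)$ and every pairwise KL is $O(m) = O(\log M)$, forces $a^2 h \asymp \sigma^2/N$; combined with $a \asymp h^{d+1}$ this yields $h \asymp (\sigma^2/N)^{1/(2d+3)}$, a packing of log-cardinality $m \asymp h^{-d} \asymp (N/\sigma^2)^{d/(2d+3)}$, and separation $\delta^2 \asymp m\beta^2 \asymp (\sigma^2/N)^{(d+3)/(2d+3)}$. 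Feeding these quantities into Fano's inequality gives $\inf_{\hat f} \sup_f \E\norm{f - \hat f}_{L^2(\P_X)}^2 \gtrsim_d \delta^2 \asymp (N/\sigma^2)^{-\frac{d+3}{2d+3}}$, as claimed.

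The step I expect to be the main obstacle is precisely the lower Riesz-type bound $\norm{\sum_j \omega_j \phi_j - \sum_j \omega'_j \phi_j}_{L^2(\P_X)}^2 \gtrsim d_H(\omega,\omega')\,\beta^2$ uniformly over the packing, because the atoms are far from orthogonal: two ridge hats with transverse directions overlap on the intersection of their slabs, and each atom meets $\asymp h^{-d}$ others, so the off-diagonal mass of the Gram matrix dominates its diagonal in aggregate. The saving grace is that individual overlaps are signed and small — an intersection of two width-$h$ slabs at a generic angle has volume $\asymp h^2$, so $\abs{\langle \phi_i, \phi_j\rangle} \asymp a^2 h^2 = \beta^2 h$ — so with random signs the quadratic form $\norm{\sum_j \omega_j \phi_j}_{L^2(\P_X)}^2$ concentrates around its mean $m\beta^2$ with fluctuations of relative size $O(h)$. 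Upgrading this concentration to a uniform statement over an exponentially large packing (via Hanson--Wright-type tail bounds together with a careful union/selection argument, or equivalently a direct lower bound on the local metric entropy that bypasses near-orthogonality) is the delicate, dimension-sensitive part; it is exactly this overlap accounting that produces the exponent $\frac{2d}{d+3}$ and certifies that the rate of \cref{thm:mse-bound} cannot be improved.
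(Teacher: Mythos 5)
Your overall strategy---reduce to a multiway test and apply Fano with an explicit packing of ridge atoms---is a legitimate route to this lower bound, and your arithmetic is consistent: the budget $m\,(a/h)\leq C$ with $m \asymp h^{-d}$, $\norm{\phi_j}_{L^2(\P_X)}^2 \asymp a^2 h$, and the balance $a^2h \asymp \sigma^2/N$ do reproduce the exponent $\frac{d+3}{2d+3}$. But the step you yourself flag as the obstacle is a genuine, unclosed gap, and your proposed repair does not close it. The off-diagonal Gram entries $\abs{\langle\phi_i,\phi_j\rangle} \asymp a^2h^2$ are individually small, but each atom overlaps $\asymp h^{-d}$ others, so the operator norm of the off-diagonal part of the Gram matrix is of order $a^2 h^{2-d}$, which dwarfs the diagonal $a^2h$ for $d\geq 2$. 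A Hanson--Wright bound for the quadratic form $v^\T G v$ with $v = \omega-\omega'$ then gives a tail exponent of order $t/\norm{O}_{\mathrm{op}} \asymp h^{-1}$ at the required deviation level $t \asymp m a^2 h$, whereas a union bound over the $e^{cm}$ pairs of a Varshamov--Gilbert code demands a tail exponent of order $m \asymp h^{-d}$. So for $d \geq 2$ the uniform lower Riesz bound does not follow from the concentration argument you sketch, and without it the pairwise separation $\gtrsim \sqrt{m}\,\beta$ --- the quantity Fano actually needs --- is unproven. This is not a technicality: near-orthogonality of large families of ridge atoms is exactly the hard combinatorial-geometric content of entropy lower bounds for these classes.

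The paper takes a different route that sidesteps this entirely. It invokes the Yang--Barron information-theoretic machinery, which converts a \emph{two-sided global metric entropy} estimate $\log \N(t,\F,\norm{\dummy}_{L^2(\B_1^d;\P_X)}) \asymp_d t^{-\frac{2d}{d+3}}$ directly into the minimax rate via $t_N^2 \asymp \log\N(t_N)/N$, without ever constructing a near-orthogonal local packing (the mutual information is bounded by a covering argument rather than by pairwise KL control). The required entropy bounds for the variation-space ball $\mathscr{G}_C$ are imported from Siegel's work (upper and lower), and the lower bound is transferred to the $\RTV^2_{\B_1^d}$ ball via the containment $\mathscr{G}_C \subset \F_C$, proved by comparing the integral representations over $\Sph^{d-1}\times[-2,2]$ and $\Sph^{d-1}\times[-1,1]$. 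If you want to keep your self-contained construction, the missing ingredient is precisely the packing-number lower bound established in that reference; otherwise you must either prove the Riesz-type separation for your specific atom family (which requires a much more careful selection of directions than an arbitrary $h$-net) or switch to the global-entropy formulation.
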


The proof of \cref{thm:minimax} invokes a general result of Yang and Barron~\cite{yang1999information} regarding minimax rates over model classes. Invoking the result involves bounds on the $L^2(\B_1^d; \P_X)$-metric entropy of the model class in \cref{eq:model-class}. We can readily bound this metric entropy due to recent results which tightly bound the metric entropy of model classes in the variation space $\V^2(\B_1^d)$ from~\cite{siegel}. The proof of \cref{thm:minimax} appears in \cref{app:minimax}.

\subsection{Breaking the Curse of Dimensionality}
When $d = 1$, \cref{thm:mse-bound,thm:minimax} recovers (up to logarithmic factors) the well-known minimax rate of $N^{-4/5}$ for $\BV^2[-1, 1]$ model classes~\cite{donoho1998minimax}. On the other hand, when $d \to \infty$, the rate approaches (up to logarithmic factors) $N^{-1/2}$, and is therefore immune to the curse of dimensionality. To understand why this is happening, we recall from \cref{sec:RBV-mixed-variation} that $\RBV^2(\B_1^d)$ can be viewed as a mixed variation space.

Classical folklore in nonparametric statistics says that the minimax rate for $H^k(\B_1^d)$ model classes is $N^{-\frac{2k}{2k + d}}$. From \cref{thm:sobolev-embedding}, we have that $H^{d+1}(\B_1^d) \subset \RBV^2(\B_1^d)$. The minimax rate for $H^{d+1}(\B_1^d)$ model classes is then $N^{-\frac{2d + 2}{3d + 2}}$. As $d \to \infty$, this rate is $N^{-2/3}$. Therefore, we see that the space $H^{d+1}(\B_1^d)$ is also immune to the curse of dimensionality, but estimating functions in $H^{d+1}(\B_1^d)$ is strictly easier than estimating functions in the larger $\RBV^2(\B_1^d)$ space. This is due to the fact that $\RBV^2(\B_1^d)$ is a mixed variation space that contains highly isotropically regular functions that belong to the Sobolev space $H^{d+1}(\B_1^d)$ as well as anistropic less regular functions such as the ridge function defined in \cref{eq:sobolev-ridge}, which may only have two weak derivatives.

These observations about $\RBV^2(\B_1^d)$ make it a compelling framework for high-dimensional nonparametric estimation. Moreover, the connections with shallow ReLU networks could also shed light on the empirical success of neural networks in practice: neural networks learn functions in spaces that are immune to the curse of dimensionality.

\section{Neural Networks vs.\ Linear Methods} \label{sec:nn-not-kernel}
In this section we will illustrate the idea that the estimator studied in \cref{sec:estimation} is \emph{locally adaptive} (a term coined by Donoho and Johnstone in~\cite{donoho1998minimax}) unlike more classical \emph{linear methods} (which include kernel methods~\cite{learning-with-kernels}). We will illustrate this both quantitatively via rates for function estimation as well as qualitatively via numerical experiments. For the problem of function estimation, a linear method is a method in which the estimator is a \emph{linear} function of the data $(y_1, \ldots, y_N)$, i.e., the estimator is computed via a linear map $T: \R^N \to \F$, where $\F$ is some model class and $T$ can depend on the design points $\curly{\vec{x}_n}_{n=1}^N$ in an arbitrary way. Due to the sparsity-promoting nature of the $\M$-norm used to define $\RTV^2_{\B_1^d}(\dummy)$, the estimator in \cref{thm:mse-bound} is a \emph{nonlinear} function of the data. This is analagous to  LASSO-type estimators arising from $\ell^1$-norm regularized problems, which are nonlinear estimators for discrete-domain problems.

\subsection{The Univariate Case} \label{sec:exp-univariate}
In the univariate case, we have from \cref{rem:1D-spaces} that the variational problem in \cref{eq:estimator-variational} reduces to the (regularized) variational problem
\begin{equation}
    \min_{f \in \BV^2[-1, 1]} \sum_{n=1}^N \abs*{y_n - f(x_n)}^2 + \lambda \, \norm*{\D^2 f}_{\M[-1, 1]},
    \label{eq:locally-adaptive-spline-variational}
\end{equation}
where $\lambda > 0$ is the regularization parameter.  The solutions are locally adaptive linear spline estimators~\cite{locally-adaptive-regression-splines}. It is known that the minimax rate for $\BV^2[-1, 1]$ model classes is $N^{-4/5}$~\cite{donoho1998minimax}, which is achieved by the locally adaptive linear spline estimator~\cite{locally-adaptive-regression-splines}. Moreover, when restricted to \emph{linear estimators}, the linear minimax rate is known to be $N^{-3/4}$~\cite{donoho1998minimax}, which is achieved (up to logarithmic factors) by the cubic smoothing spline estimator~\cite{splines-minimum,spline-rep-thm}.  The cubic smoothing spline is a solution to the variational problem
\begin{equation}
    \min_{f \in H^2[-1, 1]} \sum_{n=1}^N \abs*{y_n - f(x_n)}^2 + \lambda \, \norm*{\D^2 f}_{L^2[-1, 1]}^2,
    \label{eq:smoothing-spline-variational}
\end{equation}
where
\[
    H^2[-1, 1] \coloneqq \curly{f \in \mathscr{D}'[-1,1] \st \norm*{\D^2 f}_{L^2[-1, 1]} < \infty},
\]
is the second-order $L^2$-Sobolev space and $\mathscr{D}'[-1, 1]$ denotes the space of distributions (generalized functions) on $[-1, 1]$. Moreover, we have the strict containment $H^2[-1, 1] \subset \BV^2[-1, 1]$. The key difference between the problem in \cref{eq:locally-adaptive-spline-variational} and the problem in \cref{eq:smoothing-spline-variational} is the difference between the \emph{sparsity-promoting} $\M$-norm regularization in \cref{eq:locally-adaptive-spline-variational} and the $L^2$-norm regularization in \cref{eq:smoothing-spline-variational}. This is analogous to the difference between $\ell^1$-norm and $\ell^2$-norm regularization in discrete-domain problems.

The main takeaway message here is that this difference \emph{quantifies} a fundamental gap between neural network estimators and any linear/kernel estimator; the gap between the rates $N^{-4/5}$ and $N^{-3/4}$. The reason for this gap is that functions in $\BV^2[-1, 1]$ are \emph{spatially inhomogeneous}, while functions in $H^2[-1, 1]$ are \emph{spatially homogeneous}. Neural network estimators are able to adapt to the inhomogeneities of the data-generating function (and are therefore \emph{locally adaptive}), while linear methods cannot. This shows that even the simplest neural networks (shallow, univariate)  \emph{outperform} linear methods when the data-generating function is spatially inhomogeneous. We illustrate this phenomenon in \cref{fig:1D}, where we consider the problem of fitting data generated from a spatially inhomogenous function in $\BV^2[-1, 1]$ that is not in $H^2[-1, 1]$ using a shallow ReLU network and a cubic smoothing spline. As these results are qualitative, we manually adjusted the regularization parameter $\lambda$ in the experiments in order to find solutions that visually capture the phenomenon described above. The code to generate \cref{fig:1D} is publicly available\footnote{\url{https://github.com/rp/estimation-shallow-relu}}.

\begin{figure}[htb]
    \centering
    \centerline{\includegraphics[width=\columnwidth]{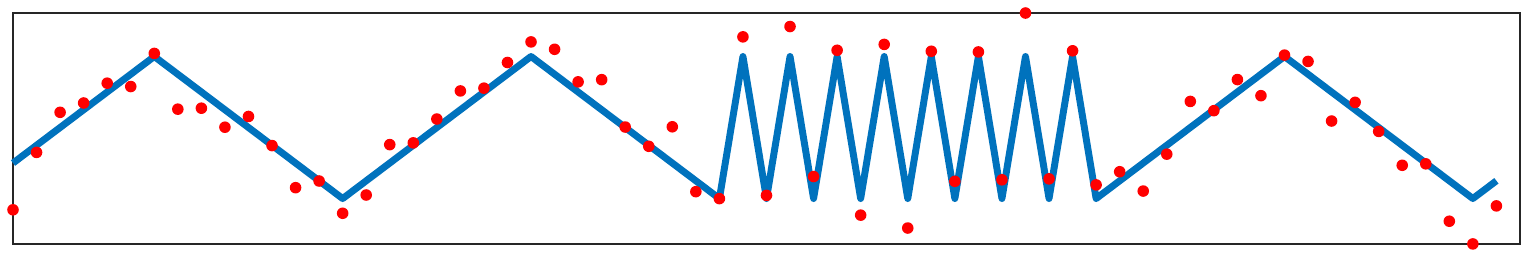}}
      (a) True function and data. \smallskip
      
     \centerline{\includegraphics[width=\columnwidth]{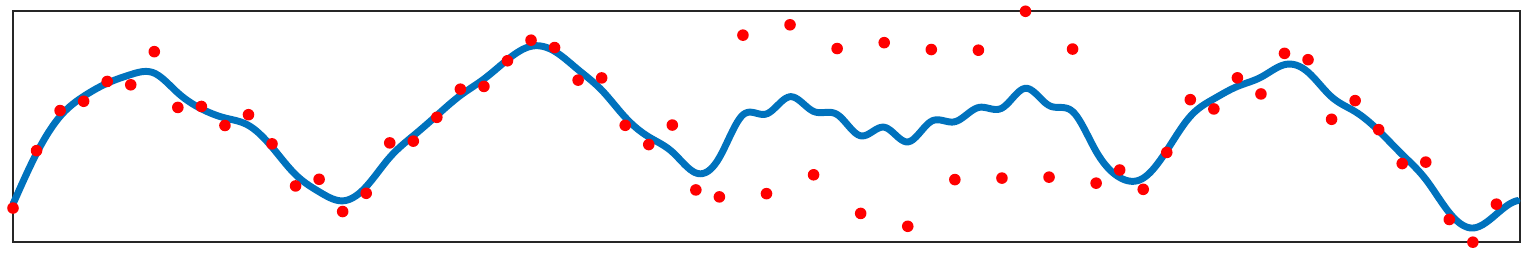}}
     (b) Cubic smoothing spline with large $\lambda$. \smallskip
      
     \centerline{\includegraphics[width=\columnwidth]{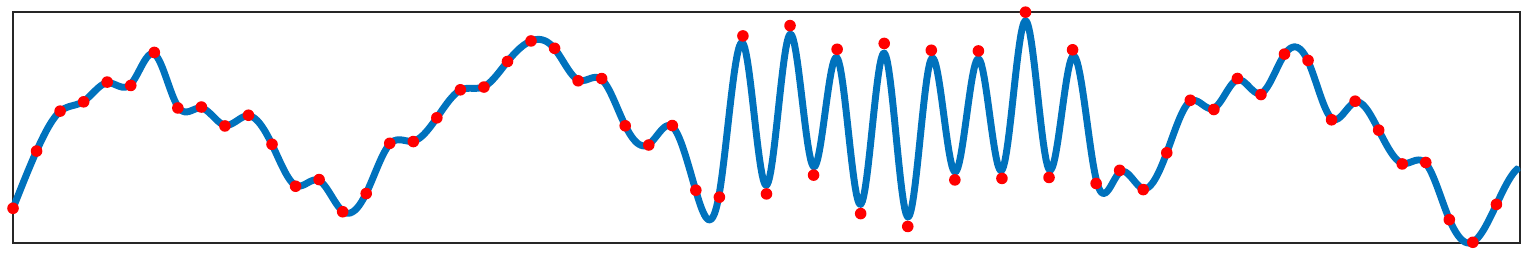}}
     (c) Cubic smoothing spline with small $\lambda$. \smallskip
      
     \centerline{\includegraphics[width=\columnwidth]{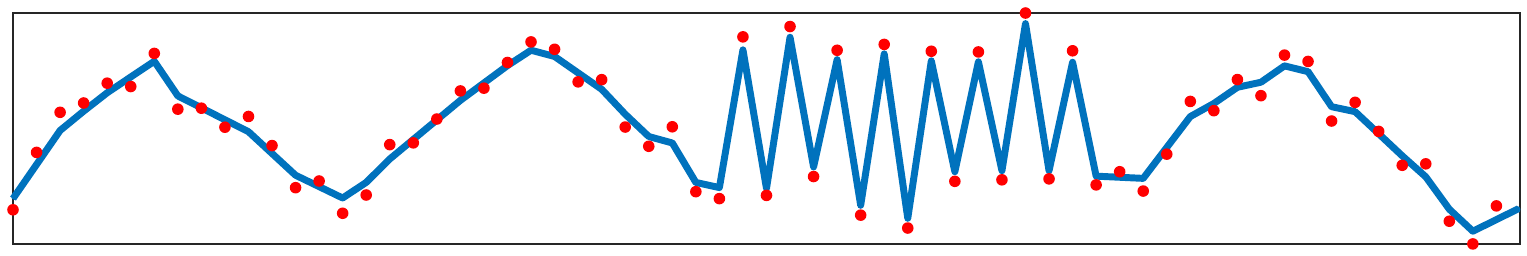}}
     (d) Shallow ReLU network or locally adaptive linear spline. \smallskip
     
    \caption{In (a) we generate data from noisy samples of a function in $\BV^2[-1, 1]$ but not in $H^2[-1, 1]$. In (b) and (c) we fit the data using a cubic smoothing spline with both large and small $\lambda$. In (d) we fit the data using a locally adaptive linear spline which corresponds to training a shallow ReLU network (to a global minimizer) with weight decay (or path-norm regularization).}
    \label{fig:1D}
\end{figure}

In \cref{fig:1D}(a) we plot a function (in blue) and generate a data set by taking noisy samples (in red) of the function plus i.i.d. Gaussian noise. Clearly this function is in $\BV^2[-1, 1]$ but not in $H^2[-1, 1]$ since taking two (distributional) derivatives of this function is an impulse train. This function is spatially inhomogeneous since it is highly oscillatory in some regions and less oscillatory in others.

In \cref{fig:1D}(b) and \cref{fig:1D}(c), we plot the cubic smoothing spline fit to the data for large and small $\lambda$, respectively. This illustrates that the cubic smoothing spline (which is a kernel method) \emph{cannot} adapt to the spatial inhomogenity of the underlying function. Even by adjusting the regularization parameter $\lambda$, the solution cannot adapt to the spatial inhomogeneity of the underlying function. Indeed, we see for large $\lambda$ in \cref{fig:1D}(b) that the cubic smoothing spline oversmooths the high variation portion of the data and we see for small $\lambda$ in \cref{fig:1D}(c) that the cubic smoothing spline undersmooths (overfits) the low variation portion of the data.

In \cref{fig:1D}(d) we plot a solution to the variational problem in \cref{eq:locally-adaptive-spline-variational}, which is a locally adaptive linear spline which can be computed by training a shallow ReLU network (to a global minimizer) with weight decay or path-norm regularization. In this case, we see that the locally adaptive linear spline is able to adapt to the spatial inhomogeneities of the underlying function.

We also remark that wavelet shrinkage estimators, in which the mother wavelet is sufficiently regular, are also a minimax optimal estimators for nonparametric estimation of $\BV^2[-1, 1]$ functions~\cite{donoho1998minimax}. This shows that in the simplest setting, shallow ReLU networks trained with weight decay (to a global minimizer) perform exactly the same as classical techniques such as locally adaptive spline estimators and wavelet shrinkage estimators.

\subsection{The Multivariate Case} \label{sec:exp-multivariate}

In the multivariate case, we see a similar gap from the univariate case. In particular, we derive the following linear minimax lower bound for the estimation problem over $\RBV^2(\B_1^d)$.

\begin{theorem} \label{thm:linear-minimax}
    Consider the problem of estimating a function $f \in \RBV^2(\B_1^d)$ satisfying $\RTV^2_{\B_1^d}(f) \leq C$ from the noisy samples
    \[
        y_n = f(\vec{x}_n) + \varepsilon_n, \: n = 1, \ldots, N,
    \]
    where $\curly{\varepsilon_n}_{n=1}^N$ are i.i.d. $\N(0, \sigma^2)$ random variables and $\curly{\vec{x}_n}_{n=1}^N \subset \B_1^d$ are i.i.d. uniform random variables on $\B_1^d$. Then, for sufficiently large $N$, we have the following linear minimax lower bound
    \[
        \inf_{\hat{f} \text{ linear}} \sup_{\substack{f \in \RBV^2(\B_1^d) \\ \RTV^2_{\B_1^d}(f) \leq C}} \E\norm*{f - \hat{f}}_{L^2(\B_1^d; \P_X)}^2 \gtrsim_d \paren{\frac{N}{\sigma^2}}^{-\frac{3}{d + 3}},
    \]
    where the $\inf$ is over all \emph{linear} functions of the data and $\P_X$ denotes the uniform probability measure on $\B_1^d$.
\end{theorem}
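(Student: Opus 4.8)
The plan is to reduce the linear estimation problem to a Gaussian sequence model and to exhibit, \emph{inside} the $\RBV^2(\B_1^d)$-ball, a rich finite-dimensional sub-family on which no linear estimator can beat the stated rate. The engine is the classical fact (Pinsker; Donoho--Liu--MacGibbon) that for the Gaussian sequence model the linear minimax risk over an orthosymmetric convex body coincides with the linear minimax risk over its \emph{quadratically convex hull}, together with the exact value $R^2 m\eta^2/(R^2+m\eta^2)$ of the minimax linear risk over an $\ell^2$-ball $\curly{\norm{\theta}_2\le R}$ in $m$ coordinates with per-coordinate noise $\eta^2$ (attained by scalar shrinkage). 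Since the quadratic hull of an $\ell^1$-ball of radius $R$ is the $\ell^2$-ball of radius $R$, a \emph{weighted-}$\ell^1$ (octahedral) sub-body of the $\RBV^2$-ball already forces a large linear risk even though the $\RBV^2$-ball itself contains no comparably large ellipsoid. This mismatch between the $\M$-norm ($\ell^1$-type) geometry of $\RTV^2_{\B_1^d}$ and the $\ell^2$ geometry seen by linear estimators is precisely the source of the gap, mirroring the univariate $\M$- versus $L^2$-regularization contrast of \Cref{sec:exp-univariate}.

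Concretely, I would first fix a scale $\ell>0$ and, using ridgelet analysis, construct functions $\curly{\phi_j}_{j=1}^m$ on $\B_1^d$ that are \emph{(i)} orthonormal in $L^2(\B_1^d;\P_X)$, \emph{(ii)} of controlled second-order Radon variation $\RTV^2_{\B_1^d}(\phi_j)\lesssim_d \ell^{-3/2}$, and \emph{(iii)} as numerous as the scale allows, $m\asymp \ell^{-d}$. The exponent in (ii) is the one expected from a width-$\ell$ profile: dilation obeys $\RTV^2(g(\dummy/\ell))=\ell^{-1}\RTV^2(g)$ (directly from \Cref{eq:RTV} and the homogeneity of $\partial_t^2\ramp^{d-1}\RadonOp$), while $L^2(\B_1^d)$-normalization of a width-$\ell$ ridge-type bump contributes a further $\ell^{-1/2}$, giving $\ell^{-3/2}$; the count in (iii) reflects the $\ell^{-(d-1)}$ resolvable directions on $\Sph^{d-1}$ together with $\ell^{-1}$ offsets. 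By subadditivity and homogeneity of the seminorm, the octahedron $\curly{\theta:\norm{\theta}_1\le C\ell^{3/2}}$ maps, via $\theta\mapsto\sum_j\theta_j\phi_j$, into the model class $\curly{f:\RTV^2_{\B_1^d}(f)\le C}$, since $\RTV^2_{\B_1^d}(\sum_j\theta_j\phi_j)\le \ell^{-3/2}\norm{\theta}_1\le C$, and orthonormality makes the submodel an isometric copy of this octahedron, so the $\P_X$-risk equals the $\ell^2$ coefficient risk.

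Next I would carry out the reduction to the sequence model and optimize the scale. Conditioning on the uniform design and projecting the data onto the $\phi_j$ yields observations $\theta_j+(\sigma/\sqrt N)\,z_j$ with approximately standard Gaussian $z_j$; for large $N$ the empirical and population inner products concentrate, so the effective per-coordinate noise is $\eta^2\asymp\sigma^2/N$ and sequence risk is comparable to $L^2(\B_1^d;\P_X)$-risk (this is the source of the ``sufficiently large $N$'' hypothesis, handled exactly as in the passage from \Cref{thm:mse-bound} to \Cref{cor:mse}). Since any linear estimator of $f$ restricts to a linear estimator of $\theta$, monotonicity of the minimax risk together with the hull identity gives
\[
\inf_{\hat f\text{ linear}}\ \sup_{\RTV^2_{\B_1^d}(f)\le C}\ \E\norm*{f-\hat f}_{L^2(\B_1^d;\P_X)}^2
\ \gtrsim_d\ \frac{(C\ell^{3/2})^2\,m\,\eta^2}{(C\ell^{3/2})^2+m\,\eta^2}
\ \asymp_d\ \frac{C^2\ell^{3}\,\eta^2}{C^2\ell^{\,d+3}+\eta^2}.
\]
Balancing the denominator at $C^2\ell^{\,d+3}\asymp\eta^2$, i.e.\ $\ell\asymp(\sigma^2/(C^2N))^{1/(d+3)}$, yields a bound of order $C^2\ell^{3}\asymp_{d,C}(N/\sigma^2)^{-\frac{3}{d+3}}$, which is the assertion; as a check, $d=1$ reproduces the classical linear rate $N^{-3/4}$ for $\BV^2[-1,1]$.

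The main obstacle is the construction (i)--(iii): producing the orthonormal system at the \emph{sharp} budget $\RTV^2_{\B_1^d}(\phi_j)\lesssim_d\ell^{-3/2}$ with the \emph{full} count $m\asymp\ell^{-d}$. The difficulty is that ridge-type bumps sharing the small seminorm of a one-dimensional profile are spread out transversally and hence overlap across directions, so orthogonality cannot come from disjoint supports---disjointness would force isotropic bumps, whose normalized seminorm is the far larger $\ell^{-(d+2)/2}$ and yields only a strictly weaker bound. Orthogonality must instead arise from transversal oscillation and cancellation, which is exactly the ridgelet phenomenon, while the $\ell^{-3/2}$ budget is preserved; that $m\asymp\ell^{-d}$ is simultaneously achievable is consistent with, and in fact saturates, the $L^2$ metric entropy $\epsilon^{-2d/(d+3)}$ of the $\RBV^2(\B_1^d)$-ball that underlies \Cref{thm:minimax}. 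I would therefore build this family from the ridgelet frames used for the entropy estimates of $\V^2(\B_1^d)$ in~\cite{siegel}, verifying the three properties directly in the Radon domain through \Cref{eq:RTV} and the Fourier--slice representation of $\partial_t^2\ramp^{d-1}\RadonOp$.
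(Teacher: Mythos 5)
Your proposal is correct and follows essentially the same route as the paper. The ``generic engine'' you invoke---linear minimax risk over an octahedron equals that over its quadratically convex hull (an $\ell^2$-ball), with value $R^2m\eta^2/(R^2+m\eta^2)$---is exactly the content of the lemma the paper cites from Cand\`es' proof of Theorem~4.1 of the ridgelet shrinkage paper, and your scale balancing reproduces the paper's computation: with $\ell = 2^{-j}$, your family of $m\asymp\ell^{-d}$ orthonormal atoms with $\RTV^2_{\B_1^d}(\phi_j)\lesssim_d\ell^{-3/2}$ is, after normalization into the ball, precisely the paper's $K\gtrsim_d 2^{jd}$ orthogonal elements of $L^2$-norm $2^{j(s-d/2)}$ with $s=(d+3)/2$. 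The one step you flag as the main obstacle---constructing that orthonormal ridge family at the sharp $\ell^{-3/2}$ budget with the full $\ell^{-d}$ count, where orthogonality must come from transversal oscillation rather than disjoint supports---is exactly the step the paper does not build by hand either: it imports it from Cand\`es via the embedding $R^{(d+3)/2}_{1,1}(\B_1^d)\subset\V^2(\B_1^d)=\RBV^2(\B_1^d)$ together with his construction of $2^{jd}$ orthogonal elements in a ridgelet ball, so your plan to obtain the family from ridgelet frames is the right (and the paper's) resolution. The only other difference is cosmetic: the paper passes from the continuous white-noise model to $N$ uniform samples by citing asymptotic equivalence (Brown--Low/Rei{\ss}) rather than your direct concentration-of-inner-products argument, but both devices account for the ``sufficiently large $N$'' hypothesis in the same way.
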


The proof of \cref{thm:linear-minimax} appears in \cref{app:linear-minimax} and hinges on several results from ridgelet analysis developed by Cand{\`e}s~\cite{candes-phd,candes2003ridgelets}. Just as in the univariate case, the takeaway message here is that this lower bound quantifies a fundamental gap between neural network estimators and any linear/kernel estimator. The minimax rates for nonlinear and linear estimation are  $N^{-\frac{d+3}{2d+3}}$ and $N^{-\frac{3}{d+3}}$, respectively. As $d \to \infty$, the nonlinear estimation rate tends to $N^{-1/2}$, which is immune to the curse of dimensionality, while the linear estimation rate suffers the curse of dimensionality. Moreover, these rates recover the univariate ($d=1$) rates of $N^{-4/5}$ and $N^{-3/4}$. The reason for the gap between the nonlinear and linear minimax rates is that functions in $\RBV^2(\B_1^d)$ are \emph{spatially inhomogeneous} since it is a mixed variation space and neural network estimators are able to adapt to the inhomogeneities of the data-generating function (and are therefore \emph{locally adaptive}), while linear methods cannot.

\begin{figure*}[htb]
    \begin{minipage}[b]{0.3\linewidth}
        \centering
        \centerline{\includegraphics[width=\textwidth]{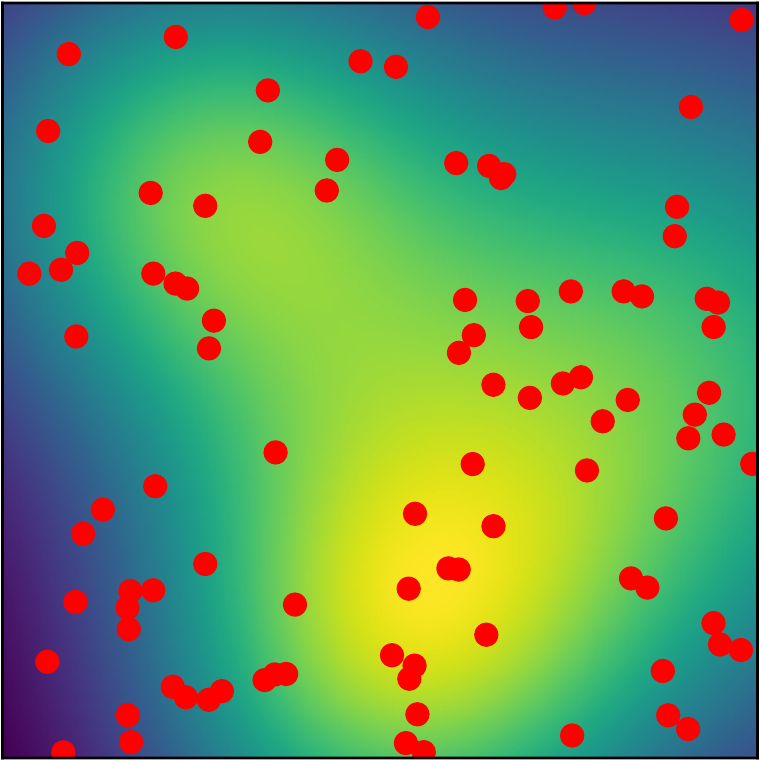}}
        (a) True function and data.
    \end{minipage}
    \hfill
    \begin{minipage}[b]{0.3\linewidth}
        \centering
        \centerline{\includegraphics[width=\textwidth]{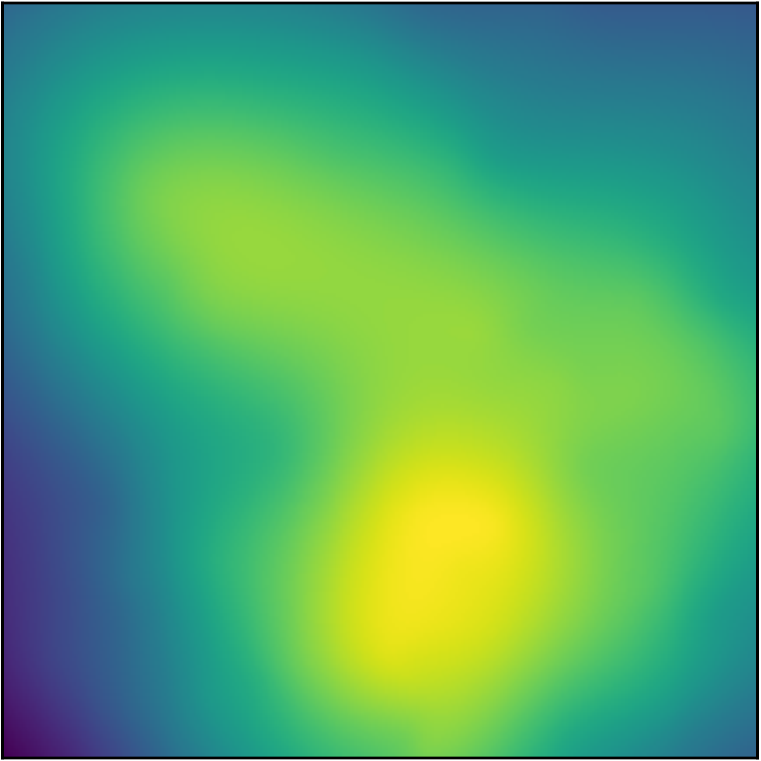}}
        (b) Thin-plate spline.
    \end{minipage}
    \hfill
    \begin{minipage}[b]{0.3\linewidth}
        \centering
        \centerline{\includegraphics[width=\textwidth]{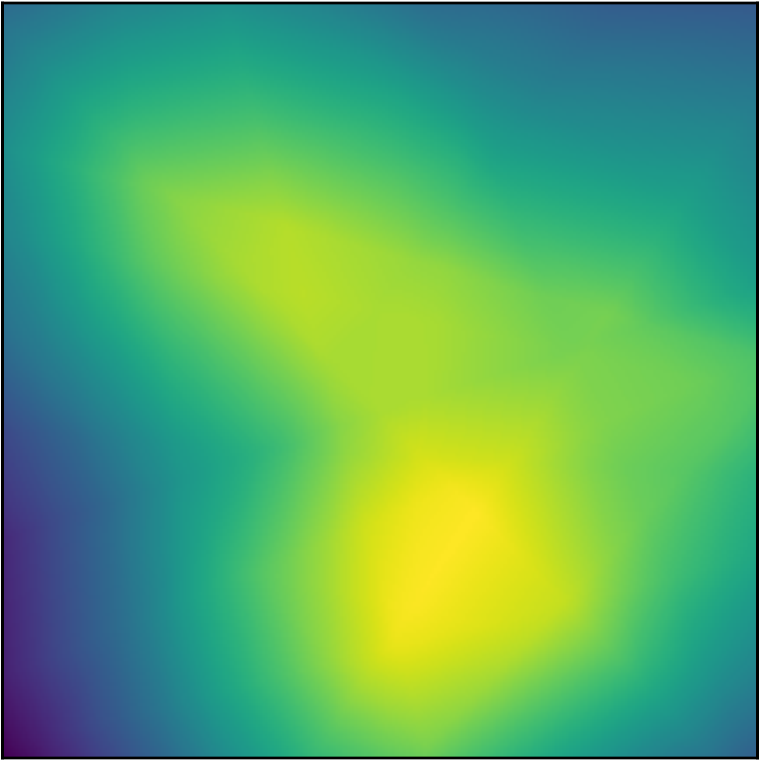}}
        (c) Shallow ReLU network.
    \end{minipage}
    
    \caption{In (a) we generate noisy samples of a function in both $\RBV^2(\B_1^2)$ and $H^2(\B_1^2)$. In (b) we fit the data using a thin-plate spline. In (c) we fit the data with a shallow ReLU network trained with weight decay.}
    \label{fig:2D-smooth}
\end{figure*}

\begin{figure*}[htb]
    \begin{minipage}[b]{0.3\linewidth}
        \centering
        \centerline{\includegraphics[width=\textwidth]{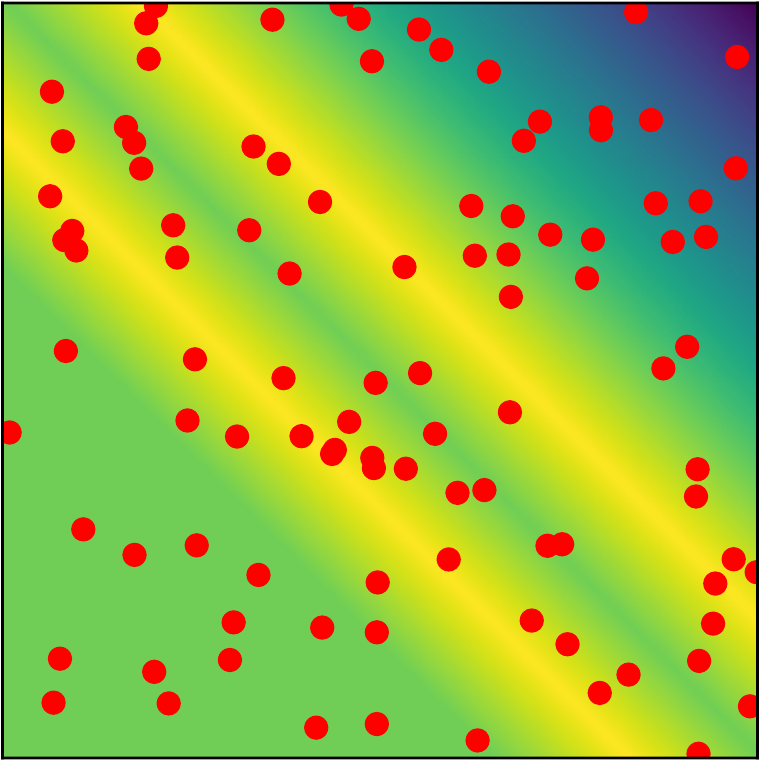}}
        (a) True function and data.
    \end{minipage}
    \hfill
    \begin{minipage}[b]{0.3\linewidth}
        \centering
        \centerline{\includegraphics[width=\textwidth]{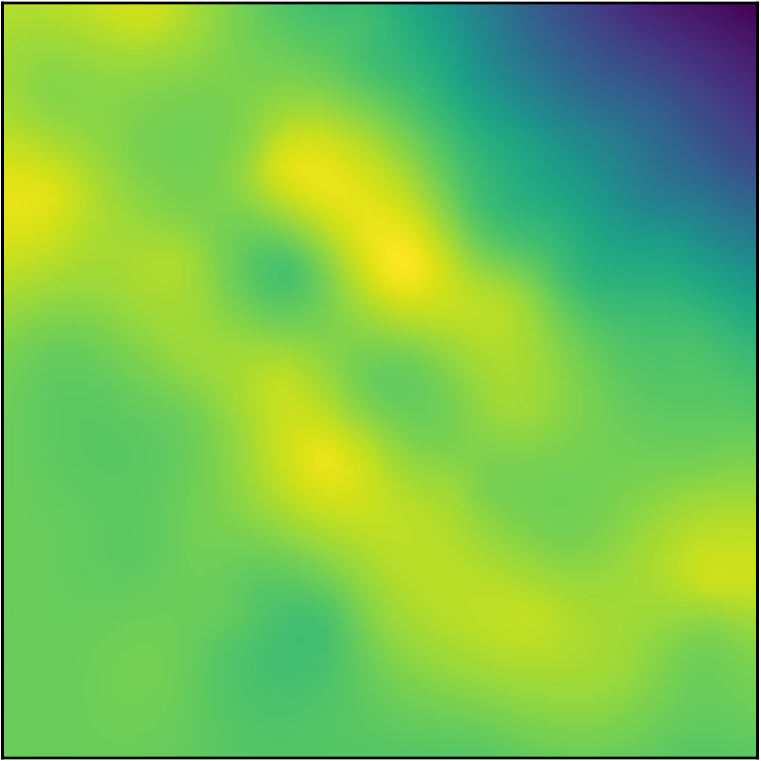}}
        (b) Thin-plate spline.
    \end{minipage}
    \hfill
    \begin{minipage}[b]{0.3\linewidth}
        \centering
        \centerline{\includegraphics[width=\textwidth]{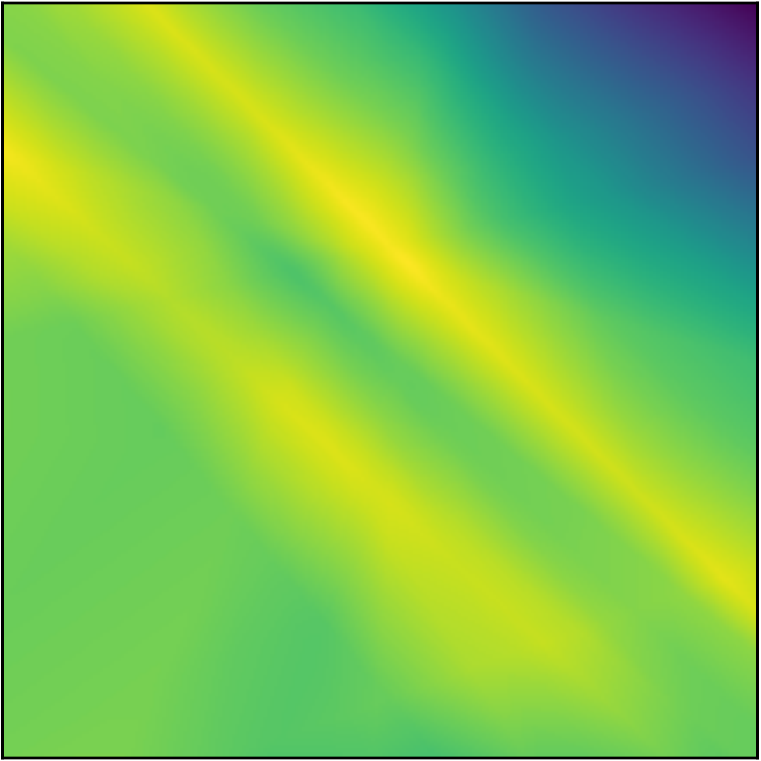}}
        (c) Shallow ReLU network.
    \end{minipage}
    
    \caption{In (a) we generate noisy samples of a function in $\RBV^2(\B_1^2)$ but not in $H^2(\B_1^2)$. In (b) we fit the data using a thin-plate spline. In (c) we fit the data with a shallow ReLU network trained with weight decay.}
    \label{fig:2D-ridge}
\end{figure*}

We illustrate this phenomenon by considering the problem of estimating a two-dimensional function and compare solutions to the variational problem in \cref{eq:estimator-variational} with the thin-plate spline estimator~\cite{spline-models-observational}, which is a linear method and a special case of a kernel method. The thin-plate spline is a solution to the variational problem
\begin{align*}
    &\min_{f \in H^2(\B_1^2)} \sum_{n=1}^N \abs*{y_n - f(\vec{x}_n)}^2 \\
    &\quad + \lambda \paren{\norm{\partial_{x_1}^2 f}_{L^2(\B_1^2)}^2 + 2 \norm{\partial_{x_2}\partial_{x_1} f}_{L^2(\B_1^2)}^2 + \norm{\partial_{x_2}^2 f}_{L^2(\B_1^d)}^2},
\end{align*}
where $H^2(\B_1^2)$ is the second-order $L^2$-Sobolev space, which is defined as the space of all functions where the regularizer in the above display is finite. Notice that the problem in the above  is  a generalization of the cubic smoothing spline problem in \cref{eq:smoothing-spline-variational}. We compare the shallow ReLU network estimator to the thin-plate spline estimator for two functions, one that is in both $\RBV^2(\B_1^2)$ and $H^2(\B_1^2)$, and one that is only in $\RBV^2(\B_1^2)$. In all the experiments, we manually adjusted the regularization parameter $\lambda$ to obtain the best results for each method.  Thus, the results (visually) compare the best performance of each method.

In \cref{fig:2D-smooth} we consider a function that is a superposition of three Gaussians. This function is infinitely differentiable 
and therefore in both $\RBV^2(\B_1^d)$ and $H^2(\B_1^2)$. In \cref{fig:2D-smooth}(a), we plot the function with a heatmap where lighter colors correspond to larger values and darker colors correspond to smaller values. We then generate a data set by taking noisy samples (in red) of the function plus i.i.d. Gaussian noise. In \cref{fig:2D-smooth}(b), we plot the heatmap of the thin-plate spline fit to the data. We see that the thin-plate spline estimates the original function quite well. In \cref{fig:2D-smooth}(c), we plot the heatmap of the shallow ReLU network. We also see that the shallow ReLU network estimates the original function quite well.

In \cref{fig:2D-ridge} we consider a function that is a ridge function in a random direction where the profile is a continuous piecewise-linear function, a triangular waveform. This function does not have two weak derivatives and is therefore not in $H^2(\B_1^2)$, but is in $\RBV^2(\B_1^d)$. In \cref{fig:2D-ridge}(a), we plot the heatmap of the function. We then generate a data set by taking noisy samples (in red) of the function plus i.i.d. Gaussian noise. In \cref{fig:2D-ridge}(b), we plot the heatmap of the thin-plate spline fit to the data. We see that the thin-plate spline struggles to estimate the original function. In \cref{fig:2D-ridge}(c), we plot the heatmap of the shallow ReLU network. We see that the shallow ReLU network estimates the original function quite well. 

The main takeaway message here is that the shallow ReLU network is able to \emph{locally adapt} to the mixed variation of the data-generating function, whether it be a highly isotropically regular function or a anistropically less regular function, while linear/kernel methods cannot. 
The code to generate \cref{fig:2D-smooth,fig:2D-ridge} is publicly available\footnote{\url{https://github.com/rp/estimation-shallow-relu}}.

\begin{remark}
    We believe that the results of \cref{sec:exp-univariate,sec:exp-multivariate} provide compelling evidence that trying to understand neural networks via linearization schemes such as the neural tangent kernel~\cite{ntk} do not properly capture what neural networks are actually doing in practice. The key idea being that neural networks are able to locally adapt to the mixed variation of the underlying data-generating function.
\end{remark}
\section{Conclusion}
In this paper we studied the problem of estimating an unknown function defined on a bounded domain $\Omega \subset \R^d$ from $\RBV^2(\Omega)$, the natural function space of shallow ReLU networks, from noisy samples. We studied the estimators that correspond to training a shallow ReLU network with weight decay (or path-norm regularization) to a global minimizer. We showed that these estimators provide (up to logarithmic factors) minimax optimal rates of convergence for $\RBV^2(\Omega)$ model classes. Moreover, these rates were immune to the curse of dimensionality. We showed that $\RBV^2(\Omega)$ contains highly isotropically regular functions that belong to the Sobolev space $H^{d+1}(\Omega)$ as well as anisotropic less regular functions, and therefore can be viewed as mixed variation spaces, giving insight into why shallow ReLU network estimators are immune to the curse of dimensionality. In particular, we quantify an explicit gap between linear and nonlinear methods and show that linear methods are suboptimal for estimating functions in $\RBV^2(\Omega)$.

There are a number of open questions that may be asked. For example, considering higher-order variants of $\RBV^2(\Omega)$. Our previous work in~\cite{ridge-splines} also studied the higher-order variants defined on $\R^d$, $\RBV^m(\R^d)$, where $m \geq 2$ is an integer. These higher-order spaces are defined by the seminorm $\RTV^m(\dummy)$, which corresponds to replacing $\partial_t^2$ with $\partial_t^m$ in \cref{eq:RTV} and considering a different growth restriction than in \cref{eq:RBV}. These higher-order spaces correspond to shallow neural networks with activation functions that are the $(m - 1)$th power of the ReLU. Although many of the results in this paper are straightforward to generalize to $\RBV^m$-spaces, some of the results are also very specific to $\RBV^2$-spaces. In particular, it is currently an open question on whether or not similar approximation rates as in \cref{thm:approximation} can be derived in $L^\infty(\B_1^d)$. Using results from~\cite{siegel}, we can derive similar optimal approximation rates in $L^2(\B_1^d)$, but the mean-squared error rates hinged on the $L^\infty(\B_1^d)$ approximation rates. Finally, perhaps the most important open question regards estimation with deep ReLU networks fit to data. Our prior work in~\cite{deep-ridge-splines} developed a deep variant of $\RBV^2(\R^d)$, and derived a representer theorem for deep ReLU networks. This deep $\RBV^2$-space could provide the right framework for nonparametric estimation with deep ReLU networks.

\section*{Acknowledgment}
The authors would like to thank Jonathan Siegel for answering many questions about his recent work in~\cite{siegel2021characterization,siegel}. The authors would also like to thank Ronald DeVore for helpful discussions.

\appendices

\section{Proof of \cref{lemma:RBV-domain-identification}} \label[app]{app:extension-restriction}
The proof of \cref{lemma:RBV-domain-identification} relies on the direct-sum decomposition of the space $\RBV^2(\R^d)$ from our previous work in~\cite{ridge-splines}.

\subsection{The Direct-Sum Decomposition of \texorpdfstring{$\RBV^2(\R^d)$}{RBV2(Rd)}}
It was shown in~\cite[Theorem~22]{ridge-splines} that $\RBV^2(\R^d)$ is a non-reflexive Banach space, in particular, it is a Banach space with a sparsity-promoting norm. In this section we will summarize the relevant results from~\cite{ridge-splines} about the Banach structure of $\RBV^2(\R^d)$. We first remark that the space $\RBV^2(\R^d)$ as defined in \cref{eq:RBV} is defined by a \emph{seminorm} $\RTV^2(\dummy)$. The null space of this seminorm on $\RBV^2(\R^d)$ is the space of affine functions, i.e., polynomials of degree strictly less than $2$, on $\R^d$, denoted by $\mathcal{P}_{1}(\R^d)$. In~\cite{ridge-splines}, we equip $\RBV^2(\R^d)$ with a \emph{bona fide} norm by considering an arbitrary \emph{biorthogonal system} for $\mathcal{P}_1(\R^d)$.

\begin{definition} \label[defn]{defn:biorthogonal-system}
    Let $\N$ be a finite-dimensional space with $N_0 \coloneqq \dim \N$. The pair $(\vec{\phi}, \vec{p}) = \curly{(\phi_n, p_n)}_{n=1}^{N_0}$ is called a \emph{biorthogonal system} for $\mathcal{N}$ if $\vec{p} = \curly{p_n}_{n=1}^{N_0}$ is a basis of $\N$ and the ``boundary'' functionals $\vec{\phi} = \curly{\phi_n}_{n=1}^{N_0}$ with $\phi_n \in \N'$ (the continuous dual of $\N$) satisfy the biorthogonality condition $\ang{\phi_k, p_n} = \delta[k - n]$, $k, n = 1, \ldots, N_0$, where $\delta[\dummy]$ is the Kronecker impulse.
\end{definition}

Recall from \cref{eq:RTV} that
\[
    \RTV^2(f) = c_d \norm*{\partial_t^2 \ramp^{d-1} \RadonOp f}_{\M(\cyl)}.
\]
For brevity, put
\[
    \ROp \coloneqq c_d \,\partial_t^2 \ramp^{d-1} \RadonOp,
\]
i.e., $\RTV^2(f) = \norm{\ROp f}_{\M(\cyl)}$. Also, note that $\dim \mathcal{P}_{1}(\R^d) = d + 1$.

\begin{proposition}[{see~\cite[Lemma~21~and~Theorem~22]{ridge-splines}}] \label[prop]{prop:direct-sum-inverse}
    Let $(\vec{\phi}, \vec{p})$ be a biorthogonal system for $\mathcal{P}_1(\R^d)$. Then, every $f \in \RBV^2(\R^d)$ has the unique direct-sum decomposition
    \begin{equation}
        f = \ROp^{-1}_\vec{\phi}\mu + q,
        \label{eq:direct-sum}
    \end{equation}
    where $\mu = \ROp_m f \in \M(\cyl)$ is an even measure\footnote{i.e., $\dd \mu(\vec{z}) = \dd \mu(-\vec{z})$.}, $q = \sum_{k=1}^{d+1} \ang{\phi_k, f}p_k \in \mathcal{P}_1(\R^d)$, and 
    \begin{equation}
        \ROp^{-1}_\vec{\phi}: \mu \mapsto \int_{\cyl} g_\vec{\phi}(\dummy, \vec{z}) \dd\mu(\vec{z}),
        \label{eq:right-inverse}
    \end{equation}
    where 
    \begin{equation}
        g_\vec{\phi}(\vec{x}, \vec{z}) =  r_\vec{z}(\vec{x}) - \sum_{k=1}^{d+1} p_k(\vec{x}) q_k(\vec{z}),
        \label{eq:kernel-of-inverse}
    \end{equation}
    where $r_\vec{z} = r_{(\vec{w}, b)} = \rho(\vec{w}^\T(\dummy) - b)$, where $\rho$ is the ReLU, and $q_k(\vec{z}) \coloneqq \ang{\phi_k, r_\vec{z}}$, where $\vec{z} = (\vec{w}, b) \in \cyl$.
\end{proposition}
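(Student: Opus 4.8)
The plan is to treat this as a standard direct-sum decomposition of a Banach space defined by a regularizing operator with finite-dimensional null space, following the template used for $L$-splines and native spaces. The two structural facts I would invoke at the outset are: (i) the null space of $\RTV^2(\dummy)$, equivalently of $\ROp$, on $\RBV^2(\R^d)$ is exactly the affine functions $\mathcal{P}_1(\R^d)$, so that $\ROp q = 0$ for every $q \in \mathcal{P}_1(\R^d)$; and (ii) the Green's-function identity $\ROp r_\vec{z} = \delta(\dummy - \vec{z})$ on the cylinder, i.e.\ the filtered second derivative in the Radon domain of a ReLU atom is a Dirac impulse at its parameter $\vec{z} = (\vec{w}, b)$ (cited as Lemma~17 of~\cite{ridge-splines}). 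A subtlety to flag immediately is the evenness symmetry $\RadonOp f(\vec{\gamma}, t) = \RadonOp f(-\vec{\gamma}, -t)$: it forces $\mu = \ROp_m f$ to be an even measure and makes $r_\vec{z}$ and $r_{-\vec{z}}$ agree modulo an affine function (since $\rho(u) - \rho(-u) = u$). I would therefore work throughout on the subspace of even measures, where this ambiguity disappears.

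First I would set up the two complementary projectors. Using the biorthogonal system $(\vec{\phi}, \vec{p})$, define the null-space projector $\mathrm{Proj}\, f \coloneqq \sum_{k=1}^{d+1}\ang{\phi_k, f}p_k$, which maps into $\mathcal{P}_1(\R^d)$ and is idempotent by biorthogonality $\ang{\phi_k, p_j} = \delta[k-j]$. The essential computation is to verify the two defining properties of the kernel $g_\vec{\phi}$ in~\cref{eq:kernel-of-inverse}. Applying $\ROp$ in the $\vec{x}$ variable under the integral and using fact~(i) kills the correction term $\sum_k p_k(\vec{x})q_k(\vec{z})$, while fact~(ii) gives $\ROp_\vec{x} g_\vec{\phi}(\dummy, \vec{z}) = \delta(\dummy - \vec{z})$; hence $\ROp\, \ROp^{-1}_\vec{\phi} = \mathrm{Id}$ on even measures, so $\ROp^{-1}_\vec{\phi}$ is a genuine right-inverse. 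Dually, pairing $g_\vec{\phi}(\dummy, \vec{z})$ against $\phi_k$ and using biorthogonality together with the definition $q_k(\vec{z}) = \ang{\phi_k, r_\vec{z}}$ gives $\ang{\phi_k, g_\vec{\phi}(\dummy, \vec{z})} = q_k(\vec{z}) - q_k(\vec{z}) = 0$, so the range of $\ROp^{-1}_\vec{\phi}$ lies in the ``gauge'' complement $\curly{h \st \ang{\phi_k, h} = 0,\ k = 1, \dots, d+1}$.

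With both projectors in hand, existence of the decomposition is immediate: for $f \in \RBV^2(\R^d)$ put $\mu = \ROp f$, $h = \ROp^{-1}_\vec{\phi}\mu$, and $q = \mathrm{Proj}\, f$. Then $\ROp(f - h - q) = \mu - \mu - 0 = 0$, so $f - h - q \in \mathcal{P}_1(\R^d)$, and applying each $\phi_k$ (using $\ang{\phi_k, h} = 0$ and $\ang{\phi_k, q} = \ang{\phi_k, f}$) shows all boundary functionals of $f - h - q$ vanish; since a biorthogonal system separates points of the null space, $f - h - q = 0$, giving~\cref{eq:direct-sum}. For uniqueness, if $f = \ROp^{-1}_\vec{\phi}\nu + q'$ with $\nu$ even and $q' \in \mathcal{P}_1(\R^d)$, applying $\ROp$ forces $\nu = \ROp f = \mu$, and then $q' = \mathrm{Proj}(\ROp^{-1}_\vec{\phi}\nu + q') = q$, since $\mathrm{Proj}$ annihilates the gauge complement and fixes $\mathcal{P}_1(\R^d)$.

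I expect the main obstacle to be the distributional bookkeeping rather than the algebra: justifying that $\ROp$ may be moved inside the integral $\int_\cyl g_\vec{\phi}(\dummy, \vec{z})\dd\mu(\vec{z})$ against a finite measure, verifying that $\ROp^{-1}_\vec{\phi}$ indeed maps $\M(\cyl)$ boundedly into $\RBV^2(\R^d)$ so that the decomposition is topological, and carefully tracking the evenness symmetry so that the right-inverse identity $\ROp\,\ROp^{-1}_\vec{\phi} = \mathrm{Id}$ holds on the correct subspace of even measures. These points are where I would lean on the detailed operator theory and Radon-domain mapping properties established in~\cite{ridge-splines}.
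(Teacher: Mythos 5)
The paper does not actually prove this proposition---it is imported verbatim from \cite[Lemma~21~and~Theorem~22]{ridge-splines}---and your reconstruction follows exactly the template of that cited proof: the null space of $\ROp$ on the growth-restricted space equals $\mathcal{P}_1(\R^d)$, the (even) Green's-function identity for ReLU atoms, the gauge projector built from the biorthogonal system, and the two defining properties of the kernel $g_{\vec{\phi}}$ (right-inverse under $\ROp$, annihilation by the $\phi_k$), followed by the standard existence/uniqueness argument. Your argument is correct modulo the distributional and boundedness technicalities you explicitly flag (interchanging $\ROp$ and the $\phi_k$ with the integral against $\mu$, stability of $\ROp^{-1}_{\vec{\phi}}$ from $\M(\cyl)$ into $\RBV^2(\R^d)$, and the evenness bookkeeping), all of which are precisely the content of the cited operator-theoretic results.
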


The operator $\ROp^{-1}_\vec{\phi}$ defined in \cref{eq:right-inverse} has several useful properties (see~\cite[Theorem~22, Items~1~and~2]{ridge-splines}). In particular, it is a stable (i.e., bounded) right-inverse of $\ROp$ and, when restricted to
\[
    \RBV^2_\vec{\phi}(\R^d) \coloneqq \curly{f \in \RBV^2(\R^d) \st \vec{\phi}(f) = \vec{0}},
\]
it is the \emph{bona fide} inverse of $\ROp$ when restricted to the subspace of even measures in $\M(\cyl)$. The space $\RBV^2_\vec{\phi}(\R^d)$ is a concrete transcription of the abstract quotient $\RBV^2(\R^d) / \mathcal{P}_1(\R^d)$. Additionally we have from \cref{prop:direct-sum-inverse} the direct-sum decomposition $\RBV^2(\R^d) \cong \RBV^2_\vec{\phi}(\R^d) \oplus \mathcal{P}_1(\R^d)$, where $\RBV^2_\vec{\phi}(\R^d)$ is a Banach space when equipped with the norm $f \mapsto \norm{\ROp f}_{\M(\cyl)}$ and $\mathcal{P}_1(\R^d)$ is a Banach space when equipped with the norm $f \mapsto \norm{\vec{\phi}(f)}_1$. We also remark that the construction of $\ROp_\vec{\phi}^{-1}$ guarantees orthogonality of the two components in \cref{eq:direct-sum} and the biorthogonal system $(\vec{\phi}, \vec{p})$ guarantees unicity. This leads the following result equipping $\RBV^2(\R^d)$ with a norm to provide a Banach space structure.

\begin{proposition}[{see~\cite[Theorem~22, Item~3]{ridge-splines}}] \label{prop:RBV-norm}
    Let $(\vec{\phi}, \vec{p})$ be a biorthogonal system for $\mathcal{P}_1(\R^d)$. Then, $\RBV^2(\R^d)$ equipped with the norm
    \[
        \norm{f}_{\RBV^2(\R^d)} \coloneqq \RTV^2(f) + \norm{\vec{\phi}(f)}_1,
    \]
    where $\vec{\phi}(f) = (\ang*{\phi_1, f}, \ldots, \ang*{\phi_{d+1}, f}) \in \R^{d+1}$, is a Banach space.
\end{proposition}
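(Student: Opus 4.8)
The plan is to verify two things: that the stated expression is genuinely a norm (not merely a seminorm), and that $\RBV^2(\R^d)$ is complete with respect to it. Both follow cleanly from the direct-sum structure furnished by \cref{prop:direct-sum-inverse}, so the strategy is to reduce completeness to the completeness of the two summands by means of an isometric identification, rather than checking Cauchy sequences directly.

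First I would check that $\norm{\dummy}_{\RBV^2(\R^d)}$ separates points. It is clearly a seminorm, being the sum of the seminorm $\RTV^2(\dummy) = \norm{\ROp(\dummy)}_{\M(\cyl)}$ and the seminorm $f \mapsto \norm{\vec{\phi}(f)}_1$. If both vanish, then $\RTV^2(f) = 0$ forces $f \in \mathcal{P}_1(\R^d)$, the null space of $\ROp$, while $\norm{\vec{\phi}(f)}_1 = 0$ forces $\ang{\phi_k, f} = 0$ for every $k$. Since $(\vec{\phi}, \vec{p})$ is a biorthogonal system for $\mathcal{P}_1(\R^d)$, the functionals $\curly{\phi_k}$ act as a dual basis and hence separate the points of $\mathcal{P}_1(\R^d)$; thus $f = 0$, and $\norm{\dummy}_{\RBV^2(\R^d)}$ is a bona fide norm.

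For completeness, I would exhibit the linear map $\Phi : f \mapsto (\ROp f, \vec{\phi}(f))$ and argue that it is an isometric isomorphism of $\paren{\RBV^2(\R^d), \norm{\dummy}_{\RBV^2(\R^d)}}$ onto the product Banach space $\M_{\mathrm{even}}(\cyl) \times \R^{d+1}$, where $\M_{\mathrm{even}}(\cyl)$ denotes the closed subspace of even measures and the product carries the norm $\norm{(\mu, \vec{a})} \coloneqq \norm{\mu}_{\M(\cyl)} + \norm{\vec{a}}_1$. That $\Phi$ is norm-preserving is immediate from the definition of $\norm{\dummy}_{\RBV^2(\R^d)}$, and injectivity is exactly the separation argument above. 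Surjectivity is where \cref{prop:direct-sum-inverse} does the work: given $(\mu, \vec{a})$ with $\mu$ even, the element $f = \ROp^{-1}_\vec{\phi}\mu + \sum_{k=1}^{d+1} a_k p_k$ satisfies $\ROp f = \mu$ (since $\ROp$ annihilates $\mathcal{P}_1(\R^d)$ and $\ROp^{-1}_\vec{\phi}$ is a right inverse of $\ROp$) and $\vec{\phi}(f) = \vec{a}$ (since $\ROp^{-1}_\vec{\phi}$ lands in $\RBV^2_\vec{\phi}(\R^d)$ and $\ang{\phi_j, p_k} = \delta[j - k]$).

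Once $\Phi$ is identified as an isometric isomorphism, completeness of $\RBV^2(\R^d)$ reduces to completeness of the target: $\R^{d+1}$ is finite-dimensional and hence complete in any norm, while $\M_{\mathrm{even}}(\cyl)$ is a closed subspace of the Banach space $\M(\cyl)$ (evenness is a linear constraint preserved under the total-variation norm) and is therefore itself complete, and the product of two Banach spaces under the sum norm is a Banach space. The step I expect to be the main obstacle is pinning down the surjectivity claim, specifically that the range of $\ROp$ on $\RBV^2(\R^d)$ is \emph{exactly} $\M_{\mathrm{even}}(\cyl)$: the evenness is forced by the symmetry $\RadonOp f(\vec{\gamma}, t) = \RadonOp f(-\vec{\gamma}, -t)$ of the Radon transform (so $\Phi$ indeed maps into the product), whereas the fact that every even measure is attained is precisely the content of the stable right-inverse $\ROp^{-1}_\vec{\phi}$ constructed in \cref{prop:direct-sum-inverse}. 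This range characterization is the heart of the argument and the part that genuinely relies on the prior analysis in~\cite{ridge-splines}.
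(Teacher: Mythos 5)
Your argument is correct and takes essentially the same route the paper relies on: the proposition is imported without proof from \cite[Theorem~22, Item~3]{ridge-splines}, and the discussion surrounding \cref{prop:direct-sum-inverse} in \cref{app:extension-restriction} records exactly the structure you exploit, namely the direct-sum decomposition $\RBV^2(\R^d) \cong \RBV^2_\vec{\phi}(\R^d) \oplus \mathcal{P}_1(\R^d)$ with $\RBV^2_\vec{\phi}(\R^d)$ isometric (via $f \mapsto \ROp f$) to the even measures in $\M(\cyl)$ and $\mathcal{P}_1(\R^d)$ normed by $f \mapsto \norm{\vec{\phi}(f)}_1$, which is precisely your isomorphism $\Phi$. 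You also correctly isolate the one step that cannot be done from the statements quoted here alone---that the range of $\ROp$ on $\RBV^2(\R^d)$ is all of the even measures, guaranteed by the stable right-inverse $\ROp^{-1}_\vec{\phi}$---as the part carried by the prior analysis in \cite{ridge-splines}.
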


With these results we can now prove \cref{lemma:RBV-domain-identification}.
\begin{proof}[Proof of \cref{lemma:RBV-domain-identification}]
    Given $f \in \RBV^2(\Omega)$ suppose there exists an extension $\tilde{f}_\mathsf{ext}$ such that $\eval{\tilde{f}_\mathsf{ext}}_\Omega = f$ and $\RTV^2_\Omega(f) = \RTV^2(\tilde{f}_\mathsf{ext})$ with direct-sum decomposition
    \begin{equation}
        \tilde{f}_\mathsf{ext} = \int_\cyl g_\vec{\phi}(\dummy, \vec{z}) \dd\tilde{\mu}(\vec{z}) + \tilde{q},
        \label{eq:direct-sum-not-inf}
    \end{equation}
    such that $\supp \tilde{\mu} \not\subset Z_\Omega$. Next, notice that given $g_\vec{\phi}(\dummy, \vec{z})$, where $\vec{z} \not\in Z_\Omega$, we have that $\eval{g_\vec{\phi}(\dummy, \vec{z})}_\Omega$ is an affine function. Therefore, we can find another extension $f_\mathsf{ext}$ such that $\eval{f_\mathsf{ext}}_\Omega = f$ where $\RTV^2(f_\mathsf{ext}) < \RTV^2(\tilde{f}_\mathsf{ext}) = \norm{\tilde{\mu}}_{\M(\cyl)}$ by absorbing every $g_\vec{\phi}(\dummy, \vec{z})$ where $\vec{z} \not\in Z_\Omega$ in the integrand of \cref{eq:direct-sum-not-inf} into the affine term in the direct-sum decomposition so that the restriction to $\Omega$ stays the same, a contradiction. Therefore, there exists an extension $f_\mathsf{ext} \in \RBV^2(\R^d)$ that admits an integral representation
    \begin{equation}
        f_\mathsf{ext}(\vec{x}) = \int_\cyl g_\vec{\phi}(\vec{x}, (\vec{w}, b)) \dd\mu(\vec{w}, b) + q(\vec{x})
        \label{eq:identification-g_phi}
    \end{equation}
    such that $\supp \mu \subset Z_\Omega$, where $\mu$ is an \emph{even} measure and $q$ is an affine function.
    
    Next, since $\Omega \subset \R^d$ is a bounded domain, $Z_\Omega \subset \cyl$ is also a bounded domain. Therefore, since $\supp \mu \subset Z_\Omega$, we can write
    \begin{equation}
        f_\mathsf{ext}(\vec{x}) = \int_{Z_\Omega} \rho(\vec{w}^\T\vec{x} - b) \dd\mu(\vec{w}, b) + \tilde{q}(\vec{x}),
        \label{eq:integral-representation-RBV}
    \end{equation}
    where we combine the affine terms from $g_\vec{\phi}$ (defined in \cref{eq:kernel-of-inverse}) and $q$ into the new affine function $\tilde{q}$. Moreover, with the above representation we have that $\RTV^2_{\Omega}(f) = \norm{\mu}_{\M(Z_\Omega)}$. We also remark that although $\mu$ is even from \cref{prop:direct-sum-inverse}, we can replace $\mu$ with a generic, i.e., not restricted to being even, measure $\tilde{\mu} \in \M(Z_\Omega)$ by noting that integrating against an even measure in \cref{eq:identification-g_phi} corresponds to integrating against a generic measure by considering the activation function $\rho = \abs{\dummy}$. Then, since $\abs{\dummy}$ and $\max\curly{0, \dummy}$ only differ by an affine function, we can absorb this difference for every neuron in the integrand with and the affine function $\tilde{q}$ into a new affine function $\tilde{\tilde{q}}$. Finally, this generic, i.e., not even, measure has the same $\M$-norm as the even measure.
\end{proof}

\section{Proof of \cref{thm:rep-thm-domain}} \label[app]{app:rep-thm-domain}
The proof of \cref{thm:rep-thm-domain} relies on notation introduced in \cref{app:extension-restriction}.
\begin{proof}
    Let $(\vec{\phi}, \vec{p})$ be a biorthogonal system for $\mathcal{P}_1(\R^d)$. From the proof of \cref{lemma:RBV-domain-identification}, we can identify functions in $\RBV^2(\B_1^d)$ with integral representations as in \cref{eq:identification-g_phi}. Therefore, we can instead consider the variational problem
    \[
        \min_{\substack{f \in \RBV^2(\R^d) \\ f = \ROp^{-1}_\vec{\phi}\mu + q \\ \supp \mu \subset \Sph^{d-1} \times [-1, 1]}} \: \sum_{n=1}^N \ell(y_n, f(\vec{x}_n)) + \lambda \, \norm{\ROp f}_{\M(\Sph^{d-1} \times [-1, 1])}.
    \]
    The restrictions of the functions in the solution set of the above display to $\B_1^d$ will then correspond to the solution set of the problem in \cref{eq:variational-problem-domain}. Next, we remark that the proof is identical to the proof of \cref{prop:rep-thm} (which is a special case of our prior work in~\cite[Theorem~1]{ridge-splines}). This is because the proof of~\cite[Theorem~1]{ridge-splines} boiled down to the fact that $\cyl$ is locally compact. Since $\Sph^{d-1} \times [-1, 1]$ is also locally compact, the same proof holds.
\end{proof}

\section{Proof of \cref{thm:sobolev-embedding}} \label[app]{app:sobolev-embedding}

\begin{proof}   
    Since $\B_1^d$ has a Lipschitz boundary, there exists a bounded extension operator
    \[
        \Ext: W^{d+1, 1}(\B_1^d) \to W^{d+1, 1}(\R^d),
    \]
    where we refer the reader to~\cite{calderon1961lebesgue} or~\cite[Chapter~VI]{stein1970singular} for explicit constructions of this operator. Therefore, for $f \in W^{d+1, 1}(\B_1^d)$,
    \[
        \norm{\Ext f}_{W^{d+1, 1}(\R^d)} \lesssim_d \norm{f}_{W^{d+1, 1}(\B_1^d)}.
    \]

    Given $f \in W^{d+1, 1}(\R^d)$, it was shown in~\cite{function-space-relu} that
    \[
        \RTV^2(f) \lesssim_d \norm{f}_{W^{d+1, 1}(\R^d)}.
    \]
    
    Next, we have from the definition of $\RTV^2_{\B_1^d}(\dummy)$ in \cref{eq:RTV-domain} that given any $g \in \RBV^2(\R^d)$,
    \[
        \RTV^2_{\B_1^d}\paren{\eval{g}_{\B_1^d}} \leq \RTV^2(g).
    \]
    Therefore, for any $f \in W^{d+1, 1}(\B_1^d)$,
    \begin{align*}
        \RTV^2_{\B_1^d}(f) &\leq \RTV^2(\Ext f) \\
        &\lesssim_d \norm{\Ext f}_{W^{d+1, 1}(\R^d)} \\
        &\lesssim_d \norm{f}_{W^{d+1, 1}(\B_1^d)}.
    \end{align*}
    The result then follows from the fact that $L^2(\B_1^d)$ is continuously embedded in $L^1(\B_1^d)$.
\end{proof}

\section{Proof of \cref{thm:mse-bound}} \label[app]{app:mse-bound}
To prove \cref{thm:mse-bound}, we will use the general result regarding nonparametric least squares estimators from~\cite[Chapter~13]{wainwright2019high}. This general result follows from Theorem~13.5 and the remarks following, the discussion on pg. 424, and Corollary~13.7 in~\cite[Chapter~13]{wainwright2019high}. We summarize this general result in the following proposition. 

\begin{proposition}[{see~\cite[Chapter~13]{wainwright2019high}}] \label{prop:general-mse-bound}
    Let $\F$ be a \emph{convex} model class that contains the constant function, i.e., $f \equiv 1 \in \F$. Given $f \in \F$, consider the problem of estimating $f$ from the noisy samples
    \[
        y_n = f(\vec{x}_n) + \varepsilon_n, \: n = 1, \ldots, N,
    \]
    where $\curly{\varepsilon_n}_{n=1}^N$ are i.i.d. $\N(0, \sigma^2)$ random variables and $\curly{\vec{x}_n}_{n=1}^N$ are fixed design points in the domain of $f$. Then, assuming a solution exists, any solution to the nonparametric least-squares problem
    \[
        \hat{f} \in \argmin_{f \in \F} \sum_{n=1}^N \abs*{y_n - f(\vec{x}_n)}^2
    \]
    has a mean-squared error bound of
    \[
        \E\norm*{f - \hat{f}\,}_N^2
        \lesssim \delta_N^2,
    \]
    where $\norm{\dummy}_N$ is defined in \cref{eq:empirical-norm} and $\delta_N = \delta$ satisfies the inequality
    \begin{equation}
        \frac{16}{\sqrt{N}} \int_{\frac{\delta^2}{2\sigma^2}}^\delta \sqrt{\log \N(t, \partial\F, \norm{\dummy}_N)} \dd t \leq \frac{\delta^2}{4\sigma},
        \label{eq:entropy-inequality}
    \end{equation}
    where $\N(t, \partial\F, \norm{\dummy}_N)$ denotes the $t$-covering number of the metric space $(\partial\F, \norm{\dummy}_N)$ and
    \[
        \partial\F = \F - \F = \curly{f_1 - f_2 \st f_1, f_2 \in \F}.
    \]
\end{proposition}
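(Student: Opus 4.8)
The plan is to prove this as a standard consequence of empirical process theory for nonparametric least squares, using the localization (critical-radius) method. First I would record the \emph{basic inequality}: since $\hat{f}$ minimizes $\sum_{n=1}^N \abs{y_n - f(\vec{x}_n)}^2$ over the feasible set $\F$, which contains the true $f$, substituting $y_n = f(\vec{x}_n) + \varepsilon_n$ and rearranging gives
\[
    \norm{\hat{f} - f}_N^2 \leq \frac{2}{N}\sum_{n=1}^N \varepsilon_n\bigl(\hat{f}(\vec{x}_n) - f(\vec{x}_n)\bigr).
\]
The right-hand side is a Gaussian empirical process evaluated at the (random) error function $\hat{f} - f$, which lies in the difference class $\partial\F$. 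Thus the problem reduces to controlling this process uniformly over $\partial\F$, with the subtlety that the indexing function $\hat{f} - f$ depends on the very same noise $\curly{\varepsilon_n}_{n=1}^N$.

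To this end I would introduce the localized Gaussian complexity
\[
    \mathcal{G}_N(\delta) \coloneqq \E\sq{\sup_{\substack{g \in \partial\F \\ \norm{g}_N \leq \delta}} \abs*{\frac{1}{N}\sum_{n=1}^N \varepsilon_n\, g(\vec{x}_n)}}.
\]
Conditioned on the design points, the inner object is the supremum of a Gaussian process indexed by the metric space $(\partial\F, \norm{\dummy}_N)$, so Dudley's entropy integral bounds it by a chaining integral of $\sqrt{\log \N(t, \partial\F, \norm{\dummy}_N)}$ against $\dd t$, normalized by $\sigma/\sqrt{N}$. Matching this chaining bound against the defining relation \cref{eq:entropy-inequality} shows that $\delta_N$ is precisely the \emph{critical radius} at which the localized complexity drops to order $\delta^2/\sigma$; the lower truncation $\delta^2/(2\sigma^2)$ appearing in \cref{eq:entropy-inequality} reflects the finest scale at which the chaining must be resolved given the noise level.

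The main work is then to transfer control of $\mathcal{G}_N$ into a high-probability bound on the random quantity $\norm{\hat{f} - f}_N$. Because $\hat{f}$ is itself a function of the noise, one cannot simply substitute a fixed radius into the complexity bound; instead I would run a \emph{peeling} argument. Convexity of $\F$ (and hence of $\partial\F = \F - \F$, which is symmetric and contains the origin, so star-shaped) guarantees that $\mathcal{G}_N(\delta)/\delta$ is nonincreasing, which is what makes the critical inequality self-consistent across scales. Partitioning the possible values of $\norm{\hat{f} - f}_N$ into dyadic shells $2^{j}\delta_N \leq \norm{g}_N \leq 2^{j+1}\delta_N$, applying Gaussian concentration of the supremum on each shell, and combining with the basic inequality, I would show that the event $\norm{\hat{f} - f}_N > t\,\delta_N$ has probability decaying like $e^{-c N t^2 \delta_N^2/\sigma^2}$ for large $t$. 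Integrating this tail yields $\E\norm{\hat{f} - f}_N^2 \lesssim \delta_N^2$, the claimed bound.

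The hard part will be precisely this last localization/peeling step: the estimator's dependence on the noise forbids a pointwise application of the complexity bound, and the argument only closes because the star-shaped structure (supplied by convexity) forces the single critical radius $\delta_N$ to simultaneously control the empirical process on \emph{every} scale. The hypothesis that $\F$ contains the constant function is a mild normalization ensuring the shifted problem is well posed; the substantive assumptions are convexity of $\F$ and the entropy growth encoded in \cref{eq:entropy-inequality}. Since this is a verbatim specialization of the results in \cite[Chapter~13]{wainwright2019high}, I would ultimately cite that development rather than reproduce the full chaining and concentration estimates.
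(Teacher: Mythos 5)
Your outline is exactly the argument behind the cited result: the paper itself offers no independent proof of this proposition, but simply assembles it from Theorem~13.5, the surrounding remarks, and Corollary~13.7 of Wainwright's Chapter~13, which proceed via precisely the basic inequality, localized Gaussian complexity, Dudley entropy integral, critical radius, and star-shaped peeling argument you describe. Your proposal is correct and takes essentially the same route, ending (as the paper does) by deferring the technical chaining and concentration details to that reference.
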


We will now use \cref{prop:general-mse-bound} to prove \cref{thm:mse-bound}.

\begin{proof}[Proof~of~\cref{thm:mse-bound}]
In \cref{thm:mse-bound}, our model class is
\begin{equation}
    \F_C \coloneqq \curly{f \in \RBV^2(\B_1^d) \st \RTV^2_{\B_1^d}(f) \leq C}.
    \label{eq:RTV2-ball}
\end{equation}
Since $\RTV^2_{\B_1^d}(\dummy)$ is a seminorm on a Banach space, $\F_C$ is convex. The constant function is contained in $\F_C$ since the null space of $\RTV^2_{\B_1^d}(\dummy)$ is the space of affine functions.

Notice that
\[
    \partial \F_C = \F_C - \F_C = 2 \F_C \subset \F_{2C},
\]
so it suffices to upper bound the metric entropy of $\F_{2C}$ to find a $\delta_N$ that satisfies \cref{eq:entropy-inequality}. By noticing that $\norm{\dummy}_N \leq \norm{\dummy}_{L^\infty(\B_1^d)}$, we can use the approximation rate from \cref{thm:approximation} to upper bound (up to logarithmic factors) the metric entropy
\[
    \log \N(t, \F_{2C}, \norm{\dummy}_N) \lessapprox_d \paren{\frac{C}{t}}^{\frac{2d}{d + 3}}
\]
where $\lessapprox$ hides constant and logarithmic factors. The subscript $d$ denotes that the implicit constant depends on $d$. The connection between approximation rates and metric entropy can be viewed as a variant of Carl's inequality~\cite{carl1981entropy} (also see~\cite[Theorem~10]{siegel})

Next,
\begin{align*}
    &\phantom{{}={}} \frac{1}{\sqrt{N}} \int_{\frac{\delta^2}{2\sigma^2}}^\delta \sqrt{\log \N(t, \partial\F, \norm{\dummy}_N)} \dd t \\
    &\leq \frac{1}{\sqrt{N}} \int_0^\delta \sqrt{\log \N(t, \partial\F, \norm{\dummy}_N)} \dd t \\
    &\lessapprox_d \frac{1}{\sqrt{N}} \int_0^\delta \paren{\frac{C}{t}}^{\frac{d}{d + 3}} \dd t \\
    &= \frac{C^{\frac{d}{d+3}}}{\sqrt{N}} \eval*{t^{\frac{3}{d+3}}}_0^\delta \\
    &= C^{\frac{d}{d+3}}\frac{\delta^{\frac{3}{d + 3}}}{\sqrt{N}}.
\end{align*}

From \cref{eq:entropy-inequality}, we want to find $\delta_N = \delta$ that satisfies
\begin{equation}
    C^{\frac{d}{d+3}}\frac{\delta^{\frac{3}{d + 3}}}{\sqrt{N}} \lessapprox_d \frac{\delta^2}{\sigma}.
    \label{eq:satisfies-simplified}
\end{equation}
We have (up to logarithmic factors) that
\[
    \delta_N^2 \asymp_d C^{\frac{2d}{2d + 3}} \paren{\frac{N}{\sigma^2}}^{-\frac{d + 3}{2d + 3}}
\]
satisfies \cref{eq:satisfies-simplified}.
\end{proof}
\section{Proof of \cref{thm:minimax}} \label[app]{app:minimax}
To prove \cref{thm:minimax} we will use the general result of Yang and Barron (see~\cite[Proposition~1]{yang1999information} and~\cite[Chapter~15]{wainwright2019high}) regarding minimax rates over model classes. We summarize this result in the following proposition.

\begin{proposition}[{see~\cite[Proposition~1]{yang1999information} and~\cite[Chapter~15]{wainwright2019high}}] \label{prop:yang-barron}
    Let $\F$ be a model class. Given $f \in \F$, consider the problem of estimating $f$ from the noisy samples
    \[
        y_n = f(\vec{x}_n) + \varepsilon_n, \: n = 1, \ldots, N,
    \]
    where $\curly{\varepsilon_n}_{n=1}^N$ are i.i.d. $\N(0, \sigma^2)$ random variables and $\curly{\vec{x}_n}_{n=1}^N$ are i.i.d. from some probability measure $\P_X$ supported on $\B_1^d$. Then, if functions in $\F$ are uniformly bounded and the metric entropy is of the form
    \[
    \log \N(t, \F, \norm{\dummy}_{L^2(\B_1^d; \P_X)}) \asymp \paren{\frac{1}{t}}^r, \quad r > 0,
    \]
    where $\norm{\dummy}_{L^2(\B_1^d; \P_X)}$ denotes the $L^2$-norm with respect to the measure $\P_X$ on $\B_1^d$, we have the minimax rate
    \[
        \inf_{\hat{f}} \sup_{f \in \F} \, \E\norm*{f - \hat{f}}_{L^2(\B_1^d; \P_X)}^2 \asymp t_N^2,
    \]
    where $t_N^2 = t^2$ satisfies
    \[
        t^2 \asymp \frac{\log \N(t, \F, \norm{\dummy}_{L^2(\B_1^d; \P_X)})}{N}.
    \]
\end{proposition}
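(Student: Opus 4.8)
The plan is to establish the two-sided estimate by proving a matching information-theoretic lower bound and a constructive upper bound. Throughout I abbreviate $\norm{\dummy} \coloneqq \norm{\dummy}_{L^2(\B_1^d; \P_X)}$ and write $\N(t) \coloneqq \N(t, \F, \norm{\dummy})$. The polynomial entropy hypothesis $\log \N(t) \asymp t^{-r}$ makes the critical equation $t^2 \asymp \log \N(t)/N$ solvable in closed form: substituting gives $t^{2+r} \asymp 1/N$, hence $t_N^2 \asymp N^{-2/(2+r)}$. Both bounds will be shown to match this rate, and the uniform boundedness of $\F$ will be used to keep all quantities well-defined and to transfer between empirical and population norms.

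For the lower bound I would use Fano's method sharpened by the Yang--Barron mutual-information bound. First extract a maximal $t$-packing $\curly{f_1, \ldots, f_M} \subset \F$ in the $\norm{\dummy}$-metric; since packing and covering numbers are comparable, $\log M \asymp \log \N(t) \asymp t^{-r}$. Reducing estimation to the $M$-ary testing problem with $J$ drawn uniformly, Fano's inequality gives $\inf_{\hat{f}} \max_j \E \norm{f_j - \hat{f}}^2 \gtrsim t^2 \paren{1 - \frac{I(J;Y) + \log 2}{\log M}}$, where $I(J;Y)$ is the mutual information between the index and the data. For the Gaussian observation model the pairwise divergence is $D_{\mathrm{KL}}(P_{f_j} \| P_{f_k}) = \frac{N}{2\sigma^2} \norm{f_j - f_k}^2$. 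Rather than bounding $I(J;Y)$ by the crude diameter, the Yang--Barron argument controls it by the global capacity: for any auxiliary scale $\epsilon$ one has $I(J;Y) \le \log \N(\epsilon) + \frac{N \epsilon^2}{2\sigma^2}$, the covering being taken in the same $\norm{\dummy}$-metric (equivalent to the KL metric). I would then choose $\epsilon \asymp t \asymp t_N$, the solution of $N \epsilon^2 \asymp \sigma^2 \log \N(\epsilon)$, so that $I(J;Y) \lesssim \log \N(t_N)$, and verify using the exponent $r$ that this is at most $\tfrac{1}{2} \log M$. Fano then yields $\inf_{\hat{f}} \sup_{f \in \F} \E \norm{f - \hat{f}}^2 \gtrsim t_N^2$.

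For the upper bound I would exhibit an estimator attaining the rate, following the Yang--Barron sieve/aggregation construction. Take a minimal $\epsilon$-net $\curly{g_1, \ldots, g_{\N(\epsilon)}}$ of $\F$ and form either the minimum-distance estimator onto the net or a convex (exponential-weights) aggregate of the net elements. The risk then splits into an approximation term $\lesssim \epsilon^2$ (the net is $\epsilon$-dense) and an estimation term $\lesssim \frac{\sigma^2 \log \N(\epsilon)}{N}$ (the price of selecting among $\N(\epsilon)$ candidates under Gaussian noise, here the boundedness hypothesis enters). Balancing $\epsilon^2 \asymp \frac{\sigma^2 \log \N(\epsilon)}{N}$ is exactly the critical equation, whose solution is $\epsilon \asymp t_N$, giving $\sup_{f \in \F} \E \norm{f - \hat{f}}^2 \lesssim t_N^2$. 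This bias--variance balance over a sieve holds for all $r > 0$ and, combined with concentration of the empirical norm onto $\norm{\dummy}$, delivers the matching upper bound.

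The main obstacle is the lower-bound mutual-information control. The Yang--Barron refinement replaces the naive Fano bound (which would use only $\max_{j,k} D_{\mathrm{KL}}$ and degrade badly) by the global KL-covering bound, and the delicate point is the simultaneous calibration of the packing radius $t$ and the auxiliary covering radius $\epsilon$: one must check that at the critical scale $t_N$ the mutual information is a strict fraction of $\log M$, so the Fano factor $1 - \frac{I(J;Y) + \log 2}{\log M}$ stays bounded away from zero. This is precisely where the polynomial form $\log \N(t) \asymp t^{-r}$ is essential, since it guarantees that $t_N$ and $\epsilon_N$ are comparable and that the two entropies are of the same order. A secondary subtlety is that the theorem asserts an \emph{achievable} minimax rate, so the upper-bound estimator must attain $t_N^2$ uniformly over $\F$ without knowledge of the true $f$; aggregating over the $\epsilon$-net, rather than projecting onto a single model, is what secures this uniformity.
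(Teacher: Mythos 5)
Your proposal is correct and follows essentially the same route as the paper, which states this proposition without an internal proof and defers to~\cite[Proposition~1]{yang1999information} and~\cite[Chapter~15]{wainwright2019high}: your lower bound is exactly the Fano argument sharpened by the Yang--Barron KL-covering bound on the mutual information, and your upper bound is the standard net-based sieve/aggregation achievability argument, both balanced at the critical radius solving $t^2 \asymp \log \mathcal{N}(t)/N$, which the polynomial entropy renders $t_N^2 \asymp N^{-2/(2+r)}$. Your closing remark---that the polynomial form of the entropy is what keeps the Fano factor bounded away from zero once the packing radius is taken a small constant multiple of $t_N$, so that the packing and auxiliary covering entropies are of the same order---identifies precisely the delicate calibration in the cited proof.
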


We will use the result of \cref{prop:yang-barron} to derive the minimax rate for the model class
\begin{equation}
    \mathscr{G}_C \coloneqq \curly{f \in \V^2(\B_1^d) \st \norm{f}_{\V^2(\B_1^d)} \leq C},
    \label{eq:variation-model-class}
\end{equation}
where $\V^2(\B_1^d)$ is the variation space defined in \cref{sec:other-spaces}. We will then use this minimax rate to derive a minimax lower bound for the model class in \cref{eq:model-class}.

\begin{lemma} \label{lemma:variation-minimax}
    Consider the problem of estimating $f \in \mathscr{G}_C$ (defined in \cref{eq:variation-model-class}) from the noisy samples
    \[
        y_n = f(\vec{x}_n) + \varepsilon_n, \: n = 1, \ldots, N,
    \]
    where $\curly{\varepsilon_n}_{n=1}^N$ are i.i.d. $\N(0, \sigma^2)$ random variables and $\curly{\vec{x}_n}_{n=1}^N$ are i.i.d. uniform random variables on $\B_1^d$. The minimax rate for this model class is
    \[
        \inf_{\hat{f}} \sup_{f \in \mathscr{G}_C} \, \E\norm*{f - \hat{f}}_{L^2(\B_1^d; \P_X)}^2 \asymp_d N^{-\frac{d + 3}{2d + 3}},
    \]
    where the $L^2(\B_1^d; \P_X)$-norm is the $L^2$-norm with respect to the uniform probability measure measure on $\B_1^d$.
\end{lemma}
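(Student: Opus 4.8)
The plan is to invoke the Yang--Barron machinery of \cref{prop:yang-barron} directly, so that the entire proof reduces to verifying its two hypotheses for the model class $\mathscr{G}_C$: uniform boundedness, and a \emph{two-sided} (i.e., $\asymp$) metric-entropy estimate of polynomial type. First I would check uniform boundedness. Every $f \in \mathscr{G}_C$ admits a representation $f = \int_{\Sph^{d-1}\times[-2,2]} \rho(\vec{w}^\T(\dummy) - b)\dd\mu(\vec{w},b)$ with $\norm{\mu}_{\M(\Sph^{d-1}\times[-2,2])} \leq C$, and for $\vec{x}\in\B_1^d$, $\vec{w}\in\Sph^{d-1}$, $b\in[-2,2]$ the integrand obeys $0 \le \rho(\vec{w}^\T\vec{x}-b) \le 3$; hence $\norm{f}_{L^\infty(\B_1^d)} \le 3C$, which gives the required uniform bound.

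The crux is the metric entropy, where I would establish
\[
    \log \N\paren{t, \mathscr{G}_C, \norm{\dummy}_{L^2(\B_1^d;\P_X)}} \asymp_d \paren{\frac{C}{t}}^{\frac{2d}{d+3}}.
\]
The upper bound is essentially already in hand: exactly as in the proof of \cref{thm:mse-bound}, the $L^\infty$-approximation rate of \cref{thm:approximation} (a Carl's-inequality-type argument) controls the covering numbers from above, and since $\norm{\dummy}_{L^2(\B_1^d;\P_X)} \lesssim_d \norm{\dummy}_{L^\infty(\B_1^d)}$ the same bound transfers to the $L^2(\P_X)$-norm. The matching \emph{lower} bound is where I would lean on the recent sharp metric-entropy estimates for the variation space $\V^2(\B_1^d)$ from~\cite{siegel}; by \cref{thm:RBV-variation-space} these transfer to our class up to the equivalence of norms, and since the uniform probability measure $\P_X$ differs from Lebesgue measure on $\B_1^d$ only by the constant factor $\abs{\B_1^d}$, the two $L^2$-entropies agree up to dimension-dependent constants. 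This two-sided estimate pins down the exponent $r = \tfrac{2d}{d+3}$ needed to apply \cref{prop:yang-barron}.

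Finally I would solve the Yang--Barron fixed-point relation $t^2 \asymp N^{-1}\paren{1/t}^{r}$ with $r = \tfrac{2d}{d+3}$. Since $2+r = \tfrac{2(2d+3)}{d+3}$, this yields $t_N^2 \asymp N^{-2/(2+r)} = N^{-\frac{d+3}{2d+3}}$, which is exactly the claimed rate. The main obstacle is the entropy lower bound: the upper bound follows mechanically from our approximation theory, but the matching lower bound genuinely requires the packing constructions of~\cite{siegel}, together with care in transferring them through the Banach-space equivalence of \cref{thm:RBV-variation-space} and from the Lebesgue $L^2$-norm to the uniform-probability $L^2(\P_X)$-norm. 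A secondary, routine check is that the normalizations demanded by \cref{prop:yang-barron} are met, in particular that the entropy scaling holds as an $\asymp$ and not merely as a $\lesssim$.
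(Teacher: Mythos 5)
Your proposal is correct and follows essentially the same route as the paper: verify uniform boundedness of $\mathscr{G}_C$, obtain the two-sided metric entropy bound $\log \N(t, \mathscr{G}_C, \norm{\dummy}_{L^2(\B_1^d)}) \asymp_d (1/t)^{2d/(d+3)}$ (the paper cites~\cite{siegel} for both directions, while you re-derive the upper bound from \cref{thm:approximation} and cite~\cite{siegel} only for the lower bound --- an immaterial difference), and then solve the Yang--Barron fixed-point relation of \cref{prop:yang-barron} to get $t_N^2 \asymp N^{-\frac{d+3}{2d+3}}$. The measure-normalization remark and the explicit $L^\infty$ bound on the atoms match the paper's argument as well.
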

\begin{proof}
    We are interested in applying \cref{prop:yang-barron} with $\P_X$ being the uniform probability measure on $\B_1^d$. Since the Lebesgue measure is just a constant scaling of the uniform measure (where the constant is the volume of $\B_1^d$), it suffices to know the metric entropy with respect to the $L^2(\B_1^d)$-norm. The model class in \cref{eq:variation-model-class} was extensively studied in~\cite{siegel} and it is known that
    \[
        \log \N(t, \mathscr{G}_C, \norm{\dummy}_{L^2(\B_1^d)}) \asymp_d \paren{\frac{1}{t}}^{\frac{2d}{d+3}}.
    \]
    We refer the reader to~\cite[Theorem~4~and~Equation~(68)]{siegel} for the upper bound and~\cite[Theorem~8]{siegel} for the lower bound. We also remark that the model class $\mathscr{G}_C$ is uniformly bounded since the functions in $\V^2(\B_1^d)$ can be written as a superposition of $L^\infty(\B_1^d)$-bounded atoms. With the metric entropy in the above display, we immediately have the minimax rate in the lemma statement by applying \cref{prop:yang-barron}.
\end{proof}
    
We will now use \cref{lemma:variation-minimax} to derive a minimax lower bound for the model class in \cref{eq:model-class}.

\begin{proof}[Proof~of~\cref{thm:minimax}]
    It suffices to show that $\mathscr{G}_C \subset \F_C$, where $\F_C$ is defined in \cref{eq:RTV2-ball}. Given $f \in \V^2(\B_1^d)$ (or in $\RBV^2(\B_1^d)$, since they are the same space by \cref{thm:RBV-variation-space}), we can find an integral representation as in \cref{eq:integral-representation-variation} such that 
    \[
        \norm{f}_{\V^2(\B_1^d)} = \norm{\mu}_{\M(\Sph^{d-1} \times [-2, 2])}.
    \]
    Next, if we let $\nu \coloneqq \eval{\mu}_{\Sph^{d-1} \times [-1, 1]}$, we can write $f$ as an integral representation as in \cref{rem:identification} such that
    \[
        \RTV^2_{\B_1^d}(f) \leq \norm{\nu}_{\M(\Sph^{d-1} \times [-1, 1])}.
    \]
    The previous two displays imply $\RTV^2_{\B_1^d}(f) \leq \norm{f}_{\V^2(\B_1^d)}$. Therefore, $\mathscr{G}_C \subset \F_C$.
\end{proof}

\section{Proof of \cref{thm:linear-minimax}} \label[appendix]{app:linear-minimax}
To prove \cref{thm:linear-minimax}, we will require several results from ridgelet analysis. It was shown in~\cite[Theorem~7]{candes-phd} that we have the continuous embedding
\[
    R^{(d+3)/2}_{1, 1}(\B_1^d) \subset \V^2(\B_1^d)
\]
where we recall that $\V^2(\B_1^d)$ is the variation space for shallow ReLU networks, and $R^s_{p, q}(\B_1^d)$ denotes the \emph{ridgelet space} of Cand{\`e}s~\cite{candes-phd}. Ridgelet spaces were proposed as a generalization of Besov spaces, and in the univariate case, the ridgelet space $R^s_{p, q}(\B_1^d)$ coincides with the Besov space $B^s_{p,q}[-1, 1]$.

Next, recall that we showed in the proof of \cref{thm:minimax} that $\mathscr{G}_C \subset \F_C$, where $\mathscr{G}_C$ and $\F_C$ are the model classes defined in \cref{eq:variation-model-class} and \cref{eq:model-class}, respectively. Combining this fact with the above display, we see that to prove \cref{thm:linear-minimax}, it suffices to show the linear minimax lower bound for the model class
\[
    \mathscr{H}_C \coloneqq \curly{f \in R^{(d+3)/2}_{1, 1}(\B_1^d) \st \norm{f}_{R^{(d+3)/2}_{1, 1}(\B_1^d)} \leq C}.
\]
We will make use of the following generic result.
\begin{proposition}[{see~\cite[Proof~of~Theorem~4.1]{candes2003ridgelets}}]
    Let $\F \subset L^2(\B_1^d)$ be a convex model class and consider the problem of estimating $f \in \F$ from the continuous white noise model
    \[
        \dd Y_\varepsilon(\vec{x}) = f(\vec{x}) \dd\vec{x} + \varepsilon \dd W(\vec{x}), \quad \vec{x} \in \B_1^d,
    \]
    where $\varepsilon$ is the noise level and $\dd W(\vec{x})$ is a standard $d$-dimensional Wiener process.
    Furthermore, suppose that for any $\delta > 0$, there exists $\lesssim_d K_\delta$ orthogonal elements $\curly{g_k}_{k=1}^K \subset \F$ such that $\norm{g_k}_{L^2(\B_1^d)} = \delta$, $k =1, \ldots, K$. Then, the linear minimax rate is lower-bounded by
    \[
        \inf_{\hat{f} \text{ linear}} \sup_{f \in \F} \E\norm*{f - \hat{f}}_{L^2(\B_1^d)}^2 \gtrsim_d \delta_\varepsilon^2,
    \]
    where $\delta_\varepsilon = \delta$ solves
    \[
        \delta^2 = \varepsilon^2 K_\delta.
    \]
\end{proposition}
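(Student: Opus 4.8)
The plan is to reduce the $d$-dimensional white-noise problem to a finite Gaussian sequence model in the directions of the orthogonal atoms, and then to lower bound the linear minimax risk of the resulting hyperrectangle by the risk of the best linear (Wiener) estimator, balancing the two sides of the defining relation $\delta^2 = \varepsilon^2 K_\delta$. First I would normalize the atoms, setting $e_k \coloneqq g_k / \delta$ so that $\curly{e_k}_{k=1}^K$ is an orthonormal system in $L^2(\B_1^d)$. Integrating the observation $\dd Y_\varepsilon$ against each $e_k$ yields the scalar observations $y_k = \ang{f, e_k} + \varepsilon z_k$, where the $z_k$ are i.i.d. $\N(0,1)$ because orthonormal test functions paired with the Wiener process $\dd W$ produce independent standard Gaussians. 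Any linear estimator $\hat f$ of $f$ induces a linear estimator of the coefficient vector $\theta \coloneqq (\ang{f, e_k})_{k=1}^K$, and since the $e_k$ are orthonormal, $\norm{f - \hat f}_{L^2(\B_1^d)}^2$ dominates the squared error in estimating $\theta$. Thus it suffices to lower bound the linear minimax risk in the $K$-dimensional sequence model over the coefficient set $\Theta \coloneqq \curly{\theta \in \R^K \st \sum_k \theta_k e_k \in \F}$.

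The decisive structural step is that $\Theta$ contains the full hyperrectangle $\mathcal R_\delta \coloneqq \curly{\theta \in \R^K \st \abs{\theta_k} \le \delta}$. The atoms $g_k$ place the axis points $\pm\delta\,\vec{e}_k$ into $\Theta$, and the convex (and, for a genuine model class, balanced) structure of $\F$ must be used to promote these into the inscribed product set $\mathcal R_\delta$; in the ridgelet application this is exactly the statement that a Besov-type ball admits arbitrary coefficient configurations at a fixed scale up to a per-coefficient bound. I expect this to be the main obstacle: pure convexity only furnishes the symmetric convex hull $\curly{\theta \st \sum_k \abs{\beta_k} \le 1}$ of the atoms, whose largest inscribed cube has side $\delta/K$ and yields only the far weaker bound $\delta^2/K$. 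It is the full product structure of $\mathcal R_\delta$ that produces the sharp rate, so the geometry of $\F$ along the orthogonal directions must be exploited, not merely convexity.

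Finally, over $\mathcal R_\delta$ the sequence problem decouples across coordinates. The linear minimax risk of a bounded scalar mean observed in Gaussian noise is the Wiener-filter risk $\varepsilon^2\delta^2/(\varepsilon^2 + \delta^2)$; equivalently, placing an independent product prior with coordinate variance $\asymp \delta^2$ supported on $\mathcal R_\delta$ and using that the minimax risk over $\F$ dominates the Bayes risk of the best linear estimator against any prior supported on $\F$ gives
\[
    \inf_{\hat f \text{ linear}} \sup_{f \in \F} \E\norm*{f - \hat f}_{L^2(\B_1^d)}^2 \gtrsim_d K\,\frac{\varepsilon^2 \delta^2}{\varepsilon^2 + \delta^2},
\]
where $K \gtrsim_d K_\delta$. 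Choosing $\delta = \delta_\varepsilon$ to solve $\delta^2 = \varepsilon^2 K_\delta$ forces $\varepsilon^2 \le \delta^2$, so each of the $\gtrsim_d K_\delta$ coordinates contributes at least $\varepsilon^2/2$; summing yields the lower bound $\gtrsim_d K_\delta\, \varepsilon^2 = \delta_\varepsilon^2$, which is the claim.

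The remaining points are routine: verifying the scalar Wiener-filter risk formula, confirming that the best-linear-Bayes risk against a product prior supported on $\mathcal R_\delta$ lower bounds the minimax risk over $\F$, and checking that projecting the white-noise model onto $\curly{e_k}$ produces the independent sequence model asserted above. The substantive content is entirely the inscribed-hyperrectangle step, where the balanced scale-localized structure of the model class, rather than convexity alone, is what separates the sharp $\delta_\varepsilon^2$ rate from the weaker $\delta_\varepsilon^2/K$ rate that an $\ell^1$-type hull would give.
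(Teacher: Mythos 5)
Your reduction to the $K$-dimensional Gaussian sequence model is fine, but the step you yourself flag as decisive---inscribing the full hyperrectangle $\curly{\theta \in \R^K \st \abs{\theta_k} \leq \delta}$ in the coefficient set---is a genuine gap: it is unjustified by the hypotheses, false in the intended application, and also unnecessary. The hypothesis only supplies $K$ orthogonal elements of norm $\delta$ inside $\F$; in the application, $\F$ is a ball in the ridgelet space $R^{(d+3)/2}_{1,1}(\B_1^d)$, and a hypercube vertex $\sum_k \pm g_k$ has ridgelet norm of order $K$ times that of a single atom (the $(p,q)=(1,1)$ norm is additive over scale-localized atoms), so it lies far outside the ball: the ``balanced scale-localized structure'' you invoke is precisely what such a class does \emph{not} have. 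Worse, your argument never uses linearity of the estimator in an essential way: the per-coordinate Wiener risk $\varepsilon^2\delta^2/(\varepsilon^2+\delta^2)$ over an interval $[-\delta,\delta]$ is within a universal constant of the \emph{nonlinear} minimax risk on that interval (Ibragimov--Khasminskii), so if the hyperrectangle were contained in $\F$ your proof would yield the lower bound $\gtrsim_d \delta_\varepsilon^2 \asymp (N/\sigma^2)^{-3/(d+3)}$ for \emph{all} estimators, contradicting the upper bound of \cref{thm:mse-bound} and erasing exactly the linear/nonlinear gap that \cref{thm:linear-minimax} is meant to exhibit. An argument for the linear minimax rate that proves too much in this way cannot be repaired by better bookkeeping; the structural premise is wrong.

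The route in the proof the paper cites (Cand\`es) extracts the sharp bound from the $2K$ points $\pm\delta \vec{e}_k$ alone, with linearity of the estimator doing the work your product structure was meant to do. For a linear estimator $\hat{\theta} = A\vec{y}$ in the sequence model, the risk at $\theta$ is $\norm{(A - I)\theta}_2^2 + \varepsilon^2 \norm{A}_{\mathrm{F}}^2$, so the sup over $\curly{\pm\delta \vec{e}_k}$ dominates the average $\frac{\delta^2}{K}\norm{A - I}_{\mathrm{F}}^2 + \varepsilon^2\norm{A}_{\mathrm{F}}^2$; this decouples over the columns of $A$ and is minimized by the shrinkage $A = \frac{\delta^2/K}{\delta^2/K + \varepsilon^2}\, I$, giving exactly $\frac{K\varepsilon^2\delta^2}{\delta^2 + K\varepsilon^2} \geq \frac{1}{2}\min\curly{\delta^2, K\varepsilon^2}$, which equals $\delta_\varepsilon^2/2$ at the critical $\delta$ solving $\delta^2 = \varepsilon^2 K_\delta$. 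Equivalently, in the language of Donoho--Liu--MacGibbon, the linear minimax risk over the orthosymmetric $\ell^1$-hull of $\curly{\pm\delta \vec{e}_k}$ coincides with that over its \emph{quadratically convex} hull, the $\ell^2$-ball of radius $\delta$; your computation that the largest cube inscribed in the $\ell^1$-hull has side $\delta/K$ is correct but irrelevant, because inscribed hyperrectangles determine the linear minimax risk only for quadratically convex sets---this is where your pessimism about the ``weaker bound $\delta^2/K$'' led you astray. The steps you call routine (projecting the white noise model onto the span, using symmetry so that $-g_k \in \F$, which holds since the model class is a norm ball) are indeed fine.
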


\begin{proposition}[{see~\cite[Theorem~11]{candes-phd} and \cite[Lemmas~A.1, A.2, and A.3]{candes2003ridgelets}}]
    For any integer $j \geq 2$, There exists a set $\curly{g_k}_{k=1}^K$ of orthogonal elements with $K \gtrsim_d 2^{jd}$ contained in
    \[
         \curly{f \in R^s_{1, 1}(\B_1^d) \st \norm{f}_{R^s_{1, 1}(\B_1^d)} \leq C},
    \]
    where $C > 0$ is a constant, such that
    \[
        \norm{g_k}_{L^2(\B_1^d)} = 2^{j(s - d/2)}, \: k = 1, \ldots, K.
    \]
\end{proposition}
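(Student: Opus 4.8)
The plan is to read off this packing directly from Cand\`es's discrete ridgelet system on $\B_1^d$, viewing the proposition as the ridgelet analogue of the standard monoscale wavelet packing that drives Besov-space lower bounds. First I would recall the structure of the ridgelet frame underlying \cite[Theorem~11]{candes-phd} and \cite[Lemmas~A.1, A.2, and A.3]{candes2003ridgelets}: its atoms $\psi_\lambda$ are indexed by $\lambda = (j, \vec{\theta}, k)$, where $j$ is a dyadic scale, $\vec{\theta} \in \Sph^{d-1}$ is a discretized direction, and $k$ a discretized offset, so that $\psi_\lambda$ behaves like a one-dimensional wavelet of scale $2^{-j}$ in the direction $\vec{\theta}$ while being ridge-like (essentially constant) in the orthogonal directions. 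At a fixed scale $j$ the offsets range over a set of cardinality $\asymp 2^j$, since the ridge variable $\vec{\theta}^\T\vec{x}$ sweeps a bounded interval on $\B_1^d$ resolved at resolution $2^{-j}$, while the directions range over a $2^{-j}$-net of $\Sph^{d-1}$, of cardinality $\asymp 2^{j(d-1)}$. Hence the scale-$j$ index set $\Lambda_j$ has $\lvert\Lambda_j\rvert \asymp 2^{j(d-1)}\cdot 2^j = 2^{jd}$ elements, which is the origin of the claimed count $K \gtrsim_d 2^{jd}$.

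Second, I would invoke the ridgelet norm-equivalence of \cite[Theorem~11]{candes-phd}, which identifies $\norm{\dummy}_{R^s_{1,1}(\B_1^d)}$ with a scale-weighted $\ell^1$-norm of the ridgelet coefficients $\curly{\ang{f, \psi_\lambda}}$, the weight at scale $j$ being a fixed power of $2^j$ determined by $s$ and $d$. A function whose ridgelet expansion is concentrated at a single scale $j$ therefore has an $R^s_{1,1}$-norm equal, up to constants, to a fixed power of $2^j$ times its $L^2(\B_1^d)$-norm. The crucial dimensional bookkeeping here is geometric: a scale-$j$ ridgelet is $2^{-j}$-thin in the direction $\vec{\theta}$ but spread across the whole ball in the remaining $d-1$ directions, and it is precisely this anisotropic spreading that converts the one-dimensional wavelet smoothness $s$ into the exponent $s - d/2$ relating the two norms. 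Rescaling each scale-$j$ atom by the common scalar that forces its $R^s_{1,1}$-norm to equal $C$ then places every atom inside the ball $\curly{f : \norm{f}_{R^s_{1,1}(\B_1^d)} \leq C}$, and with Cand\`es's normalization of the ridgelet-space norm the resulting common $L^2(\B_1^d)$-norm is exactly $2^{j(s-d/2)}$.

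The remaining and principal obstacle is genuine orthogonality, since the scale-$j$ atoms are only \emph{almost} orthogonal across distinct directions and offsets, so I cannot simply take all of $\Lambda_j$. Here I would use the Riesz/Gram bounds of \cite[Lemmas~A.1, A.2, and A.3]{candes2003ridgelets}, which control the scale-$j$ Gram matrix and show that the scale-$j$ atoms span a subspace of dimension $\asymp 2^{jd}$ on which the frame is snug with constants independent of $j$. Applying Gram--Schmidt (equivalently, diagonalizing the Gram matrix and retaining the bulk of its spectrum) produces a genuinely orthogonal family $\curly{g_k}_{k=1}^K$ with $K \gtrsim_d 2^{jd}$; because each $g_k$ is a linear combination of scale-$j$ atoms only, the single-scale norm relation of the previous paragraph transfers to every $g_k$ up to the uniform Riesz constants, and a final rescaling to $R^s_{1,1}$-norm $C$ delivers the stated common $L^2$-norm $2^{j(s-d/2)}$. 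I expect the hard part to be exactly this bookkeeping: verifying that the orthogonalization preserves the cardinality $\asymp 2^{jd}$ rather than collapsing it through the near-degeneracy of nearby directions, and confirming that the scale-$j$ weight in Cand\`es's ridgelet-space norm yields precisely the exponent $s - d/2$ once the $d$-dimensional ridge geometry is accounted for, rather than a nearby power.
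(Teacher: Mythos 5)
First, be aware of what you are comparing against: the paper gives \emph{no} internal proof of this proposition --- it is imported as a black box from Cand\`es (thesis, Theorem~11, together with Lemmas~A.1--A.3 of the 2003 paper), and the surrounding appendix only plugs it into the hypercube-style linear lower bound. Your skeleton does match the architecture of the cited construction: a single-scale family built from $\asymp 2^{j(d-1)}$ directions forming a $2^{-j}$-net of $\Sph^{d-1}$ times $\asymp 2^{j}$ offsets, hence $K \asymp 2^{jd}$, with one-dimensional wavelet bookkeeping converting the smoothness $s$ into the exponent $s - d/2$ relating the $R^s_{1,1}$- and $L^2$-norms of a single-scale element. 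One wrinkle you inherit from the printed statement: an $L^2$-normalized scale-$j$ atom has $R^s_{1,1}$-norm $\asymp 2^{j(s-d/2)}$, so rescaling it into the ball $\norm{f}_{R^s_{1,1}(\B_1^d)} \leq C$ gives $L^2$-norm $\asymp 2^{-j(s-d/2)}$, not $2^{j(s-d/2)}$; this negative exponent is also what the downstream computation requires, since $K_\delta \asymp \delta^{-2d/(2s-d)} = 2^{jd}$ forces $\delta \to 0$ as $j \to \infty$. Your claim that the rescaled atoms have $L^2$-norm ``exactly $2^{j(s-d/2)}$'' reproduces the printed exponent but contradicts the derivation in your own preceding sentence.

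The genuine gap is the orthogonalization step. Gram--Schmidt (equivalently, applying $G^{-1/2}$) controlled by ``uniform Riesz constants'' buys you only $\ell^2$-type information: the $g_k$ come out orthogonal with comparable $L^2$-norms, and $K$ automatically equals the dimension of the span, so your stated worry about the cardinality ``collapsing'' is a red herring --- linear independence already fixes $K$. What Riesz bounds do \emph{not} give is membership of each $g_k$ in the ball $\curly{f \st \norm{f}_{R^s_{1,1}(\B_1^d)} \leq C}$. Each orthogonalized element is a combination $g_k = \sum_m (G^{-1/2})_{km}\,\psi_m$, and since the ridgelet norm is an $\ell^1$-type norm on coefficients, keeping $g_k$ in a fixed ball requires the \emph{rows} of $G^{-1/2}$ to have uniformly bounded $\ell^1$-norm. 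That is a quantitative off-diagonal-decay statement about the Gram matrix (e.g., separate the directions by $c\,2^{-j}$ with $c$ large, use vanishing moments to get $\norm{G - I}_{\ell^1 \to \ell^1} < 1$, and expand $G^{-1/2}$ in a Neumann/binomial series), and nothing in an appeal to frame ``snugness'' supplies it: with only $\ell^2$ Riesz bounds, the $R^s_{1,1}$-norms of the orthogonalized elements could inflate by a factor as large as $\sqrt{K} \asymp 2^{jd/2}$, destroying the packing. Cand\`es's cited lemmas sidestep this entirely by constructing elements that are exactly orthogonal from the outset, which is precisely why the paper can quote them without proof; if you insist on your almost-orthogonality route, the $\ell^1$-control of $G^{-1/2}$ is the statement you must actually prove, and it is the only nontrivial one in your sketch.
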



If we choose $\delta = 2^{j(s - d/2)}$, we see that $K \gtrsim_d \delta^{-2d/(2s - d)}$ and so the linear minimax lower bound is $\delta_\varepsilon^2$, where $\delta_\varepsilon = \delta$ solves
\[
    \delta^2 = \varepsilon^2  \delta^{-2d/(2s - d)},
\]
i.e,
\[
    \delta_\varepsilon^2 = (\varepsilon^2)^{(2s - d) / 2s}.
\]
With these results, we will now prove \cref{thm:linear-minimax}.

\begin{proof}[Proof of \cref{thm:linear-minimax}]
The linear minimax lower bound for the model class $\mathscr{H}_C$ corresponds to the case when $s = (d+3)/2$ and so the linear minimax lower bound for this model class (in the continuous white noise setting) will be
\[
    (\varepsilon^2)^{3/(d+3)}
\]
By a standard sampling argument\footnote{See~\cite{brown1996asymptotic} where this argument was first rigorously formalized in the univariate case, and see~\cite{reiss2008asymptotic} where this idea was rigorously formalized in the multivariate case, which applies to our setting.}, we have that the continuous white noise model is asymptotically equivalent to the estimation problem with discrete samples drawn uniformly on $\B_1^d$, where $\varepsilon = \sigma / \sqrt{N}$, for sufficiently large $N$, so we get the linear minimax lower bound of
\[
    \paren{\frac{N}{\sigma^2}}^{-\frac{3}{d+3}}.
\]
\end{proof}


\bibliographystyle{IEEEtranS}
\bibliography{ref}

\begin{IEEEbiographynophoto}{Rahul Parhi}
received the B.S. degree in mathematics and the B.S. degree in computer science from the University of Minnesota--Twin Cities in 2018, and received the M.S. and
Ph.D. degrees in electrical engineering from the University of Wisconsin--Madison in 2019 and 2022, respectively. During his Ph.D., he was supported by an NSF graduate research fellowship. He is currently a postdoctoral researcher with the Biomedical Imaging Group at the \'Ecole Polytechnique F\'ed\'erale de Lausanne in Switzerland. He is primarily interested in applications of functional and harmonic analysis to problems in signal processing and data science. He is a member of the IEEE.
\end{IEEEbiographynophoto}

\begin{IEEEbiographynophoto}{Robert D. Nowak}
received the Ph.D. degree in electrical engineering from the University of Wisconsin-Madison in 1995.  He was a Postdoctoral Fellow at Rice University from 1995-1996, an Assistant Professor at Michigan State University from 1996-1999, and held Assistant and Associate Professor positions at Rice University from 1999-2003.  Since 2003, Nowak has been with the University of Wisconsin-Madison, where he now holds the Keith and Jane Morgan Nosbusch Professorship in Electrical and Computer Engineering. His research focuses on signal processing, machine learning, optimization, and statistics. His work on sparse signal recovery and compressed sensing has received several awards, including the 2014 IEEE W.R.G. Baker Award.  Nowak has held visiting positions at INRIA, Sophia-Antipolis in 2001, and Trinity College, Cambridge in 2010. He has served as an Associate Editor for the IEEE Transactions on Image Processing and the ACM Transactions on Sensor Networks, and as the Secretary of the SIAM Activity Group on Imaging Science. He was General Chair for the 2007 IEEE Statistical Signal Processing workshop and Technical Program Chair for the 2003 IEEE Statistical Signal Processing Workshop, the 2004 IEEE/ACM International Symposium on Information Processing in Sensor Networks, and the inaugural IEEE GlobalSIP Conference in 2013.  He is presently a Section Editor for the SIAM Journal on Mathematics of Data Science and a Senior Editor for the IEEE Journal on Selected Areas in Information Theory. Nowak is a Fellow of the IEEE.
\end{IEEEbiographynophoto}

\end{document}